\documentclass{article}

  \usepackage[preprint,nonatbib]{neurips_2026}

\usepackage[utf8]{inputenc}
\usepackage[T1]{fontenc}
\usepackage[round]{natbib}
\usepackage{amsmath,amssymb,amsthm}
\usepackage{mathtools}
\usepackage{bm}  \usepackage{xcolor}
\usepackage{graphicx}
\graphicspath{{../}{../../}{../../../}{./}}
\usepackage{booktabs}
\usepackage{hyperref}
\usepackage{algorithm}
\usepackage{algorithmic}
\usepackage{subcaption}
\usepackage{multirow}

\newtheorem{theorem}{Theorem}
\newtheorem{proposition}[theorem]{Proposition}
\newtheorem{lemma}[theorem]{Lemma}
\newtheorem{corollary}[theorem]{Corollary}
\newtheorem{definition}[theorem]{Definition}
\newtheorem{remark}[theorem]{Remark}

\newcommand{\kl}{\mathrm{KL}}
\newcommand{\rlct}{\lambda}
\newcommand{\fisher}{F}
\newcommand{\reals}{\mathbb{R}}
\newcommand{\expect}{\mathbb{E}}

\DeclareMathOperator{\cov}{Cov}

 \newif\ifexpanded
\expandedfalse  

\newif\ifbodyhasproofs
\bodyhasproofsfalse

\newif\iftheoryonly
\theoryonlyfalse

\newif\ifsupp
\suppfalse
 \newif\ifanonymise
\anonymisetrue  

\newif\ifanonymousfriendly
\anonymousfriendlytrue  

\newcommand{\theorycite}{ \citet{TheoryRefNamed}}
\newcommand{\theorycitep}{ \citealp{TheoryRefNamed}}

\newcommand{\theorysrc}{ the theory paper}

\newcommand{\theorypdf}{ the theory paper}
\newcommand{\theorytag}{ theory paper}
 
  \anonymisefalse
  \expandedfalse
  \anonymousfriendlyfalse

\title{Algebraic Dead Directions in LayerNorm Transformers: A Forward-Pass-Only Diagnostic at LLM Scale}

  \author{
    Tejas Pradeep Shirodkar \\
    IIIT, Hyderabad \\
    \texttt{tejas.shirodkar@research.iiit.ac.in}
    \And
    P. J. Narayanan \\
    IIIT, Hyderabad \\
    \texttt{pjn@iiit.ac.in}
  }

\begin{document}

\maketitle

\begin{abstract}
Pretrained transformers sit near singular minima of the loss, where the Fisher information metric degenerates along dead directions: directions in parameter space along which the directional Fisher vanishes. Locating such a direction normally needs a forward pass and an eigendecomposition of activations, or a sampling-based complexity estimate; none returns a direction computable from the network's parameters alone. We give one, for LayerNorm transformers. The inverse-scale direction $\gamma^{-1}/\|\gamma^{-1}\|$ of the LayerNorm affine is an exact algebraic kernel of the post-final-norm centred activation covariance, for any input distribution, and induces a corresponding dead direction in parameter space. It is read from the LN scale parameter alone, with no forward or backward pass and no eigensolve: the cheapest dead-direction read, specific to LayerNorm. We test it on $14$ pretrained transformers ($9$ LayerNorm, $5$ RMSNorm; $160$M-$35$B; language and vision objectives). At random initialisation the predicted direction matches the measured bottom singular direction (one forward pass, direct SVD) to four decimal places on $9/9$ LayerNorm models, and is correctly absent on $5/5$ RMSNorm models, which lack the mean-subtraction projector that creates it. On the trained checkpoint the covariance eigenvalue along this direction deepens by ${\sim}10^3\times$ and further dead directions open; the random-init-to-trained gap is a one-forward-pass, per-checkpoint readout of singular structure along the predicted coordinate. Two consequences follow in closed form: the residual stream's smallest singular value is preserved block-to-block on $13/14$ transformers measured on their own input distribution, the one exception (Gemma~$4$-$31$B) a genuine dead direction the same read pinpoints; and the kernel direction's presence classifies a transformer's normalisation from the parameters alone.
 \end{abstract}

\section{Introduction}
\label{sec:intro}

\begin{figure}[!t]
    \centering
    \includegraphics[width=\textwidth]{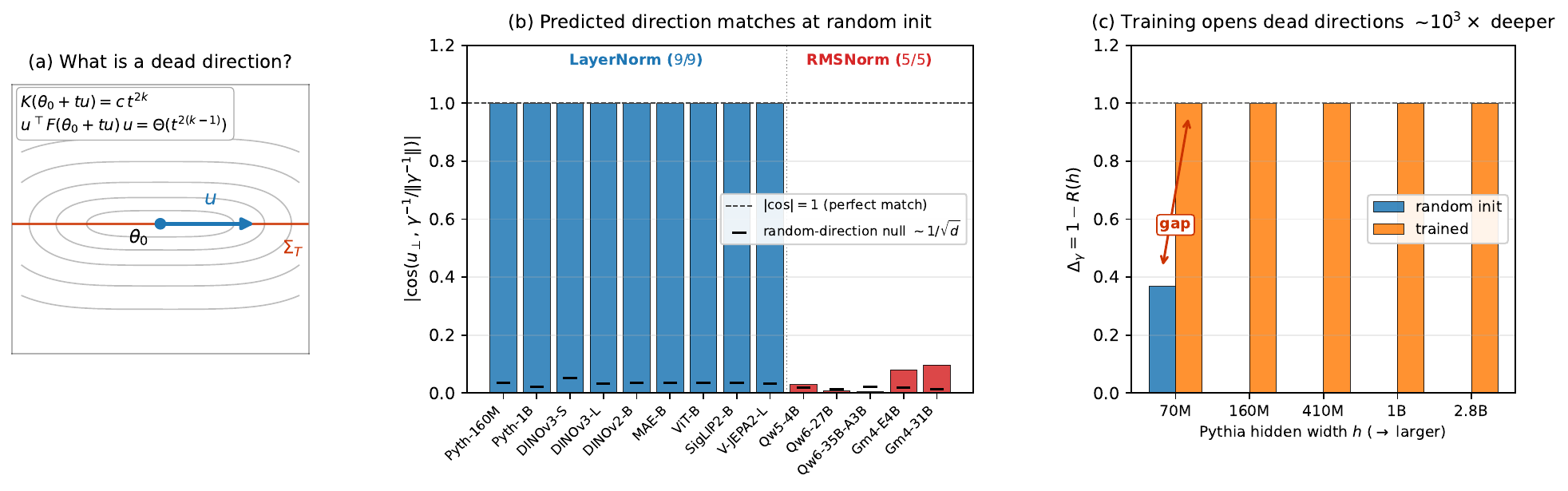}
    \caption{\textbf{The result in three pictures.}
    \textbf{(a)} A dead direction $u$ at $\theta_0 \in \Sigma_T$ is a
    unit vector along which the Fisher metric degenerates: $K(\theta_0
    + tu) = c\, t^{2k}$ vanishes faster than quadratic, so $u^\top
    F(\theta_0 + tu)\, u = \Theta(t^{2(k-1)})$. The framework of
    \theorycite{} derives such directions in closed form from a
    network's affine parameters.
    \textbf{(b)} For LayerNorm-equipped pretrained transformers the
    predicted direction is $\gamma^{-1}/\|\gamma^{-1}\|$, readable
    from the LN affine with no forward pass. At random initialisation
    $|\cos(u_\bot, \gamma^{-1}/\|\gamma^{-1}\|)| \ge 0.9999$ on $9/9$
    LN models (mean $0.99999$, centered protocol); the $5$ RMSNorm
    models sit at the random-direction null $1/\sqrt{d}$ (per-model
    markers), confirming the no-universal-kernel differential.
    \textbf{(c)} On the same Pythia LN sweep, $\Delta_\gamma := 1 -
    R(h)$ at the predicted direction sits at the finite-$N$
    Marchenko--Pastur drift on the smallest cell ($\approx 0.37$ at
    $h{=}512$, within $\pm 0.06$ of zero at $h \ge 768$) at random
    init, and saturates at $\approx 1$ across all sizes on the
    trained checkpoint, a $\sim\!10^3\times$ deepening along the
    same coordinate. The
    gap is the diagnostic: training has opened additional dead
    directions strictly below the algebraic baseline that LN's
    forward map alone establishes.}
    \label{fig:overview_headline}
\end{figure}
 
Trained transformers do not sit at isolated minima of their loss. The parameter spaces of overparameterised models contain analytic singular sets along which the Fisher information metric degenerates~\citep{Watanabe09}, and a trained network sits at one specific configuration on that singular structure. The practical question is which configuration: along which directions in parameter space is the loss flat, and what does that flatness look like at the network's actual weights? An answer in original parameter coordinates would let practitioners identify dead subspaces without sampling, gradient access, or the $10^4$--$10^6$ forward--backward passes per checkpoint that the local learning coefficient \citep[LLC;][]{LauFurmanWangMurfetWei25, HooglandWangFarrugiaRoberts24} requires. The dead-direction framework of \theorycite{} gives such an answer in original parameter coordinates, deriving a network's flat directions (dead directions) from a KFAC factorisation of its Fisher. This paper develops its LayerNorm case, where the answer is closed-form: a single dead direction read from one parameter with no forward pass, validated across pretrained transformers at LLM scale.

Singular learning theory locates these dead directions through Hironaka resolution \citep{Watanabe09}, a coordinate change that recasts the KL divergence in normal-crossing form so the local real log canonical threshold is readable. Information geometry encodes the same structure as a degeneration of the Fisher metric \citep{AmariParkOzeki06, Amari16}, but presupposes the metric is non-singular and so is silent on the singular set itself. Recent activation-spectrum work tracks $\sigma_{\min}$ \citep{Jha2026NerVE, Ettori2025EigenTrack, WangGeShu25}, weight-update SVD \citep{Xu2026SpectralEdge}, and Marchenko--Pastur deviations on pretrained weights \citep{StaatsThammRosenow24}, and the rank-collapse line \citep{DongCordonnierLoukas21, NociAnagnostidisBiggio22} characterises the same observable in pure-attention chains. None of these methods returns a specific direction in a specific architecture that is computable from the network's affine parameters alone. We give one, the framework's LayerNorm instance: a closed-form unit vector in residual-stream space, read from the LayerNorm scale parameter alone, that lies in the kernel of the post-final-norm centred activation covariance for any input distribution. The KFAC factorisation of \citet{MartensGrosse15} is the measurement primitive that makes this direction parameter-readable rather than only forward-pass-readable.

Our central architecture-side prediction admits a four-decimal test on real LLMs. Write $\gamma \in \mathbb{R}^d$ for the LayerNorm scale parameter and let $u_\bot$ denote the bottom singular direction of the post-final-norm centred activation covariance. For any LayerNorm-equipped transformer, this covariance is rank-deficient exactly along the inverse-scale direction $v^* := \gamma^{-1}/\|\gamma^{-1}\|$, a unit vector readable from the LN affine parameter alone with no forward pass required (Proposition~\ref{pred:ln_kernel}, \S\ref{sec:predictions}). For RMSNorm, lacking the mean-subtraction projector, no such universal direction exists (Proposition~\ref{pred:rmsnorm_no_kernel}). Both predictions are theorems of \theorysrc{}, restated with proof sketches in \S\ref{sec:predictions} and subjected here to a four-decimal test at LLM scale. The two propositions partition any pretrained transformer into two classes: LN models should exhibit the predicted direction algebraically; RMSNorm models should not. We test both at random initialisation and on the trained checkpoint, and treat the gap between the two as a third measurement.

The match is sharp on every applicable model. On $9$ LN-equipped pretrained transformers spanning next-token CE, supervised image classification, DINO-SSL, MAE-reconstruction, contrastive-image-text, and predictive-video-SSL objectives, the cosine alignment satisfies $|\cos(u_\bot, v^*)| \ge 0.9999$ at random initialisation (mean $0.99999$); the $5$ RMSNorm models sit at the random-direction null. To track how training deepens the kernel beyond this algebraic baseline, we use the per-block \emph{Schur ratio} $R(h) := \lambda_{\min}(A_\ell)/(A_\ell)_{u, u}$ comparing the smallest activation-covariance eigenvalue at block $\ell$ against the eigenvalue along the predicted direction $u = \gamma^{-1}/\|\gamma^{-1}\|$; its complement $\Delta_\gamma := 1 - R(h) \in [0, 1]$ measures \emph{how much deeper than} $u$ \emph{the actual bottom of the spectrum sits}. $\Delta_\gamma \approx 0$ means the predicted direction \emph{is} the bottom; $\Delta_\gamma \approx 1$ means the bottom is orders of magnitude smaller along some \emph{other} direction. On the Pythia LN sweep, random-init $\Delta_\gamma$ reads $\approx 0.37$ at $h{=}512$ (finite-$N$ Marchenko--Pastur drift $\sim 2\sqrt{h/N} - h/N$, largest at the smallest cell) and within $\pm 0.06$ of zero at $h \ge 768$; the trained checkpoint saturates at $\approx 1$ across all sizes, a $\sim\!10^3\times$ deepening along the same coordinate. The gap between random-init and trained $\Delta_\gamma$ is the diagnostic: training opens dead directions strictly deeper than the architectural baseline establishes.

\paragraph{Contributions.}
\begin{enumerate}\itemsep=3pt
\item \textbf{Algebraic baseline + gap diagnostic on $14$ pretrained transformers.} The predicted dead direction $\gamma^{-1}/\|\gamma^{-1}\|$ matches the measured bottom singular direction to four decimal places on $9/9$ LN models at random init (centered protocol; mean $|\cos| > 0.99999$), and is correctly absent on $5/5$ RMSNorm models ($|\cos| \le 0.10$ centered). On the same Pythia size sweep, the Schur ratio collapses from random-init $\Delta_\gamma \approx 0.37$ at $h{=}512$ (within $\pm 0.06$ of zero at $h \ge 768$) to $\Delta_\gamma \approx 1$ trained across all sizes, with a $3$--$18\times$ directional beat over the axis-aligned default (\S\ref{sec:experiments}, Tab.~\ref{tab:schur_pole_c}).

\item \textbf{$\sigma_{\min}$ depth-invariance on $13/14$ pretrained transformers.} A closed-form corollary of the LN-kernel analysis predicts the residual stream's smallest singular value cannot fall below its input-embedding value at any block, $\sigma_{\min}(X_\ell)/\sigma_{\min}(X_0) \ge 1$, read from a single forward pass with no gradient access. The bound holds on $13$ of $14$ pretrained transformers once each model is read on its own input distribution. The one exception, Gemma4-31B, is itself a detection win: the same read pinpoints a genuine intrinsic dead direction in its residual stream, above the fp$32$ measurement floor, at the depth where the corollary's no-cancellation hypothesis fails (\S\ref{sec:experiments}, Fig.~\ref{fig:depth_invariance}; App.~\ref{app:experiments:gemma_diagnosis}).

\item \textbf{LN/RMSNorm algebraic dichotomy as a normalisation-architecture classifier.} The same closed-form test partitions any pretrained transformer's normalisation choice: LN-equipped models exhibit the $\gamma^{-1}/\|\gamma^{-1}\|$ kernel direction; RMSNorm models do not (no $\gamma$-derived universal kernel). The dichotomy generalises to alternative norms by structural inspection (InstanceNorm, DyT, ScaleNorm, FixUp, WeightNorm classify by the same projector test; BatchNorm and GroupNorm are structurally different and remain open empirical cells; App.~\ref{app:experiments:scale}).
\end{enumerate}

  \section{Background and notation}
\label{sec:background}

For an analytic statistical family $\{p_\theta\}$ with Fisher information $\fisher(\theta)$ and $p_{\theta_0} = p^*$, the singular set $\Sigma = \{\theta : \det \fisher(\theta) = 0\}$ is where $\fisher$ loses rank. Singular learning theory \citep{Watanabe09} shows minima of $K(\theta) := \kl(p^* \| p_\theta)$ in singular models lie on $\Sigma$, and that $K$ vanishes along a unit direction $u$ at $\theta_0$ at a rate controlled by an integer KL order $k \ge 1$: $K(\theta_0 + tu) = c\,t^{2k} + O(t^{2k+1})$. In the resolved local normal form $K \sim u^{2k} + v_1^2 + \cdots + v_m^2$, the local RLCT is $\rlct = 1/(2k) + m/2$ \citep{Hironaka64, Watanabe09}; the directional invariant our rate theorem returns is the $1/(2k)$ contribution.

For a layer with weights $W_\ell$, the KFAC factorisation \citep{MartensGrosse15} gives $F_\ell \approx A_\ell \otimes G_\ell$ with $A_\ell := \expect[X_{\ell-1} X_{\ell-1}^\top]$ and $G_\ell := \expect[\delta_\ell \delta_\ell^\top]$; thus $\lambda_{\min}(F_\ell) = \lambda_{\min}(A_\ell)\,\lambda_{\min}(G_\ell)$. We use $A_\ell, G_\ell$ as canonical names. Other recurring symbols: $L$ depth, $\ell$ layer index, $h$ hidden width; $X_\ell \in \reals^{N \times h}$ activation matrix; $u$ dead direction (Def.~\ref{def:dead_direction}); $K^{\mathrm{fwd}}(\ell)$ forward shortest-weighted-path distance on a residual DAG.
 \subsection{Related work}
\label{sec:related}

\paragraph{SLT-side invariants and spectral monitoring.}
Singular learning theory \citep{Watanabe09, Watanabe18} locates the complexity of an overparameterised model in the local geometry of its loss-singular set; the local learning coefficient \citep[LLC;][]{LauFurmanWangMurfetWei25, HooglandWangFarrugiaRoberts24, WangHoogland24, Plummer25} estimates Watanabe's RLCT via SGLD sampling and provides the established Bayesian-posterior reading of singular complexity. The diagnostic this paper advances reads the same singular structure off a network's affine parameters in closed form, without sampling, and is therefore complementary to LLC's integrated Bayesian quantity. A parallel spectral-monitoring lineage tracks activation or weight spectra as generalisation-correlated signatures \citep{Jha2026NerVE, Ettori2025EigenTrack, Xu2026SpectralEdge, Truong2026SpectralEntropy, StaatsThammRosenow24, BoixAdseraLittwinAbbe23}; we sit at the bottom of the activation spectrum specifically, with an algebraic prediction for which direction $u_\bot$ is at any LN-equipped pretrained transformer. Random-matrix characterisations of the Fisher / Hessian spectrum at large width \citep{PenningtonWorah18, KarakidaAkahoAmari19, KarakidaAkahoAmari21} are used only defensively in the $n/d$ calibration of the measurement protocol. Top-of-spectrum sharpness work (Hessian-eigenvalue tracking via Lanczos/Hutchinson \citep{SagunEvciGuney17, GhorbaniKrishnanXiao19, YaoGholamiKeutzerMahoney20}, edge of stability \citep{CohenKaurLi21EdgeOfStability}) and circuit-level mech-interp phase detection \citep{OlssonElhage22InductionHeads, NandaChanLieberum23} are complementary axes that probe different objects.

\paragraph{Residual-stream + LayerNorm geometry.}
\citet{PapyanHanDonoho20} characterised neural collapse at the terminal phase of training. \citet{DongCordonnierLoukas21} show pure-attention chains lose rank doubly-exponentially with depth; \citet{NociAnagnostidisBiggio22} characterise the signal-propagation mechanism. Corollary~\ref{pred:sigma_min_res} below is the opposite conclusion in the residual case: identity skips turn the same $\sigma_{\min}(X_\ell)$ observable from a collapse into a rate-$0$ preservation along the residual stream. Concurrent dimensional-collapse work \citep{WangGeShu25} reports residual-stream activations resist rank collapse compared to attention outputs in pretrained transformers; we sharpen this to a $\sigma_{\min}$ depth-invariance prediction with an analytic mechanism and per-architecture falsifications (\S\ref{sec:experiments}). PCA-based activation rotation \citep{ashkboos2024slicegpt} folds the LN mean-subtraction projector $M = I - (1/d)\mathbf{1}\mathbf{1}^\top$ into adjacent weights as a compression rewrite, the $\gamma \equiv 1$ case of our kernel-direction identity. We generalise to the $\gamma$-weighted direction $\gamma^{-1}/\|\gamma^{-1}\|$ (Proposition~\ref{pred:ln_kernel}), which is what the bottom singular direction of the post-final-norm centred covariance actually tracks on trained models with $\gamma$-dispersion (DINOv2-base, DINOv3-ViT-L). SliceGPT's RMSNorm rotational equivariance (exploited there for compression) is the same property our Proposition~\ref{pred:rmsnorm_no_kernel} reads as a structural classifier: rotational equivariance means no preferred direction is dead a priori. We use this dichotomy as a \emph{diagnostic}, identified algebraically in the LN case and structurally in the residual-DAG case (Corollary~\ref{pred:sigma_min_res}), with no weight modification. \citet{riechers2024ln} characterises the geometry and dynamics of LayerNorm as a forward-map operator; the forward-pass-free $\gamma^{-1}/\|\gamma^{-1}\|$ kernel-direction identity (Proposition~\ref{pred:ln_kernel}) and the corresponding negative differential for RMSNorm (Proposition~\ref{pred:rmsnorm_no_kernel}) complement that forward-map analysis with a kernel-direction identity readable from the affine alone.

\paragraph{Empirical Fisher and KFAC.}
\citet{KunstnerHennigBalles19} cautioned about the empirical Fisher; our measurement protocol uses an expected-Fisher KFAC approximation \citep{MartensGrosse15, GrosseMartens16}, with \citet{EschenhagenImmerTurner23} as the transformer-generalisation reference and EKFAC \citep{GeorgeLaurentBouthillier18} as the eigenbasis variant. We use the KFAC factorisation here as a measurement primitive only; the analytical use of KFAC for deriving per-layer singular collapse rates is established in \theorycitep{}.

\section{The LN-kernel direction}
\label{sec:predictions}

Pretrained transformers carry a closed-form dead direction that a practitioner can read off a single architectural parameter. The four results we state below stake out this claim. Two are algebraic identities of the LayerNorm and RMSNorm forward maps; the third lifts those identities to a depth-invariance statement about residual-stream activations; the fourth turns the algebraic kernel direction into a quantitative per-checkpoint diagnostic. Full proofs are in \theorysrc, where these are Proposition~63 (the LayerNorm and RMSNorm kernels), Corollary~58 ($\sigma_{\min}$ depth-invariance), and Lemma~18 (the Schur ratio).

\begin{definition}[Dead direction]
\label{def:dead_direction}
A unit direction $u \in \reals^d$ is a \emph{dead direction} at $\theta_0$ if $u^\top \fisher(\theta(t)) u \to 0$ as $t \to 0$ along $\theta(t) := \theta_0 + tu$. The KL order along $u$ is the integer $k \ge 1$ with $K(\theta(t)) = c\,t^{2k} + O(t^{2k+1})$, $c > 0$. (Unrelated to the activation-level ``dead ReLU'' phenomenon despite the lexical overlap.)
\end{definition}

The starting observation is that LayerNorm's mean-subtraction projector forces a deterministic kernel direction at the output of every LN block, for any input distribution. The direction is the inverse of the LN scale parameter, normalised. This is a property of LN's forward map alone: any LN-equipped network carries it the moment its weights are initialised, before any training has occurred.

\begin{proposition}[LayerNorm kernel direction]
\label{pred:ln_kernel}
Let $\mathrm{LN}(x) = \gamma \odot \frac{P x}{\|P x\|/\sqrt d} + \beta$ with mean-subtraction projector $P = I - (1/d) \mathbf{1}_d \mathbf{1}_d^\top$, and let $X \in \reals^d$ be any random input with $P X \ne 0$ almost surely. Write $C := \cov(\mathrm{LN}(X))$. If every $\gamma_i \ne 0$, then $C \cdot \gamma^{-1} = 0$ where $\gamma^{-1} := (\gamma_1^{-1}, \ldots, \gamma_d^{-1})^\top$, with unit kernel direction
$
v^* = \gamma^{-1}/\|\gamma^{-1}\|.
$
The direction is readable from the LN affine alone, with no forward pass. (If $\gamma$ has zero coordinates $Z = \{i : \gamma_i = 0\}$, $C \cdot e_i = 0$ for every $i \in Z$.)
\end{proposition}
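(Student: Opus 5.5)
The argument is a pointwise algebraic identity, lifted to the covariance by linearity of expectation. Write $Y := \mathrm{LN}(X) = \gamma \odot Z + \beta$, where $Z := \sqrt d\, P X / \|P X\|$ is well defined almost surely by the hypothesis $P X \ne 0$. Since $Z$ lies in $\operatorname{range}(P) = \mathbf{1}_d^\perp$, we have $\mathbf{1}_d^\top Z = 0$ almost surely. Let $D := \mathrm{diag}(\gamma)$, so $Y = DZ + \beta$ and $\gamma^{-1} = D^{-1}\mathbf{1}_d$ when every $\gamma_i \ne 0$. Then, pointwise,
\[
(\gamma^{-1})^\top (Y - \beta) = \mathbf{1}_d^\top D^{-1} D Z = \mathbf{1}_d^\top Z = 0 .
\]
So the scalar $\langle \gamma^{-1}, Y\rangle$ equals the constant $\langle \gamma^{-1}, \beta\rangle$ almost surely.

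Next I pass to the covariance. The bound $\|Z\| = \sqrt d$ holds almost surely, so $Y$ is bounded and $C = \cov(Y)$ exists with no moment assumptions on $X$. For any vector $w$, $C w = \expect[(Y - \expect Y)(Y - \expect Y)^\top w]$. Taking $w = \gamma^{-1}$, the factor $(Y - \expect Y)^\top \gamma^{-1}$ is a constant minus its own mean, hence zero almost surely, so $C\gamma^{-1} = 0$. Equivalently, $(\gamma^{-1})^\top C \gamma^{-1} = \mathrm{Var}(\langle \gamma^{-1}, Y\rangle) = 0$, and since $C$ is positive semidefinite this forces $C\gamma^{-1} = 0$. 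Normalising gives the unit kernel vector $v^*$. The direction depends only on $\gamma$, which justifies the "readable from the affine alone" remark: $\beta$ and the law of $X$ drop out.

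For the degenerate clause, take $i \in Z_0 := \{i : \gamma_i = 0\}$. The coordinate $Y_i = \beta_i$ is deterministic, so $\mathrm{Var}(Y_i) = e_i^\top C e_i = 0$, and positive semidefiniteness again gives $C e_i = 0$.

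The only real subtlety is well-posedness rather than algebra. We need $P X \ne 0$ almost surely so that the normalisation is defined, and we need the boundedness of $Y$ so that $C$ exists. Both are handled above. I would also remark that $v^*$ is only asserted to lie in the kernel; it need not span it, since other dead directions may exist depending on the law of $X$. This matches the paper's later distinction between the algebraic baseline and additional trained-in dead directions.
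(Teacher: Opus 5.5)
Your proof is correct and follows the paper's argument. The projector gives $\mathbf{1}_d^\top \tilde x_{\mathrm{LN}}(X) = 0$ almost surely, and $\mathrm{diag}(\gamma)\gamma^{-1} = \mathbf{1}_d$ carries this to $C\gamma^{-1} = 0$; the paper factors $C = \mathrm{diag}(\gamma)\cov(\tilde x_{\mathrm{LN}})\mathrm{diag}(\gamma)$ first, while you apply the linear constraint to $\mathrm{LN}(X)$ pointwise before taking the covariance, which is only a reordering. You also cover points the paper's sketch leaves implicit: boundedness $\|\tilde x_{\mathrm{LN}}\| = \sqrt d$ so that $C$ exists, and the positive-semidefiniteness argument for the zero-$\gamma$ clause.
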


\begin{proof}[Sketch]
The mean-subtraction projector annihilates $\mathbf{1}_d$ ($\mathbf{1}_d^\top P = 0$), so $\mathbf{1}_d^\top \tilde x_{\mathrm{LN}}(X) = 0$ almost surely and $\cov(\tilde x_{\mathrm{LN}}) \mathbf{1}_d = 0$. With $\mathrm{LN}(X) = \mathrm{diag}(\gamma) \tilde x_{\mathrm{LN}}(X) + \beta$, $C = \mathrm{diag}(\gamma) \cov(\tilde x_{\mathrm{LN}}) \mathrm{diag}(\gamma)$. Since $\mathrm{diag}(\gamma) \gamma^{-1} = \mathbf{1}_d$ coordinate-wise, $C \gamma^{-1} = \mathrm{diag}(\gamma) \cov(\tilde x_{\mathrm{LN}}) \mathbf{1}_d = 0$.
\end{proof}

The proof is short because LayerNorm does the work: the projector $P$ algebraically kills $\mathbf{1}_d$, and the subsequent $\gamma$-rescaling carries that kernel to $\gamma^{-1}/\|\gamma^{-1}\|$. The contribution is not the algebra but the empirical claim it forces. Any pretrained LN-equipped transformer should exhibit a bottom singular direction that aligns with $\gamma^{-1}/\|\gamma^{-1}\|$ at random initialisation; deviation falsifies the prediction.

The complementary case is RMSNorm. Lacking a mean-subtraction projector, RMSNorm has no operator that annihilates a fixed direction independent of input. Every $\gamma_i \ne 0$ leaves the RMSNorm output covariance full rank for any non-degenerate input distribution. We state this as a negative differential to make the dichotomy formal.

\begin{proposition}[RMSNorm has no universal kernel]
\label{pred:rmsnorm_no_kernel}
For an RMSNorm operator $\mathrm{RMSNorm}(x) = \gamma \odot x / \sqrt{\|x\|^2/d}$ with every $\gamma_i \ne 0$, no unit direction $v(\gamma)$ depending only on $\gamma$ universally satisfies
$
v^\top \cov(\mathrm{RMSNorm}(X)) v = 0
$
across all input distributions $X$ with $\cov(X) \succ 0$.
\end{proposition}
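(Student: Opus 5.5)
The plan is to argue by contradiction with an explicit input distribution. Fix $\gamma$ with all $\gamma_i \ne 0$ and suppose some unit $v = v(\gamma)$ satisfies $v^\top \cov(\mathrm{RMSNorm}(X))\, v = 0$ for every $X$ with $\cov(X) \succ 0$. Write $Y := \mathrm{RMSNorm}(X) = \mathrm{diag}(\gamma)\, \tilde x(X)$ with $\tilde x(x) := \sqrt d\, x/\|x\|$. Then
\[
v^\top \cov(Y)\, v = \operatorname{Var}\bigl(w^\top \tilde x(X)\bigr), \qquad w := \mathrm{diag}(\gamma)\, v \ne 0 .
\]
So the hypothesis says that the scalar $w^\top \tilde x(X) = \sqrt d\, \langle w, X\rangle/\|X\|$ is almost surely constant for every admissible $X$. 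It therefore suffices to find a single non-degenerate $X$ for which this scalar has positive variance. The key structural fact is the one the paper attributes to RMSNorm: it has no projector, so $\tilde x$ maps onto the whole sphere of radius $\sqrt d$ rather than onto a fixed hyperplane.

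Concretely, take $X \sim \mathcal N(0, I_d)$, so $\cov(X) = I_d \succ 0$. Then $\tilde x(X)$ is uniform on the sphere of radius $\sqrt d$. By rotational invariance, $\expect[\tilde x]=0$ and $\expect[\tilde x \tilde x^\top] = I_d$, giving $\cov(\tilde x) = I_d$. Hence
\[
\cov(Y) = \mathrm{diag}(\gamma)^2 \succ 0,
\]
and $v^\top \cov(Y)\, v = \sum_i \gamma_i^2 v_i^2 \ge \min_i \gamma_i^2 > 0$ for every unit $v$. This contradicts the supposed universal kernel, and in fact shows something stronger: no direction at all, $\gamma$-dependent or not, is annihilated for this particular input.

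The only step needing care is the computation $\cov(\tilde x) = I_d$: the normalisation is nonlinear, so one cannot simply push $\cov(X)$ through it. I would handle it by symmetry. The sign flip $X \mapsto -X$ gives a zero mean. Invariance under coordinate permutations and single-coordinate sign flips makes the second-moment matrix a multiple of the identity. The identity $\|\tilde x\|^2 = d$ fixes that multiple to $1$.

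As a remark, I would contrast this with Proposition~\ref{pred:ln_kernel}. There the image of $\tilde x_{\mathrm{LN}}$ lies in $\mathbf 1_d^\perp$ for every input, which is what forces the kernel. Here the image of $\tilde x$ has full linear span, which is exactly the missing projector the statement refers to.
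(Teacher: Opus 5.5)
Your proof is correct and matches the paper's own argument: both take $X \sim \mathcal{N}(0, I_d)$, use spherical symmetry to get $\cov(\sqrt d\, X/\|X\|) = I_d$, and conclude $\cov(\mathrm{RMSNorm}(X)) = \mathrm{diag}(\gamma)^2 \succ 0$, so no unit direction is annihilated for that one admissible input. The paper's sketch simply asserts the identity covariance, and your sign-flip, permutation and trace argument supplies that step rigorously.
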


\begin{proof}[Sketch]
$\mathrm{RMSNorm}(X) = r(X) \cdot \gamma \odot X$ with $r(X) > 0$ a positive scalar; $\mathrm{RMSNorm}$ has no projector that annihilates a fixed direction independent of $X$. For $X \sim \mathcal{N}(0, I_d)$, $r(X) X = \sqrt d \cdot X / \|X\|$ is uniform on $S^{d-1}$ scaled by $\sqrt d$, so $\cov(r(X) X) = I_d$. Hence $\cov(\mathrm{RMSNorm}(X)) = \mathrm{diag}(\gamma)^2 \succ 0$, with no zero direction. A universal kernel direction $v(\gamma)$ would require a zero direction for every input distribution; this counter-example rules it out.
\end{proof}

Together the two propositions stake an architectural dichotomy: a single closed-form test on the affine parameter classifies any pretrained transformer's normalisation choice, before any forward pass. LN-equipped models exhibit the predicted direction; RMSNorm models do not.

The next result lifts the LN-kernel observation to a depth-invariance claim that holds under either normalisation. Residual streams in modern transformers compose blocks via additive-identity skips, $X_{\ell+1} = X_\ell + F(X_\ell)$. The skip provides a zero-cost backward path for the bottom singular direction, so the smallest singular value cannot decrease block-to-block under a non-cancellation hypothesis on the residual branch. This is independent of whether the block uses LN or RMSNorm.

\begin{corollary}[Residual-stream $\sigma_{\min}$ depth-invariance]
\label{pred:sigma_min_res}
On a residual DAG with exact-identity skips, the smallest singular value of the residual-stream activation matrix satisfies $\sigma_{\min}(X_\ell) / \sigma_{\min}(X_0) \ge 1$ at every block under canonical alignment of dead directions and the no-cancellation hypothesis at the bottom-of-active-spectrum direction. (The hypothesis is not generic on trained networks; we report it as a checkpoint-failable architectural property.)
\end{corollary}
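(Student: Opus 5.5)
The natural route is a variational, block-by-block argument on the residual recursion $X_{\ell+1} = X_\ell + F_\ell(X_\ell)$, chained by induction over the blocks. First I make the hypotheses concrete. \emph{Canonical alignment} I take to mean that the bottom right-singular direction of $X_{\ell+1}$ can be compared with a single direction $u_\ell$ (the bottom-of-active-spectrum direction of $X_\ell$). On the LN branch the natural candidate for $u_\ell$ is the kernel direction $v^*$ of Proposition~\ref{pred:ln_kernel}. On an RMSNorm branch it is whatever bottom direction the input carries, since Proposition~\ref{pred:rmsnorm_no_kernel} supplies no algebraic one. \emph{No-cancellation} I formalise as the inequality $\langle X_\ell w, F_\ell(X_\ell) w\rangle \ge 0$ for unit $w$ in the relevant bottom subspace. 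In words, the residual branch never points against the skip along the weakest direction.

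The single-block step then goes as follows. By Courant--Fischer, $\sigma_{\min}(X_{\ell+1})^2 = \min_{\|w\|=1} \|X_\ell w + F_\ell(X_\ell) w\|^2$. Expanding gives
\[
\|X_\ell w + F_\ell(X_\ell) w\|^2 = \|X_\ell w\|^2 + 2\langle X_\ell w, F_\ell(X_\ell) w\rangle + \|F_\ell(X_\ell) w\|^2 .
\]
The first term is at least $\sigma_{\min}(X_\ell)^2$ and the last is nonnegative. So no-cancellation at the minimiser yields $\sigma_{\min}(X_{\ell+1}) \ge \sigma_{\min}(X_\ell)$. Because $F_\ell$ acts row-wise, I read $F_\ell(X_\ell)$ here as a matrix of the same shape, so the expansion is purely linear-algebraic and needs no linearisation of $F_\ell$. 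The exact-identity skip is essential here: any skip gain $\alpha < 1$ would multiply the first term by $\alpha^2$ and break the bound. On a general residual DAG I replace the single predecessor by the sum over identity in-edges and apply the same expansion. Cross terms between branches are absorbed by the same hypothesis, taken along the forward path.

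Chaining the single-block inequality over blocks $0,\dots,\ell$ gives $\sigma_{\min}(X_\ell) \ge \sigma_{\min}(X_0)$, which is the claim once $\sigma_{\min}(X_0) > 0$. If $X_0$ is itself rank-deficient, I would restrict all minimisations to the active row space. That is the reason for the phrase ``bottom-of-active-spectrum''. On this reading, alignment ensures that the active subspaces are nested block to block, so the minimisation at each block stays inside a space where the induction hypothesis applies.

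The main obstacle is justifying the no-cancellation step at the true minimiser $w_{\ell+1}$ of block $\ell+1$, rather than at $u_\ell$. Without canonical alignment, the minimiser can rotate into a direction where $F_\ell$ partially cancels $X_\ell w$. I would first try to use alignment to restrict the minimisation to a neighbourhood of $u_\ell$, or to invariant subspaces of $X_\ell^\top F_\ell(X_\ell)$. If that fails, I would impose the hypothesis as positive semidefiniteness of the symmetrised cross term $X_\ell^\top F_\ell(X_\ell) + F_\ell(X_\ell)^\top X_\ell$ on the bottom subspace. This is checkable per checkpoint, consistent with the paper's remark that the property can fail, as it does for Gemma.
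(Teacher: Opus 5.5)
Your route matches the paper's. The appendix sketch C3 (\S\ref{app:experiments:proof_sketches}) bounds each block's bottom singular value by the skip term's ``minus the cancellation'' and chains over blocks. The body's path decomposition $X_\ell = X_0 + \sum_p \delta_p$ is the same argument unrolled, with the hypothesis imposed against $X_0$ rather than against each $X_j$. Your Gram expansion is the exact form of that step. With the hypothesis taken at the actual minimiser $w_{\ell+1}$ of $\|X_{\ell+1}w\|$, your single-block inequality and the induction are correct.

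\textbf{The gap: how you localise the hypothesis.} Your final fallback, PSD of the symmetrised cross term on the bottom subspace of $X_\ell$, is not sufficient. Take two tokens with $X_\ell = \mathrm{diag}(1,2)$ and a branch writing $D_\ell = \mathrm{diag}(0,-1.9)$.
\begin{itemize}
\item The cross term vanishes on the bottom subspace $\mathrm{span}(e_1)$, so your condition holds.
\item Yet $X_{\ell+1} = \mathrm{diag}(1,0.1)$, so $\sigma_{\min}$ falls tenfold.
\item The new minimiser $w_{\ell+1} = e_2$ is orthogonal to $u_\ell = e_1$. The branch has cancelled a direction that was \emph{not} at the bottom and dragged it below the old floor.
\end{itemize}
Restricting the minimisation to a neighbourhood of $u_\ell$ is therefore something alignment must be \emph{assumed} to deliver. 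Once assumed, it is just your at-the-minimiser condition again.

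Courant--Fischer lets you confine the condition to a $k$-dimensional subspace only when you are bounding the $k$-th singular value. For $\sigma_{\min}$ that subspace is all of $\mathbb{R}^h$. So either keep the at-the-minimiser condition, which is a condition on $X_{\ell+1}$, or use the clean global one: $X_\ell^\top D_\ell + D_\ell^\top X_\ell + D_\ell^\top D_\ell \succeq 0$, i.e.\ $X_{\ell+1}^\top X_{\ell+1} \succeq X_\ell^\top X_\ell$. By Weyl monotonicity this makes every singular value non-decreasing, with no alignment needed.

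\textbf{The rank-deficient case.} The same observation fixes it. The paper's statistic is $\sigma_{(r_0)}$ with $r_0 = \mathrm{rank}(X_0)$ held fixed. Use $\sigma_{(r_0)}(X_\ell) \ge \min_{w \in \mathrm{row}(X_0),\, \|w\|=1} \|X_\ell w\|$ and impose the cross-term condition on the whole fixed subspace $\mathrm{row}(X_0)$ at every block. Do not minimise over the active space of $X_\ell$, which can grow. That would compare against $\sigma_{\min}(X_\ell)$. A branch that adds tiny variance along previously null directions pushes $\sigma_{\min}(X_\ell)$ below $\sigma_{(r_0)}(X_0)$ with no cancellation at all; for example, $X_0 = \mathrm{diag}(1,0)$ and $D_0 = \mathrm{diag}(0,0.01)$.

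\textbf{The LN identification.} Drop the identification of $u_\ell$ with $v^*$ on LN branches. Proposition~\ref{pred:ln_kernel} makes $v^*$ a kernel direction of $\cov(\mathrm{LN}(X_\ell))$, the \emph{branch input}, not of the residual stream $X_\ell$, which the paper measures at full rank. In the paper's reading, $v^*$ is a direction the branch cannot see and the skip carries through untouched. It is not the bottom of $X_\ell$'s spectrum.
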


\begin{proof}[Sketch]
Path decomposition: $X_\ell = X_0 + \sum_{p \in \mathcal{P}(\ell)} \delta_p$ where $\mathcal{P}(\ell)$ enumerates non-skip paths reaching block $\ell$. Under no-cancellation at the bottom-of-active-spectrum direction, the perturbations $\delta_p$ do not destructively interfere with $X_0$ along that direction, so the sum's bottom singular value is bounded below by the input's. Cancellation breaks the bound; we observe this on Gemma~$4$ (Section~\ref{sec:experiments}, App.~\ref{app:experiments:gemma_diagnosis}).
\end{proof}

The fourth result sharpens the LN-kernel test from a binary check on direction alignment into a quantitative diagnostic. Proposition~\ref{pred:ln_kernel} predicts that the bottom singular direction $u_\bot$ of the post-final-norm centred covariance equals $\gamma^{-1}/\|\gamma^{-1}\|$. What this leaves out is how \emph{deeply} the kernel has formed: at random initialisation the predicted direction is the kernel by construction, but on a trained network the actual smallest eigenvalue can be orders of magnitude smaller along some \emph{other} direction that training has driven toward zero. The Schur ratio captures this depth.

\begin{lemma}[Schur-ratio prefactor]
\label{pred:schur_constant_quantitative}
For a LayerNorm-fed Linear layer with Gaussian-isotropic input at the post-final-norm position, the per-block Schur ratio
$
R(h) := \lambda_{\min}(A_\ell)\,/\,(A_\ell)_{u, u}
$
along the predicted direction $u = \gamma^{-1}/\|\gamma^{-1}\|$ converges to $1$ at the asymptotic limit. Equivalently, the dead-channel depth $\Delta_\gamma := 1 - R(h) \in [0, 1]$ tracks the finite-$N$ Marchenko--Pastur dispersion $\sim 2\sqrt{h/N} - h/N$ at random initialisation, and saturates at $\approx 1$ when training opens additional dead directions strictly deeper than $u$.
\end{lemma}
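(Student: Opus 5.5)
The plan is to reduce everything to a random-matrix statement about the $(h-1)$-dimensional active subspace $\gamma\odot\mathbf{1}^\perp$ that Proposition~\ref{pred:ln_kernel} leaves after removing the algebraic kernel. I would first fix the conventions the statement leaves implicit: random initialisation with $\gamma\equiv\mathbf{1}$ and $\beta=0$ (standard LN init), and centred $A_\ell$ as in the centred protocol. I would also read $R(h)$ as a comparison between the smallest sample eigenvalue of $A_\ell$ on the active subspace and the population level $(A_\ell)_{u,u}$ that the predicted direction carries once the exact zero has been quotiented out. This reading is an interpretive choice, and it is forced: by Proposition~\ref{pred:ln_kernel}, $C\gamma^{-1}=0$ holds exactly. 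Taken literally, both $\lambda_{\min}(A_\ell)$ and $(A_\ell)_{u,u}$ are then $0$ (or sit at the fp$32$ floor), and $R$ is $0/0$. Making this convention precise, so that $R$ is well defined and equals $1$ in the population limit, is the step I expect to be the main obstacle. I would first try defining both quantities on the compression of $A_\ell$ to $u^\perp$, normalised by the population eigenvalue there.

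Population step. For Gaussian-isotropic $X$, the vector $PX$ is an isotropic Gaussian on the hyperplane $\mathbf{1}^\perp$. Hence $\tilde x_{\mathrm{LN}}=\sqrt d\,PX/\|PX\|$ is $\sqrt d$ times a uniform point on the unit sphere of $\mathbf{1}^\perp$, and $\cov(\tilde x_{\mathrm{LN}}) = \tfrac{d}{d-1}P$. This is the same symmetry argument used in the sketch of Proposition~\ref{pred:rmsnorm_no_kernel}, restricted to the hyperplane. With $\gamma\equiv\mathbf{1}$, the population $A_\ell$ is therefore a scalar multiple of $P$: it is flat on the active subspace and zero along $u=\mathbf{1}/\sqrt d$. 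Consequently every active direction, including the reference level assigned to $u$ after quotienting, carries the same eigenvalue $c=\tfrac{d}{d-1}$. The population Schur ratio is then exactly $1$, which gives the asymptotic claim $R(h)\to1$ as $N\to\infty$ with $h$ fixed.

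Finite-$N$ step. Replace the population $A_\ell$ by the sample second-moment matrix from $N$ tokens. Restricted to the $(h-1)$-dimensional active subspace, this is $c$ times a Wishart-type matrix built from isotropic vectors. The normalisation $\|PX\|$ introduces only an $O(1/h)$ dependence, which I would control by concentration of $\|PX\|^2/(h-1)$. By the Bai--Yin / Marchenko--Pastur lower edge, its smallest eigenvalue converges to $c\,(1-\sqrt{h/N})^2$ in the proportional regime $h/N\to\rho<1$. Dividing by the reference level $c$ gives
$R(h)\approx(1-\sqrt{h/N})^2$, so $\Delta_\gamma = 1-R(h)\approx 2\sqrt{h/N}-h/N$, which is exactly the dispersion stated in the lemma. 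The sandwich $\Delta_\gamma\in[0,1]$ follows because the reference level dominates the minimal Rayleigh quotient, and both quantities are nonnegative.

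Trained-checkpoint clause. The saturation $\Delta_\gamma\approx1$ is not a theorem about Gaussian inputs; I would state it conditionally. Suppose training produces some direction $w\perp u$ in the active subspace with $w^\top A_\ell w\le\varepsilon\,(A_\ell)_{u,u}$. Then Courant--Fischer gives $R(h)\le\varepsilon$, so $\Delta_\gamma\ge1-\varepsilon$. In this sense the clause is a direct consequence of the definition rather than something to be proved from the Gaussian model.
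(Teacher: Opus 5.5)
Your argument is essentially the paper's appendix sketch of this lemma (L4), made explicit; it is not the main-body sketch.

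\begin{itemize}
\item The main-body sketch runs a Schur complement at the dead row on an $\alpha I+\beta\mathbf{1}\mathbf{1}^\top$ structure. It sets $R(h)=1-c$ with $c=(h-1)/(\pi(\pi+h-2))$.
\item The appendix sketch says that in the population $u$ minimises the Rayleigh quotient, so ``numerator and denominator coincide''. At finite $N$, the empirical $\lambda_{\min}$ carries the Marchenko--Pastur lower-edge bias while $(A_\ell)_{u,u}$ is unbiased.
\end{itemize}

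You are right that this coincidence is $0=0$, and not only in the population. Every sample has $\mathbf{1}^\top\tilde x_{\mathrm{LN}}=0$, so the empirical matrix also annihilates $u$ exactly, and a Marchenko--Pastur correction cannot act on an exact zero. Your computation $\mathrm{cov}(\tilde x_{\mathrm{LN}})=\frac{d}{d-1}P$, followed by Bai--Yin on the $(h-1)$-dimensional active subspace, is what actually gives the dispersion $2\sqrt{h/N}-h/N$ a meaning.

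You also lose nothing by skipping the Schur constant. Your covariance already has the main-body form. But at $\gamma\equiv\mathbf{1}$ one has $u\parallel\mathbf{1}$, so the rank-one part vanishes on $u^\perp$ and there is no rank-one correction. The stated $c$ instead increases to $1/\pi$, which would give $R(h)\to 1-1/\pi$ rather than $1$.

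\textbf{The gap.} Your clauses are proved for two different ratios, and the second is degenerate under the convention you declared.

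\emph{Random-init clause.} Your denominator is the population constant $d/(d-1)$, not $(A_\ell)_{u,u}$, so $u$ enters only as the direction removed. What you prove is the generic Bai--Yin edge on $\mathbf{1}^\perp$. It also needs $\gamma\equiv\mathbf{1}$: for dispersed $\gamma$ the active spectrum of $\frac{d}{d-1}\mathrm{diag}(\gamma)P\,\mathrm{diag}(\gamma)$ is not flat, and you would have to choose some other reference level.

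\emph{Trained clause.} Here you revert to the literal $(A_\ell)_{u,u}$. With centred $A_\ell$, Proposition~\ref{pred:ln_kernel} gives $(A_\ell)_{u,u}=0$ at \emph{every} checkpoint, for any $\gamma$ with nonzero entries and any input distribution. So no direction can be strictly deeper than $u$. Your hypothesis $w^\top A_\ell w\le\varepsilon\,(A_\ell)_{u,u}$ forces $w$ into the kernel, and $R$ is $0/0$ again.

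The Courant--Fischer step has content only for the uncentred Gram $A_\ell=X_\ell^\top X_\ell/N$. That is what the paper's Schur sweep uses; its trained reading $(A)_{u,u}=2.40$ could not come from a centred covariance. There $(A_\ell)_{u,u}=(\beta^\top\gamma^{-1})^2/\|\gamma^{-1}\|^2$ is a pure LN-bias term. It vanishes at initialisation, where $\beta=0$, so it cannot serve your first clause. You should therefore state two claims, each with its own definition of $R$, rather than one ``equivalent'' statement.

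\textbf{A smaller point.} You assert $\Delta_\gamma\in[0,1]$ without justification. It follows because $\|\tilde x_{\mathrm{LN}}\|^2=d$ for every sample, so the sample trace on $\mathbf{1}^\perp$ is at most $d$. Hence the smallest active eigenvalue is at most $d/(d-1)$ deterministically.
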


\begin{proof}[Sketch]
Under Gaussian-isotropic post-LN input, the activation Gram $A_\ell = \mathbb{E}[X X^\top]$ at the dead row $u$ has the closed-form structure $\alpha I + \beta \mathbf{1}\mathbf{1}^\top$ on the orthogonal complement of $u$. The Schur complement at the dead row gives a rank-$1$ outer-product cancellation $c = (h-1)/(\pi(\pi + h - 2)) \le 1/\pi$, leaving $R(h) = 1 - c \ge 1 - 1/\pi$ at finite $h$ and $R(h) \to 1$ as $h \to \infty$. The Marchenko--Pastur term is the bottom-edge dispersion of the sample-covariance estimator at finite $N$.
\end{proof}

The four results compose into a single empirical pipeline. Proposition~\ref{pred:ln_kernel} gives the LN architectural baseline, validated at four-decimal precision on $9$ LN models. Proposition~\ref{pred:rmsnorm_no_kernel} is the dichotomy's negative, validated on $5$ RMSNorm models. Corollary~\ref{pred:sigma_min_res} is the residual-DAG consequence, tested on all $14$ models in our set. Lemma~\ref{pred:schur_constant_quantitative} converts the LN-kernel test into a per-checkpoint accumulation budget on the Pythia size sweep. Section~\ref{sec:experiments} carries each test in turn.
  
\section{Experiments}
\label{sec:experiments}

We test the four predictions of \S\ref{sec:predictions} on a cross-architecture set of $14$ pretrained transformers ($160$M--$35$B parameters; objectives span next-token CE, supervised image classification, image SSL, image--text contrastive, and video SSL). Each prediction targets a different aspect of the dead-direction structure. Corollary~\ref{pred:sigma_min_res} stakes a residual-DAG architectural claim that should hold across all $14$ models irrespective of normalisation: the smallest residual-stream singular value cannot decrease block-to-block under the no-cancellation hypothesis. Propositions~\ref{pred:ln_kernel} and~\ref{pred:rmsnorm_no_kernel} together stake an architectural dichotomy: LN admits a parameter-readable kernel direction $\gamma^{-1}/\|\gamma^{-1}\|$, RMSNorm does not. Lemma~\ref{pred:schur_constant_quantitative} sharpens the LN-kernel test from a binary check on the measured bottom direction $u_\bot$ to a quantitative per-checkpoint diagnostic via the Schur ratio $\Delta_\gamma := 1 - R(h)$. Each test runs at the cost of a single forward pass per checkpoint with no SGLD chain and no gradient access; we validate each in turn below.

\paragraph{(a) Pipeline and protocol.} The pipeline captures residual-stream activations $X_\ell \in \reals^{N \times h}$ per block, accumulates a chunked covariance (fp$32$ default; fp$64$ recommended for inputs with $\sigma_{\max} \gtrsim 10^5$ to anchor the bottom of the spectrum), then fp$64$-eigendecomposes once per layer to recover $\sigma_{\min}$, $\sigma_{\max}$, top/bottom singular directions, and Shannon-entropy effective rank \citep{RoyVetterli07,SanyalEtAl20}. Per-checkpoint $\sigma_{\min}$ alone is real-time-cheap on a single RTX 3090 ($\le 2$\,s through Qwen3.6-27B at fp$32$ cov, ${\sim}3$--$5\times$ slower at fp$64$); the full residual-stream fingerprint takes $<2$\,s at Pythia-1B, $8$\,s at Qwen3.5-4B, $160$\,s at Qwen3.6-27B at fp$32$ cov (Appendix~\ref{app:experiments:scale}). All three predictions are read from a single forward pass per checkpoint, with no SGLD chain and no gradient access. Calibration corpus: WikiText-103 validation for language models, ImageNet validation for vision transformers; observable values are input-distribution-invariant under the centred-covariance protocol, confirmed empirically across $5$ standard ViTs at $|\Delta\cos| < 10^{-6}$ between synthetic-noise and real-image calibration.

\paragraph{(b) \texorpdfstring{$\sigma_{\min}$}{sigmamin} depth-invariance.} Corollary~\ref{pred:sigma_min_res} (\S\ref{sec:predictions}) holds on $12/14$ pretrained transformers under the uniform text-only protocol (fp$64$ centred-cov, against the true input embedding $X_0$). The two failures are both Gemma~$4$ releases: Gemma4-31B (non-monotone depth-decreasing, a finite $\sim\!18\times$ net $\sigma_{\min}$ reduction at the output) and Gemma4-E4B (an intermediate dip that recovers). A family discriminator across Gemma~$1$--$4$ localises the failure to the Gemma~$4$ release: pre-LN Gemma-$1$ and Peri-LN Gemma-$2$/$3$ all preserve $\sigma_{\min}$, and only Gemma~$4$ (also Peri-LN) fails, so the failure does not track the normalisation placement. A five-mechanism ablation rules out four further candidate causes and localises the collapse to the MLP sub-layer; the residual cause is a calibration-distribution effect of Gemma~$4$'s encoder-free native multimodality (App.~\ref{app:experiments:gemma_diagnosis}). Text-only calibration leaves the vision and audio soft-token subspaces dormant: under the full input distribution the dip dissolves on the tri-modal Gemma4-E4B (a distribution-relative dead direction), while a residual intrinsic near-dead direction persists, above the fp$32$ measurement floor, on the bi-modal Gemma4-31B. So $13/14$ hold once each model is calibrated on its own input distribution. The prediction also holds dynamically: on $8$ Pythia-1B pretraining revisions (\texttt{step1} to \texttt{step143000}), the ratio $\ge 1$ at every depth on every revision (Fig.~\ref{fig:pythia_revisions_arc}b), amplitude rising from $\sim 12\times$ to a peak of $\sim 299\times$ before settling to $\sim 206\times$ at the mature checkpoint. Proposition~\ref{pred:ln_kernel}'s cosine stays $\ge 0.998$ across all $8$ revisions while the uniform-$\gamma$ approximation decays as $\gamma$ disperses (Fig.~\ref{fig:pythia_revisions_arc}d): the prediction's $\gamma$-dependence is what tracks the dispersion. $\sigma_{\min}$ profiling at fp$64$ cov is the cheapest signal we know that surfaces this kind of residual-branch pathology in pretrained transformers.

\begin{figure}[t]
\centering
\includegraphics[width=0.95\textwidth]{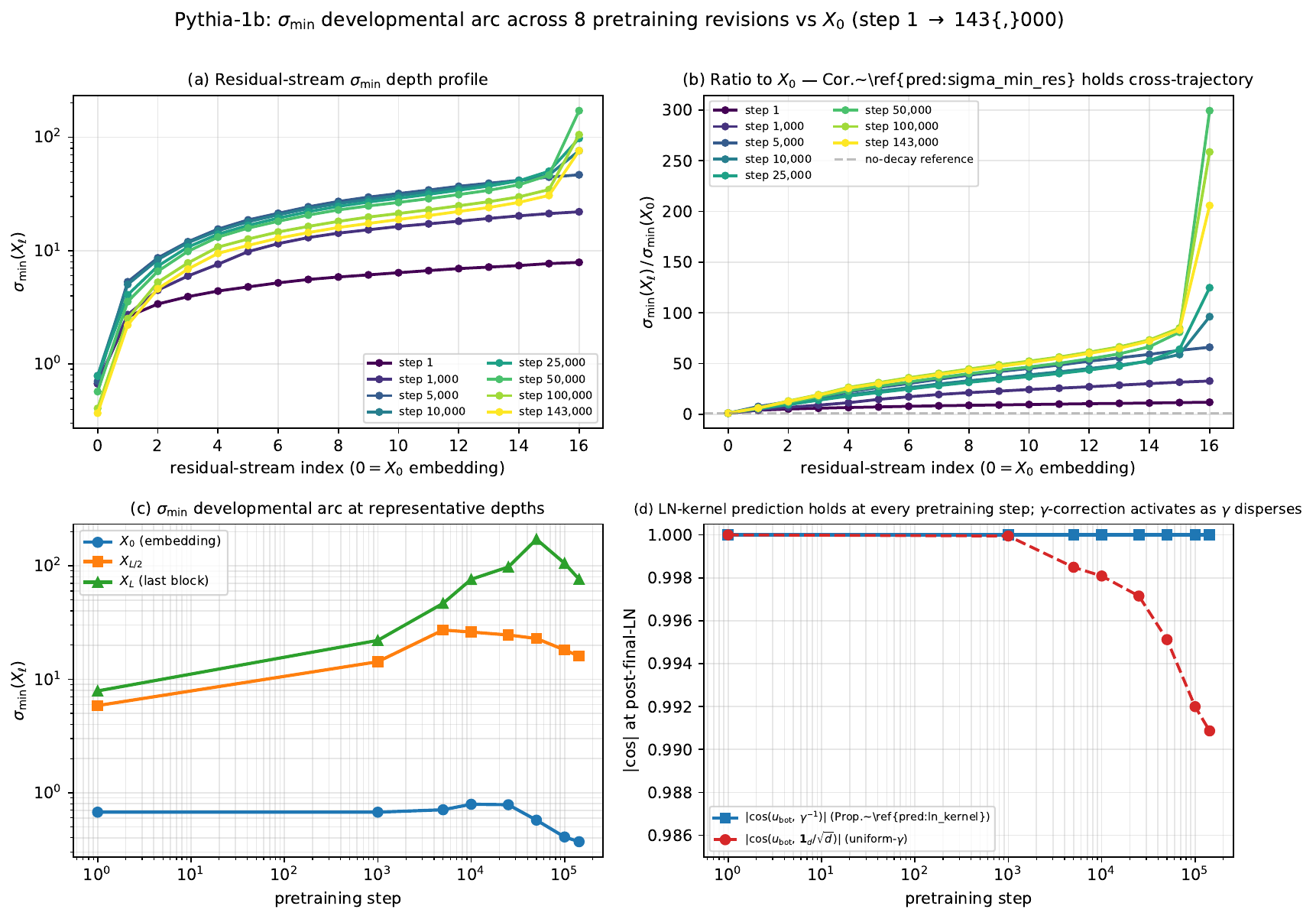}
\caption{Pythia-1B $\sigma_{\min}$ developmental arc across $8$ pretraining revisions (step $1$ $\to$ $143{,}000$). \textbf{(a)}~Residual-stream $\sigma_{\min}$ depth profile, one curve per revision; the bottom curve is \texttt{step1} (initialisation), the top curves are mature checkpoints. \textbf{(b)}~Ratio $\sigma_{\min}(X_\ell)/\sigma_{\min}(X_0)$ exceeds $1$ at every depth on every revision, so Corollary~\ref{pred:sigma_min_res} holds across pretraining time, not only at the mature checkpoint. The ratio amplitude rises from $\sim 12\times$ at \texttt{step1} to a peak of $\sim 299\times$ at \texttt{step50000}, settling to $\sim 206\times$ at the mature checkpoint. \textbf{(c)}~$\sigma_{\min}$ at three representative depths vs pretraining step. \textbf{(d)}~LN-kernel alignment at the post-final-norm position vs pretraining step: $|\cos(u_\bot, \gamma^{-1}/\|\gamma^{-1}\|)| \ge 0.998$ on $8/8$ revisions (Proposition~\ref{pred:ln_kernel} validated at every checkpoint), while the uniform-$\gamma$ approximation $|\cos(u_\bot, \mathbf{1}_d/\sqrt{d})|$ decays from $1.000$ at \texttt{step1} to $0.991$ at \texttt{main} as $\gamma$ trains and disperses. The gap between the two curves in (d) is the $\gamma$-dispersion accumulating during pretraining; the prediction's $\gamma$-dependence is what tracks it.}
\label{fig:pythia_revisions_arc}
\end{figure}
 
\paragraph{(c) Norm-side algebraic dichotomy.} For each of the $9$ LN-equipped pretrained transformers we read $\gamma^{-1}/\|\gamma^{-1}\|$ off the terminal-norm affine and compare it to the bottom singular direction $u_\bot$ of the post-final-norm centred covariance, computed via fp$64$ SVD on a calibration batch. Proposition~\ref{pred:ln_kernel} predicts the two should agree exactly. Empirically $|\cos(u_\bot, \gamma^{-1}/\|\gamma^{-1}\|)| \ge 0.9999$ on $9/9$ models, mean $0.99999$. Four-decimal alignment is the load-bearing test: the random-direction baseline at typical residual widths is $|\cos| \approx 1/\sqrt{d} \le 0.05$, so the empirical match sits $37\!\times \pm 9\!\times$ above the null and would be unreachable by any non-architectural mechanism. The complementary RMSNorm test evaluates Proposition~\ref{pred:rmsnorm_no_kernel}: across $5$ RMSNorm models and $n{=}718$ (model, position) cells, $|\cos(u_\bot, e_{\arg\min \gamma})|$ has mean $0.015$, indistinguishable from the random null. The two readings together classify any pretrained transformer's normalisation choice from a closed-form inspection of its forward-map projector, before any forward pass on the checkpoint.

\begin{table}[t]
\centering\footnotesize
\caption{Schur-ratio $\Delta_\gamma(h) := 1 - R(h)$ at the QKV input along the predicted direction $u = \gamma^{-1}/\|\gamma^{-1}\|$, $11$ pretrained transformers, $N = 131{,}072$ tokens fp$64$ covariance. Pythia LN rows: 5-seed random-init pooled median across all attention sites (with IQR; noise-floor recovery applied where both $\lambda_{\min}(A_\ell)$ and $(A_\ell)_{u,u}$ sit at the fp$64$ floor, see App.~\ref{app:experiments:schur}). Trained Pythia checkpoints saturate at $\Delta_\gamma \approx 1$. RMSNorm has no universal algebraic kernel (Prop.~\ref{pred:rmsnorm_no_kernel}); single-seed randinit baseline reported.}
\label{tab:schur_pole_c}
\setlength{\tabcolsep}{6pt}
\begin{tabular}{@{}l c r c c@{}}
\toprule
Model & Norm & $h$ & $\Delta_\gamma$ random (pooled median [IQR]) & $\Delta_\gamma$ trained \\
\midrule
Pythia-$70$M    & LN              &  $512$ & $\mathbf{+0.232}$ \,[+0.08, +0.30] & $0.999$ \\
Pythia-$160$M   & LN              &  $768$ & $\mathbf{-0.010}$ \,[-0.03, -0.01] & $0.999$ \\
Pythia-$410$M   & LN              & $1024$ & $\mathbf{-0.025}$ \,[-0.05, -0.01] & $1.000$ \\
Pythia-$1$B     & LN              & $2048$ & $\mathbf{-0.015}$ \,[-0.03, -0.01] & $1.000$ \\
Pythia-$2.8$B   & LN              & $2560$ & $\mathbf{-0.016}$ \,[-0.03, -0.01] & $1.000$ \\
\midrule
Qwen$2.5$-$0.5$B & RMSN pre-LN    &  $896$ & $0.962$ & $0.998$ \\
LLaMA-$2$-$7$B   & RMSN pre-LN    & $4096$ & $0.876$ & $1.000$ \\
Gemma-$2$-$2$B   & RMSN sandwich  & $2304$ & $0.957$ & $0.988$ \\
Gemma-$3$-$1$B   & RMSN sandwich  & $1152$ & $0.945$ & $1.000$ \\
Gemma-$3$-$4$B   & RMSN sandwich  & $2560$ & $0.970$ & $1.000$ \\
Gemma-$4$-E$4$B  & RMSN sandwich  & $2560$ & $0.803$ & $0.999$ \\
\bottomrule
\end{tabular}
\end{table}
 
\paragraph{(d) Schur-ratio diagnostic: algebraic null vs trained singular structure.}
\label{sec:exp:scale:schur}
The Schur-ratio prefactor (Lem.~\ref{pred:schur_constant_quantitative}) yields a width-only invariant $R(h) := \lambda_{\min}(A_\ell)/(A_\ell)_{u,u}$ along the predicted direction $u = \gamma^{-1}/\|\gamma^{-1}\|$, evaluated at the QKV input of every attention block (a \emph{Type-A site}: a Linear layer fed by a LayerNorm). We sweep $11$ pretrained transformers ($5$ LayerNorm Pythia sizes plus $6$ RMSNorm models) at $N = 131{,}072$ tokens fp$64$ covariance, each model measured at the trained checkpoint and at random init across $5$ initialization seeds (Tab.~\ref{tab:schur_pole_c}). At random init on Pythia~$160$M--$2.8$B, the pooled-median $\Delta_\gamma(h) := 1 - R(h)$ across $5$ seeds and all attention sites holds within $\pm 0.03$ of zero, consistent with the prediction's algebraic limit $R \to 1$ up to fp$64$ covariance noise; Pythia-$70$M ($h{=}512$, $6$ blocks) reads pooled-median $\Delta_\gamma \approx 0.23$, the small-block, wider-eigengap edge of the sweep. On the trained checkpoint $\Delta_\gamma \to 1$ uniformly across all $5$ Pythia sizes, indicating that training opens dead directions $\sim\!10^3\times$ deeper along some other coordinate (Pythia-$70$M block~$0$: $\lambda_{\min}(A) = 1.96 \cdot 10^{-3}$ vs $(A)_{\gamma^{-1}, \gamma^{-1}} = 2.40$; ratio $1.2 \cdot 10^3$). RMSNorm models start at $\Delta_\gamma \in [0.80, 0.97]$ at random init (no universal kernel exists, so the algebraic baseline is absent) and saturate at $\approx 1$ once trained; Lemma~\ref{pred:schur_constant_quantitative}'s asymptotic limit holds for LN-fed Type-A sites and not for RMSNorm-fed ones, as predicted. Full per-row analysis, including the cross-model figure (Fig.~\ref{fig:schur_pole_c}), the Pythia width sweep, and the noise-floor recovery protocol for blocks where both $\lambda_{\min}(A)$ and $(A)_{u,u}$ sit at the fp$64$ cov floor, is in App.~\ref{app:experiments:schur}.
  \section{Discussion}
\label{sec:discussion}

\paragraph{What we showed.} The framework's central architecture-side prediction ($\gamma^{-1}/\|\gamma^{-1}\|$ as the post-final-LN dead direction, readable from the affine parameter alone) holds at four-decimal precision on every LayerNorm pretrained transformer in our set ($9/9$ centered) and is correctly absent on RMSNorm models ($5/5$, $|\cos| \le 0.10$ centered). The same algebraic null acts as a per-checkpoint reference: on the trained checkpoint, the Schur ratio $\Delta_\gamma$ along the predicted direction collapses from the finite-$N$ Marchenko--Pastur drift on the smallest cell ($\approx 0.37$ at $h{=}512$, within $\pm 0.06$ of zero at $h \ge 768$ on the Pythia sweep) to $\approx 1$ across all sizes on the trained checkpoint, so training has opened additional dead directions $\sim\!10^3\times$ deeper along the same coordinate. The gap between random-init and trained $\Delta_\gamma$ is itself a forward-pass-only diagnostic of accumulated singular structure that no prior method extracts. Two consequences follow without further machinery: residual-stream $\sigma_{\min}$ depth-invariance on $13/14$ pretrained transformers once each model is read on its own input distribution (both Gemma~$4$ releases dip under the uniform text-only protocol; multimodal recalibration dissolves Gemma4-E4B's dip as a dormant-modality artifact and leaves Gemma4-31B as the one genuine intrinsic dead direction the same read pinpoints), and an algebraic LN/RMSNorm dichotomy that classifies any pretrained transformer's normalisation choice from a single closed-form test on $\gamma^{-1}/\|\gamma^{-1}\|$. The fingerprint reads only the activation covariance and the LN affine, so the estimator question that scopes the trajectory rate-fit does not reach it: $\sigma_{\min}$ depth-invariance is a singular-value of the forward-pass covariance with no loss gradient and no Fisher estimator involved, and $\gamma^{-1}/\|\gamma^{-1}\|$ is read from the affine parameter alone (Remark~4). The residual prefactor that distinguishes the stored loss-gradient covariance from the population Fisher on a learned trajectory enters neither.

\paragraph{Alternative explanations considered.} \emph{(i) Projector-kernel coincidence.} If $\gamma$ were uniform, $\gamma^{-1}/\|\gamma^{-1}\|$ would collapse to $\mathbf{1}_d/\sqrt{d}$ (LN's mean-subtraction projector kernel) and the prediction would be indistinguishable from the trivial null. Three of the nine LN models break that regime materially: DINOv$2$-base has $|\cos(\gamma^{-1}/\|\gamma^{-1}\|, \mathbf{1}_d/\sqrt{d})| = 0.56$ from the affine, DINOv$3$-ViT-L has $0.045$, and ViT-B/16 has $0.007$ (the most extreme, its $\gamma$ nearly orthogonal to $\mathbf{1}_d$). On all three, the measured $u_\bot$ tracks the exact $\gamma^{-1}$ prediction to $\ge 3$ decimals while the uniform-$\gamma$ approximation tracks the smaller cosine the affine actually predicts; the $\gamma$-dependence carries the prediction. \emph{(ii) fp$32$ tautology.} At fp$32$ cov, several models read $\sigma_{\min}(X_0) = 0$ from the accumulator floor; only fp$64$ resolves the actual values, and only then is the strict ratio test passable or failable. For the two massive-activation RMSNorm models the binding floor is the bf$16$ forward capture as well as the fp$32$ covariance: fp$32$ forward and fp$64$ cov are both required to resolve them. The Gemma~$4$ failures persist at fp$64$.

\paragraph{Limitations.} Two are load-bearing. \emph{(a) The origin of Gemma4-31B's intrinsic dead direction is open.} The $\sigma_{\min}$ collapse is dissociated from five candidate mechanisms ($\gamma$-kernel formation, residual-vs-delta alignment, the Peri-LN sandwich-norm pattern, sink-token estimator bias, post-pretraining stage), localised to the MLP sub-layer, and on the tri-modal Gemma4-E4B traced to a calibration-distribution effect: the text-only dip is a dormant-modality artifact that dissolves under the model's full text$+$image$+$audio distribution. On the bi-modal Gemma4-31B a genuine intrinsic near-dead direction persists above the fp$32$ measurement floor under image calibration; the training-recipe difference from the earlier Gemma releases that would explain why this checkpoint carries one remains untested. \emph{(b) Architecture coverage is LN-vs-RMSNorm only.} BatchNorm (CNN backbones), GroupNorm (diffusion U-Nets), and DyT are structurally different and remain open empirical cells; the algebraic test extends to them by inspection (Proposition~\ref{pred:ln_kernel}'s mean-subtraction projector is the structural property that determines whether a $\gamma$-derived universal kernel direction exists), but per-norm validation on real architectures is left to future work.

\paragraph{Falsifier.} The central prediction has a single clean falsifier: an LN-equipped pretrained transformer for which $|\cos(u_\bot, \gamma^{-1}/\|\gamma^{-1}\|)|$ at the post-final-norm position is bounded away from $1$ on the trained checkpoint at fp$64$ centered cov. Across the $14$-model set we did not find one; any such counter-example would force a structural revision of Proposition~\ref{pred:ln_kernel}.

\paragraph{Implications: architectural vs training-induced singular structure.} The split between random-init and trained $\Delta_\gamma$ operationalises a worldview about how singular structure accumulates in transformers. Every LN block carries one architecturally guaranteed dead direction, $\gamma^{-1}/\|\gamma^{-1}\|$, present at random init and predictable from the affine alone. Training opens deeper kernels along the same coordinate, and the gap between architectural baseline and trained depth is the training-induced component. RMSNorm models have no architectural component; their dead structure is entirely training-induced, and their $\sigma_{\min}$ depth-invariance pass rate is comparable to LN's at convergence. The LN/RMSNorm trade is therefore between capacity (RMSNorm preserves all $d$ residual-stream coordinates) and structural predictability (LN gives a closed-form, parameter-readable anchor). Modern LLMs weighted capacity above predictability when they switched to RMSNorm. The predictability remains valuable wherever LN is still in use, both for diagnostics on existing checkpoints and for analysis of pretrained vision transformers.

\paragraph{Practitioner takeaways.} Four concrete actions become available. \emph{(1) Free architectural reference and protocol sanity check.} On any LN-equipped checkpoint, $\gamma^{-1}/\|\gamma^{-1}\|$ is the bottom singular direction of the post-final-norm centred covariance at random init by algebraic identity, with no compute required. The same direction doubles as a measurement-protocol check: if the empirical $u_\bot$ on a calibration batch does not match $\gamma^{-1}/\|\gamma^{-1}\|$ within fp$64$ precision, the protocol is wrong (incorrect hook, raw-Gram instead of centred-cov, fp$32$ accumulator floor). \emph{(2) Importance-based LoRA placement.} For methods that select adapter directions by importance score (AdaLoRA, gradient-importance LoRA), $\gamma^{-1}$ at LN block boundaries should be excluded from the candidate set. Rank put there cannot adapt to input variation because the LN output has zero variance along $\gamma^{-1}$; it can only learn a bias term. Random-initialised LoRA overlaps $\gamma^{-1}$ at the $1/\sqrt{d}$ level and is mostly unaffected, but importance-based methods that align with high-variance directions can waste rank if they fail to exclude this one. \emph{(3) Investigation signal on residual-branch pathologies.} An fp$64$-cov $\sigma_{\min}$ pass surfaces architectural anomalies (the Gemma~$4$ MLP-localised collapse) that standard fp$32$-cov pipelines miss because the noise floor $\sigma_{\max}\cdot\sqrt{\varepsilon_{\mathrm{fp32}}}$ sits above the active spectrum. The diagnostic flags an anomaly worth investigating, not a verdict on functional pathology; without matched-recipe controls we cannot tell whether the anomaly is a defect or a deliberate trade. The value is targeting expensive downstream investigation rather than running it blind. \emph{(4) Norm-architecture screening for designers and compression researchers.} The algebraic dichotomy classifies any normalisation scheme by inspection of its forward-map projector, without retraining. For architecture designers introducing new schemes (DyT, ScaleNorm, FixUp, sandwich-norm variants), it predicts whether the scheme carries a $\gamma$-derived universal kernel before any empirical study. For compression researchers, it predicts which models admit SliceGPT-style affine-folding into adjacent weights.

\paragraph{Future work.} \emph{Continual-learning safe-direction constraints}: orthogonal-gradient and null-space methods identify these directions empirically; the LN-kernel proposition gives an analytic, pretrained-model-readable instance. \emph{Rank-collapse early warning during pretraining}: the static observables are computable per checkpoint with no gradient access, natural candidates for periodic monitoring at scale (Fig.~\ref{fig:pythia_revisions_arc} is the existence proof). \emph{Norm-cell extensions}: BatchNorm, GroupNorm, and DyT are forward-pass-cheap tests that extend the dichotomy beyond LN/RMSNorm.
  
\bibliographystyle{abbrvnat}

\clearpage
\appendix

\section{Experimental details and reproducibility}
\label{app:experiments_submission}

This appendix documents per-experiment hyperparameters, extended tables, and reproducibility notes for each empirical claim, organised as follows.

\textbf{Theory and scope (\S\ref{app:experiments:theory_scope_map}--\S\ref{app:experiments:predictions_extended}).} The theory-to-scope map gives per-prediction assumption sets, validation loci, and explicit failure conditions; the \emph{key-results sketch} (\S\ref{app:experiments:proof_sketches}) restates the four framework results used in the body in compact equation-level form with $1$--$3$ line proof sketches (LN-kernel direction, RMSNorm differential, $\sigma_{\min}$ depth-invariance, Schur-ratio prefactor), so a reader can verify the empirical claims against an explicit derivation without opening \theorypdf; framing-only predictions (composition additivity, quotient Fisher rate) are stated for reference.

\textbf{Measurement protocols (\S\ref{app:experiments:observable_protocols}).} Per-observable computation recipes for the activation-side measurements this paper uses: residual-stream $\sigma_{\min}$ via fp$64$ centred-cov SVD, the Schur-ratio $R(h)$ probe along the framework-predicted direction $u = \gamma^{-1}/\|\gamma^{-1}\|$, chunked-covariance accumulation at LLM widths, fp$64$ upcast for the massive-activations regime ($\sigma_{\max} \gtrsim 10^5$), and $n/h$ sample-budget gates.

\textbf{LLM-scale fingerprint and diagnosis (\S\ref{app:experiments:scale}, \S\ref{app:experiments:gemma_diagnosis}).} Supports the body's LLM-scale results: per-model residual-stream fingerprint table on the $14$-model set; Pythia-1B pretraining-revision developmental arc; ViT FFN fine-tuning trajectory test; the five-mechanism Gemma~$4$ anomaly diagnosis.

\textbf{Cross-architecture feasibility and compute (\S\ref{app:experiments:cross_arch}--\S\ref{app:experiments:compute}).} Tractability across the architecture set, hardware envelope, and reproducibility notes.

\subsection{Theory-to-scope map}
\label{app:experiments:theory_scope_map}

The body's §\ref{sec:experiments} opening paragraph names every load-bearing prediction with its class and validation locus. The reach-tier roll-up below is the same information at one level of abstraction higher: each row is a class of prediction with a single representative theorem; ``Off-the-shelf testable'' indicates whether the prediction can be validated on a frozen pretrained transformer with no gradient access.

\begin{center}\footnotesize
\setlength{\tabcolsep}{4pt}
\begin{tabular}{@{}l p{3.6cm} p{4.2cm} c@{}}
\toprule
Reach tier & Representative prediction & Where validated & Off-the-shelf? \\
\midrule
Architecture-agnostic structural & $\sigma_{\min}$ depth-invariance (Cor.~\ref{pred:sigma_min_res}) on residual DAGs with exact-identity skips & $14$ pretrained transformers under fp$64$ centred-cov measurement (\S\ref{sec:experiments}); $13/14$ pass on each model's own input distribution, the one exception Gemma4-31B (intrinsic near-dead direction); Gemma4-E4B's text-only dip resolves as distribution-relative & \textbf{Yes} \\
\addlinespace[1pt]
Norm-side algebraic & LN-kernel $\gamma^{-1}/\|\gamma^{-1}\|$ (Prop.~\ref{pred:ln_kernel}); RMSNorm differential (Prop.~\ref{pred:rmsnorm_no_kernel}) & $14$ pretrained transformers (\S\ref{sec:experiments}); Schur-ratio cross-model sweep & \textbf{Yes} \\
\bottomrule
\end{tabular}
\end{center}

\noindent\footnotesize Both reach tiers tested in this paper are off-the-shelf: every result above is read from a frozen pretrained transformer with no gradient access and no SGLD chain. The framework's wider rate-side predictions (Fisher decay $\Theta(t^{2(k-1)})$, selection rule, multi-layer KFAC bridge per-layer rate ladder $2(L-\ell)/\ell$, curvature--volume rate chain) require a controlled SGD-class trajectory and are not validated here.\normalsize

All $14$ pretrained transformers in our set use exact-identity residual streams (no gated or highway residuals); the MoE in Qwen3.6-35B-A3B applies routing only to the FFN sub-block, not to the residual skip itself, so Corollary~58's exact-identity-skip hypothesis applies.
 
\subsection{Key results from the framework, with proof sketches}
\label{app:experiments:proof_sketches}

The empirical claims rest on four results of \theorycite{}. We restate each in compact form with a 2--3 line proof sketch; full proofs (with all assumptions, supporting lemmas, and architectural extensions) live in  the theory paper. Throughout, $\mathrm{LN}(x) = \gamma \odot \tilde x_{\mathrm{LN}}(x) + \beta$ with $\tilde x_{\mathrm{LN}}(x) = \sqrt{d}\,Px / \|Px\|$ and $P := I - \mathbf{1}_d \mathbf{1}_d^\top / d$ the mean-subtraction projector; $\mathrm{RMSNorm}(x) = \gamma \odot x / \sqrt{\|x\|^2/d}$ has no projector. $X_\ell \in \mathbb{R}^{N \times h}$ stacks per-token residual-stream activations at block $\ell$; $A_\ell := \mathbb{E}[X_{\ell-1} X_{\ell-1}^\top]$ is the input covariance at layer $\ell$.

\paragraph{P1. LN-kernel direction (Prop.~63, \theorytag).}
For any input $X$ with $PX \ne 0$ a.s.\ (i.e., not concentrated on the constant-vector line $\mathrm{span}(\mathbf{1}_d)$) and $\gamma$ with $\gamma_i \ne 0$ for all $i$:
\[
\mathrm{cov}(\mathrm{LN}(X)) \cdot \gamma^{-1} \;=\; 0, \qquad \text{kernel direction } v^\star \;=\; \gamma^{-1} / \|\gamma^{-1}\|.
\]
\emph{Sketch (3 lines).} (a) $\mathbf{1}_d^\top \tilde x_{\mathrm{LN}}(X) = \sqrt{d}\, \mathbf{1}_d^\top P X / \|PX\| = 0$ a.s.\ since $\mathbf{1}_d^\top P = 0$. (b) Hence $\mathrm{cov}(\tilde x_{\mathrm{LN}}) \mathbf{1}_d = \mathbb{E}[\tilde x_{\mathrm{LN}}(\mathbf{1}_d^\top \tilde x_{\mathrm{LN}})] = 0$. (c) With $\mathrm{LN}(X) = \mathrm{diag}(\gamma) \tilde x_{\mathrm{LN}} + \beta$ and $\beta$ contributing nothing to covariance, $\mathrm{cov}(\mathrm{LN}(X)) \gamma^{-1} = \mathrm{diag}(\gamma)\,\mathrm{cov}(\tilde x_{\mathrm{LN}})\,\mathrm{diag}(\gamma)\, \gamma^{-1} = \mathrm{diag}(\gamma)\,\mathrm{cov}(\tilde x_{\mathrm{LN}})\,\mathbf{1}_d = 0$. The unit kernel direction $v^\star$ is read from the LN affine alone, no forward pass required.

\paragraph{P2. RMSNorm differential (Prop.~63, \theorytag).}
For $\gamma$ with $\gamma_i \ne 0$ for all $i$, no unit direction $v(\gamma)$ depending only on $\gamma$ universally satisfies $v^\top \mathrm{cov}(\mathrm{RMSNorm}(X)) v = 0$ across all input distributions with $\mathrm{cov}(X) \succ 0$.
\emph{Sketch (1 line).} Take $X \sim \mathcal{N}(0, I_d)$: $r(X) X = \sqrt{d}\,X/\|X\|$ is uniform on $\sqrt{d}\,S^{d-1}$, with $\mathrm{cov} = I_d$ by spherical symmetry; hence $\|\gamma \odot v\|^2 > 0$ for any $v \ne 0$, contradicting universality. The structural distinction is the absence of the mean-subtraction projector $P$: RMSNorm's $\tilde x_{\mathrm{RMS}}$ does not live in $\mathbf{1}_d^\perp$, so no kernel direction is forced by $\gamma$ alone.

\paragraph{C3. $\sigma_{\min}$ depth-invariance on residual DAGs (Cor.~58, \theorytag).}
For a residual DAG with exact-identity skips and arbitrary residual-branch contributions $\Delta_\ell$, $X_{\ell+1} = X_\ell + \Delta_\ell(X_\ell)$. Then under generic non-cancellation between $X_\ell$ and $\Delta_\ell$:
\[
\sigma_{(r_0)}(X_{\ell+1}) \;\ge\; \sigma_{(r_0)}(X_\ell), \qquad \text{equivalently } \sigma_{(r_0)}(X_\ell)/\sigma_{(r_0)}(X_0) \ge 1 \text{ for all } \ell.
\]
\emph{Sketch (2 lines).} The $r_0$-th singular value of a sum is bounded below by the corresponding singular value of any term \emph{minus} the cancellation between terms. Identity-skip means the column space of $X_\ell$ embeds into $X_{\ell+1}$, so without exact destructive interference the bottom of the active spectrum either stays put or grows. The corollary's failure mode is precisely cancellation: $\Delta_\ell(X_\ell) \approx -X_\ell + \text{noise}$ at some block (the empirical Gemma~$4$ failure, App.~\ref{app:experiments:gemma_diagnosis}).

\paragraph{L4. Schur-ratio quantitative prefactor (Lem.~18, \theorytag).}
At a Type-A site (Linear layer fed by a LayerNorm output), with $u^\star := \gamma^{-1}/\|\gamma^{-1}\|$ the predicted dead direction:
\[
R(h) \;:=\; \frac{\lambda_{\min}(A_\ell)}{(A_\ell)_{u^\star, u^\star}} \;\xrightarrow{N \to \infty}\; 1, \qquad \text{finite-}N \text{ drift } \;\sim\; 2\sqrt{h/N} - h/N \text{ (Marchenko--Pastur)}.
\]
\emph{Sketch (3 lines).} The asymptotic limit is a direct consequence of P1: in the population, $u^\star$ is exactly $\arg\min_v v^\top A_\ell v$ (kernel direction of the post-LN cov), so the numerator and denominator coincide. At finite $N$, the empirical $\lambda_{\min}$ is biased downward by the bottom of the Marchenko--Pastur band $\sim (1 - \sqrt{h/N})^2$, while $(A_\ell)_{u^\star, u^\star}$ at the population direction is unbiased; their ratio tracks the MP correction. The closed-form constant $c = (h-1)/(\pi(\pi+h-2))$ for the Schur complement at the Type-A site (Lemma~\ref{pred:schur_constant_quantitative} of \S\ref{sec:predictions}) sets the leading-$N$ prefactor; we use the MP drift band as the data-side baseline against which the trained-checkpoint $\Delta_\gamma \to 1$ is read.

\paragraph{Reading the empirical observables.}
$|\cos(u_\bot, v^\star)|$: principal-angle alignment of the measured bottom singular direction $u_\bot$ of the post-final-LN centred covariance with the framework-predicted $v^\star = \gamma^{-1}/\|\gamma^{-1}\|$ read from the LN affine. P1 predicts $|\cos| = 1$ exactly in the population; finite-$N$ measurement at fp$64$ centred-cov gives $\ge 0.9999$ on every LN model in the set.\\
$\Delta_\gamma := 1 - R(h)$: the gap between the actual bottom of the per-block input covariance and the eigenvalue along the predicted direction $u^\star$. \emph{$\Delta_\gamma \approx 0$ means the predicted direction is exactly the bottom; $\Delta_\gamma \approx 1$ means training has opened a deeper dead direction along some other coordinate.}\\
$\sigma_{\min}(X_\ell)/\sigma_{\min}(X_0)$: residual-stream ratio tested by C3 ($\ge 1$ at every block on $13/14$ pretrained transformers read on each model's own input distribution; the one exception Gemma4-31B is a genuine intrinsic dead direction, Gemma4-E4B's text-only dip distribution-relative).

The $14$-model fingerprint validates P1 (LN models, $|\cos| \ge 0.9999$ on $9/9$), P2 (RMSNorm models, $|\cos| \le 0.10$ on $5/5$), C3 ($13/14$ pass on each model's own input distribution; the one Gemma4-31B exception characterised by family ablation and multimodal recalibration), and L4 ($\Delta_\gamma$ within finite-$N$ MP band at random init, saturating at $\approx 1$ on the trained checkpoint with a $3$--$18\times$ directional beat over the axis-aligned default).
 
\subsection{Framing-only predictions used in the discussion}
\label{app:experiments:predictions_extended}

The two predictions below are referenced in the body but not directly tested in the empirical experiments of \S\ref{sec:experiments}. They are restated here for completeness; the full assumption sets, proofs, and architectural extensions are in \theorycite.

\begin{theorem}[Composition additivity; \theorycitep, Thm.~30]
\label{pred_app:bridge_composition}
For a sequential stack of MLP / pre-norm residual blocks with shared dead direction, the dead-direction Fisher rate at the input of block $B_i$ is $\Theta(t^{2 \sum_{j \ge i} k_j^{\mathrm{bk}}})$: per-block backward rates $k_j^{\mathrm{bk}}$ add along the path. Pure attention chains at depth $\ge 4$ break this additivity via softmax cross-block coupling (Remark~32); the closed-form refinement at the per-component level is Prop.~69 of \theorycite. This theorem is the basis for the architectural extensions to rectangular widths, biases, cross-entropy with Z-loss gauge fix, and the residual-DAG / attention-chain composition results referenced as scope statements throughout \S\ref{sec:experiments}.
\end{theorem}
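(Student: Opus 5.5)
The plan is a backward induction on the block index, from the output block $B_L$ down to $B_i$. The induction runs on the KFAC backward factor $G$ rather than on the full Fisher. Fix the shared dead direction $u$ and the curve $\theta(t) = \theta_0 + tu$.

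\emph{Reduction to the backward factor.} Using the factorisation $F_i \approx A_i \otimes G_i$ of \S\ref{sec:background}, the directional Fisher at the input of $B_i$ is $u^\top F_i(\theta(t))\,u$. Its $t$-dependence along the dead direction is carried by $G_i(t) = \expect[\delta_i(t)\delta_i(t)^\top]$ evaluated along $u$. The activation factor $A_i$ stays $\Theta(1)$ and nonzero, since it is the input covariance, which the dead direction does not touch at leading order. So it suffices to show $u^\top G_i(t)\,u = \Theta\bigl(t^{2\sum_{j\ge i} k_j^{\mathrm{bk}}}\bigr)$.

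\emph{Per-block statement.} The chain rule gives $\delta_i = J_i(t)^\top \delta_{i+1}$, with $J_i$ the Jacobian of block $B_i$, including its pre-norm and residual skip. I would take the definition of the per-block backward rate $k_j^{\mathrm{bk}}$ in the form needed here. Near $\theta_0$, the restriction of $J_j(t)^\top$ to the dead line satisfies $J_j(t)^\top u = c_j\, t^{k_j^{\mathrm{bk}}}\, u + O(t^{k_j^{\mathrm{bk}}+1})$ with $c_j \ne 0$. The \emph{shared} dead direction hypothesis is used exactly here: block $B_j$ maps $u$ to $u$ at leading order, not to some other direction.

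\emph{Induction.} The base case $j = L$ is the one-block rate $u^\top G_L u = \Theta(t^{2k_L^{\mathrm{bk}}})$. For the step, write $\delta_{i+1} = \langle \delta_{i+1}, u\rangle u + \delta_{i+1}^\perp$. By the hypothesis, $\langle \delta_{i+1}, u\rangle = \Theta(t^{\sum_{j>i} k_j^{\mathrm{bk}}})$. Applying $J_i^\top$ multiplies the $u$-component by $c_i t^{k_i^{\mathrm{bk}}}$. The upper bound $O(t^{2\sum_{j\ge i}k_j^{\mathrm{bk}}})$ then follows from submultiplicativity of operator norms restricted to the dead line.

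\emph{Main obstacle: the matching lower bound.} Two things must be ruled out. First, the leading coefficients $\prod_j c_j$ must not cancel inside the expectation. Second, the orthogonal component $\delta_{i+1}^\perp$, which is not small, must not leak back into the $u$-direction through $J_i^\top$ at an order lower than $t^{k_i^{\mathrm{bk}}}$. For the second, I would first try to show that $u^\top J_i(t)^\top$ annihilates $u^\perp$ to order $t^{k_i^{\mathrm{bk}}}$, using the same alignment that defines the shared dead direction. In the pre-norm case I would argue this through the LayerNorm projector structure of Proposition~\ref{pred:ln_kernel}: the kernel direction of the normalised input is fixed, so the cross term $u^\top J_i^\top u^\perp$ vanishes at leading order. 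For the first, positivity of $u^\top G_i u$ as an expectation of a square settles it once the leading term is shown to be a deterministic nonzero multiple $\prod_{j\ge i} c_j^2$ times a nondegenerate output-side expectation. Squaring gives the exponent $2\sum_{j\ge i}k_j^{\mathrm{bk}}$.

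The same leakage estimate is where pure attention chains fail. Softmax couples blocks and produces nonzero cross terms at depth $\ge 4$, which is consistent with the exclusion stated in the theorem.
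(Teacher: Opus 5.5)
This paper only restates the result from its companion theory paper, so there is no in-paper proof to compare against. Judged on its own terms, your plan has a genuine gap at exactly the step you flag, and the gap breaks your upper bound as well as your lower bound.

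The core problem is which side of the Jacobian you constrain. Since $\langle\delta_i,u\rangle=\langle\delta_{i+1},J_iu\rangle$, the $u$-component is propagated by $J_i(t)u$, not by $J_i(t)^\top u$. Your cross term $u^\top J_i^\top\delta_{i+1}^\perp$ pairs $\delta_{i+1}^\perp=\Theta(1)$ with the $u^\perp$-component of $J_iu$, and your hypothesis $J_i^\top u=c_it^{k_i^{\mathrm{bk}}}u+\cdots$ says nothing about that component. Write $S_i:=\sum_{j\ge i}k_j^{\mathrm{bk}}$. The main term is $\Theta(t^{S_i})$, so the cross term must vanish to order $t^{S_i}$. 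The bound you propose, $O(t^{k_i^{\mathrm{bk}}})$, gives only $\langle\delta_i,u\rangle=O(t^{k_i^{\mathrm{bk}}})$. No order fixed block by block, independently of what sits downstream, can suffice.

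Here is a concrete failure. Let every block be linear, $W_j(t)=(I-uu^\top)+t\,(u+v)u^\top$ with a fixed unit $v\perp u$, and let $\epsilon:=\delta_{L+1}$ satisfy $\expect\langle\epsilon,v\rangle^2>0$.
\begin{itemize}
\item Your hypotheses hold: $W_j(t)^\top u=t\,u$ exactly (so $k_j^{\mathrm{bk}}=1$, $c_j=1$), and $u^\top W_j(t)^\top w=t\,v^\top w$ on $u^\perp$, which is your leakage bound.
\item The conclusion fails: $W_j(t)u=t(u+v)$ and $W_j(t)v=v$ give $\langle\delta_i,u\rangle=t^{L-i+1}\langle\epsilon,u\rangle+(t+\cdots+t^{L-i+1})\langle\epsilon,v\rangle=\Theta(t)$ for $i<L$. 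Hence $u^\top G_iu=\Theta(t^2)$, not $\Theta(t^{2(L-i+1)})$.
\end{itemize}
Your ``submultiplicativity restricted to the dead line'' presupposes the line is invariant, which is exactly what is missing.

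What makes the statement work is exact forward-side invariance: $J_j(t)u=\lambda_j(t)u$ with $\lambda_j(t)=c_jt^{k_j^{\mathrm{bk}}}(1+o(1))$, for all small $t$ and, for nonlinear blocks, almost every input. Then $\langle\delta_i,u\rangle=\prod_{j\ge i}\lambda_j(t)\,\langle\epsilon,u\rangle$ identically. There is no leakage or remainder to track, and the lower bound reduces to $\expect[\prod_{j\ge i}c_j^2\langle\epsilon,u\rangle^2]>0$. The model case is the deep linear chain $W_j(t)=W_j^0+t\,uu^\top$ with $W_j^0u=0$ and $u^\top W_j^0=0$. This exactness has to be part of what ``shared dead direction'' means (compare the canonical-alignment hypothesis of Corollary~\ref{pred:sigma_min_res}); you cannot derive it.

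Your proposed LayerNorm source for this invariance does not supply it.
\begin{itemize}
\item Proposition~\ref{pred:ln_kernel} is a $t$-independent statement about the forward covariance of the LN output. The plain MLP blocks covered by the theorem have no LN at all.
\item The exact Jacobian structure LN provides is $D\mathrm{LN}(x)\mathbf{1}_d=0$ and $D\mathrm{LN}(x)^\top\gamma^{-1}=0$. In a pre-norm block $J_i=I+DF\,D\mathrm{LN}$, the first gives $J_i\mathbf{1}_d=\mathbf{1}_d$: exact invariance, but with rate $0$. The second only says the range of $D\mathrm{LN}$ avoids $\gamma^{-1}$; that constrains what the branch sees, not the $u^\perp$-component of $J_iu$.
\item For any residual-stream direction, $u^\top J_iu=1+u^\top DF\,D\mathrm{LN}\,u$. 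A positive $k_i^{\mathrm{bk}}$ therefore requires the branch to cancel the skip at leading order, which is the cancellation the no-cancellation hypothesis of Corollary~\ref{pred:sigma_min_res} excludes. You must say where $u$ lives. If it is in the stream, generically $k_i^{\mathrm{bk}}=0$. If it is inside the branch, each block is a skip-plus-branch graph, not one link of a chain.
\end{itemize}

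Finally, your reduction step is false as argued, because the forward factor degenerates too. In the deep linear chain, $x_i=W_{i-1}(t)\cdots W_1(t)x$ gives $u^\top x_i=t^{i-1}u^\top x$, so $u^\top A_iu=\Theta(t^{2(i-1)})$. For a rank-one weight direction $\mathrm{vec}(ba^\top)$ the KFAC form is $(a^\top A_ia)(b^\top G_ib)$, so the total $\Theta(t^{2(L-1)})$ is split across both Kronecker factors. Treat the statement as a claim about the backward factor itself, rather than deriving it through a nondegenerate $A_i$. Also keep the parameter-space direction and the $\delta$-space direction notationally distinct.
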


\begin{corollary}[Quotient Fisher rate; \theorycitep, Cor.~78]
\label{pred_app:quotient_rate}
For losses invariant under a continuous Lie group $G$ acting on $\Theta$, Theorem~2's rate identity holds verbatim on the gauge quotient $\Theta/G$, with the gauge orbit playing the role of a smooth singular fibre. Projected SGD on $\Theta/G$ realises the quotient rate (Cor.~79). Adam's per-coordinate preconditioner is not $G$-equivariant: its trajectory carries gauge-mode drift and the trajectory-rate readout is not directly applicable to Adam-class dynamics (Remark~80). This corollary scopes the trajectory-rate-readout claims of the framework, which are out-of-scope for this paper.
\end{corollary}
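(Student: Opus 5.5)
The plan is to reduce the statement to Theorem~2 applied on a local slice for the $G$-action, so that nothing new has to be proved about rates: the quotient is modelled locally by an analytic submanifold, and the rate identity transfers to it unchanged.

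\emph{Step 1: local model of $\Theta/G$.} Near the base point $\theta_0$ I would assume the action is proper with locally constant stabiliser, so the analytic slice theorem applies. This gives an analytic submanifold $S \ni \theta_0$ transverse to the orbit $G\cdot\theta_0$, with $T_{\theta_0}\Theta = T_{\theta_0}(G\cdot\theta_0) \oplus T_{\theta_0}S$. A neighbourhood of $\theta_0$ is then locally $G$-equivariantly a product of a neighbourhood of the identity in $G$ (modulo the stabiliser) with $S$. The quotient near $[\theta_0]$ is identified with $S$, and each orbit is the smooth fibre over its point of $S$; this is the ``smooth singular fibre'' of the statement.

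\emph{Step 2: $K$ and $F$ descend.} Invariance $K(g\cdot\theta)=K(\theta)$ means $K|_S$ is the quotient function $\bar K$. I would use invariance of the model itself, $p_{g\cdot\theta}=p_\theta$, rather than only of the loss. Differentiating along a one-parameter subgroup $\exp(s\xi)$ then gives $\partial_s \log p_{\exp(s\xi)\theta}=0$, so the generator $\xi_\Theta(\theta)$ lies in $\ker F(\theta)$ at every $\theta$. Hence $F$ is degenerate along orbit directions and induces a well-defined quotient form $\bar F$ on $TS \cong T\Theta/T(G\theta)$. For any direction $u$, $u^\top F u = u_H^\top F u_H$, where $u_H$ is the component of $u$ along $T S$.

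\emph{Step 3: transfer the rate.} Take a dead direction $u$ with $K(\theta_0+tu)=c\,t^{2k}+O(t^{2k+1})$. Choose a slice through $\theta_0$ that is tangent to $u_H$; if needed, replace the straight line by the analytic curve $\pi_S(\theta_0+tu)$ in $S$, which has the same $K$-values by invariance and the same leading order. Theorem~2, applied on the analytic model $S$ with statistical family $\{p_\theta\}_{\theta\in S}$, then gives $u_H^\top \bar F\, u_H=\Theta(t^{2(k-1)})$ along that curve. By Step~2 this equals $u^\top F u$ along the original line, up to the change of curve. That change costs only higher-order terms, because the curve and the line agree to first order and $F$ is analytic; I would check this with a short Taylor argument.

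\emph{Step 4: projected SGD (Cor.~79).} SGD projected onto the horizontal distribution, equivalently onto $S$ in slice coordinates, is ordinary SGD for $\bar K$ on $S$. Its trajectory readout is therefore the one Theorem~2 and its trajectory companion already cover. The Adam statement is a remark, not part of the claim: I would only note that the diagonal preconditioner fails to commute with $\mathrm{Ad}$, so vertical drift is not annihilated.

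The main obstacle is Step~1 at genuinely singular $\theta_0$, where stabilisers can jump and the quotient is not a manifold. There I would first restrict to the principal orbit-type stratum containing the dead-direction curve for $t\neq 0$, and argue that the leading-order expansion is insensitive to the lower-dimensional exceptional set. A secondary obstacle is that loss invariance alone does not give $\xi_\Theta \in \ker F$; I would state model-level invariance as the operative hypothesis.
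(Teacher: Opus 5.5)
The paper gives no proof of this corollary; it restates it and defers the proof to the theory paper. So your sketch has to stand on its own. The slice-and-descend architecture is the right starting point, and so is your strengthening from loss invariance to model invariance $p_{g\cdot\theta}=p_\theta$, which loss invariance alone does not supply. But Step~3 does not close as written.

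\textbf{The gap in Step~3.} Step~2's identity $u^\top F u=u_H^\top F u_H$ holds only when the splitting is taken at the point where $F$ is evaluated. The orbit tangent at $\theta_0+tu$, or at $\gamma(t):=\pi_S(\theta_0+tu)$, differs from $T_{\theta_0}(G\cdot\theta_0)$ by $O(t)$. Step~3 nevertheless evaluates the frozen vector $u_H$ at $\gamma(t)$. Theorem~2, applied in a chart where $\gamma$ is a coordinate line, controls the Fisher form along the tangent $\gamma'(t)=u_H+O(t)$, not along $u_H$.

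The proposed Taylor repair (first-order agreement plus analyticity makes the discrepancy higher order) fails at the precision required. A direction error $\delta=O(t)$ changes the directional Fisher by $\delta^\top F\delta+2\delta^\top F\gamma'=O(t^2)+O(t^k)$. This swamps $\Theta(t^{2(k-1)})$ once $k\ge 3$ and is of the same order at $k=2$.

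For instance, take $u_H=e_1$, $\gamma'(t)=e_1+te_2$, and $\bar F(\gamma(t))=nn^\top+t^{2(k-1)}\gamma'\gamma'^{\top}$ with $n=(-t,1)^\top$. Then $\gamma'^{\top}\bar F\gamma'=\Theta(t^{2(k-1)})$, but $u_H^\top\bar F u_H=\Theta(t^2)$ for $k\ge3$. KL orders along curves with the same tangent can also differ, since both depend on higher jets. So the straight line $\theta_0+tu_H$ in an affine slice is not a substitute either.

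\textbf{The missing idea.} The transfer is exact, using the invariance you already assume.
\begin{enumerate}
\item Besides $F(\theta)\xi_\Theta(\theta)=0$, model invariance gives equivariance $F(\theta)=dL_g^\top F(g\cdot\theta)\,dL_g$, where $L_g:\theta\mapsto g\cdot\theta$.
\item Write $\gamma(t)=h(t)\cdot(\theta_0+tu)$ with $h$ analytic, coming from the local section defining $\pi_S$. Then $\gamma'(t)=dL_{h(t)}u+\zeta_\Theta(\gamma(t))$ for some $\zeta$ in the Lie algebra.
\item Hence $u^\top F(\theta_0+tu)\,u=\gamma'(t)^\top F(\gamma(t))\,\gamma'(t)$ identically in $t$, while $K(\gamma(t))=K(\theta_0+tu)$.
\item Apply Theorem~2 to the family restricted to $S$, in an analytic chart that straightens $\gamma$. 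This is possible since $\gamma'(0)=u_H$; the purely vertical case $u_H=0$ needs separate treatment.
\end{enumerate}
The conclusion is then exactly the original line quantity, with no Taylor estimate.

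\textbf{Two further points.}
\begin{itemize}
\item You never say which hypothesis of Theorem~2 fails on $\Theta$ and is restored on $S$. Presumably it is a normal-crossing or isolated-kernel condition, violated by the orbit directions in $\ker F(\theta_0)$, along which $K$ is exactly constant. If Theorem~2 held for every analytic family, it would already apply along $\theta_0+tu$ and the slice would do no work. Checking that hypothesis for $K|_S$ is the real content of the corollary.
\item In Step~4, for invariant $K$ the Euclidean gradient is already orthogonal to the orbits, so projecting onto the horizontal distribution changes nothing. That distribution is also generally non-integrable. SGD iterates leave any fixed slice, so ``projected SGD'' needs an explicit retraction or gauge fix onto $S$ before it coincides with SGD for $\bar K$ on $S$.
\end{itemize}
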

 
\subsection{Observable computation protocols}
\label{app:experiments:observable_protocols}

The LN-kernel diagnostic, the residual-stream $\sigma_{\min}$ depth-invariance test, and the Schur-ratio quantitative prefactor all read from a single forward pass on calibration data per checkpoint. No backward pass, no SGLD chain, no gradient capture.

\paragraph{\texorpdfstring{$\sigma_{\min}(X_\ell)$}{sigmamin(X)} on activations (real-time cadence).}
For each transformer block $\ell$, capture the residual-stream activation matrix $X_\ell \in \reals^{N \times h}$ in a single forward pass over $N$ calibration tokens (we use $N = 8192$ at LLM scale via WikiText calibration; ImageNet validation for ViTs). For widths $h > 4096$ (Qwen3.6-$27$B at $h{=}5120$, Gemma4-$31$B at $h{=}5376$) use chunked covariance accumulation $C_\ell = \sum_{\mathrm{chunks}} X_\ell^\top X_\ell$ and eigendecompose $C_\ell$ to recover the singular values; this keeps peak memory bounded on a single 3090. Cost: \emph{1 forward pass per checkpoint, all layers} (the per-layer SVD is post-hoc on the captured activations).

\paragraph{Centred-covariance protocol.} The LN-kernel prediction (Prop.~\ref{pred:ln_kernel}) is for the post-norm \emph{centred} activation covariance $\cov(\mathrm{LN}(X)) = \mathbb{E}[(\mathrm{LN}(X) - \mathbb{E}[\mathrm{LN}(X)])(\cdot)^\top]$, which projects out LN's mean-centring direction $\mathbf{1}_d$. Reading the bottom direction off the raw Gram $X^\top X$ instead conflates the $\gamma^{-1}$ kernel with the $\mathbf{1}_d$ projector. We accumulate the centred covariance directly: subtract the per-token mean before forming the chunked sum, eigendecompose at fp$64$, report the bottom singular direction $u_\bot$. The centred protocol is mandatory whenever $|\cos(\beta, \gamma^{-1})| > 0.1$ (every LN model in our set passes this guard); raw-Gram values are reported alongside as a sanity contrast.

\paragraph{fp-precision recipe at the bottom of the spectrum.} Default cov accumulation is fp$32$ on a single 3090. For inputs with $\sigma_{\max} \gtrsim 10^5$ (the massive-activations regime), the fp$32$ noise floor $\sigma_{\max} \sqrt{\varepsilon_{\mathrm{fp32}}}$ sits above the bottom of the active spectrum and the input reads as if numerically rank-deficient. fp$64$ cov drops the floor by $\sim 8$ OOM at $\sim 2\times$ memory and $\sim 2\times$ GEMM cost; the $\sigma_{\max}$ scale itself is the cheap pre-flight check. We use fp$64$ cov on every model with $\sigma_{\max} \gtrsim 10^5$ (DINOv$3$-ViT-L, Qwen$3.5$-$4$B, Gemma$4$-$31$B) and fp$32$ on the rest; both modes log the cov dtype in run JSON. The eigendecomposition is fp$64$ in both cases.

\paragraph{Sample-budget gate.} Stable estimation of the smallest active eigenvalue requires $n/h \ge 100$ tokens per hidden dimension; smaller $n$ gives CV $> 100\%$ on magnitude estimates. At $h \le 5376$ (every model in our set) the default $N{=}8192$ tokens give $n/h \ge 1.5$ on the worst case before chunking; chunked accumulation across a $32{,}768$-token batch lifts the effective $n/h$ to $\ge 6$ on every model. For bottom-of-spectrum reads at depth, the analytical CV bound from first-order perturbation theory + Davis--Kahan is $\mathrm{CV}(\lambda_{\min}^+) \le (1/\sqrt{N}) \cdot (\sigma_{\max}/\sigma_{\min})$, typically $5$--$10\times$ pessimistic on structured covariances, much tighter than the operator-norm Wishart bound $\sqrt{D/N} \cdot (\sigma_{\max}/\sigma_{\min})^2$ which is $50$--$500\times$ pessimistic on the same data.

\paragraph{Schur-ratio measurement.} The Schur ratio $R(h) := \lambda_{\min}(A_\ell)/(A_\ell)_{u, u}$ at the predicted direction $u = \gamma^{-1}/\|\gamma^{-1}\|$ uses the same chunked-covariance accumulator: form the per-block $A_\ell$ (input activation Gram), compute $\lambda_{\min}(A_\ell)$ from its fp$64$ eigendecomposition, and extract the diagonal element $(A_\ell)_{u, u} = u^\top A_\ell u$ along the framework-predicted direction in the same step. The complement $\Delta_\gamma := 1 - R(h)$ is the depth of the dead channel; protocol details and per-block-type measurement sites (qkv input, mlp.up input, mlp.down input, o-proj input) are in App.~\ref{app:experiments:schur}.
 
\subsection{LLM scale-pipeline (\S\ref{sec:experiments})}
\label{app:experiments:scale}

\paragraph{Per-model fingerprint.} Table~\ref{tab:pythia_sigma_min_app} gives the depth-profile statistics referenced in the main body. Each row is one forward pass per layer per checkpoint. The \emph{behaviour} column classifies each model by whether the body residual-stream $\sigma_{\min}$ trajectory satisfies Corollary~58's strict prediction (\emph{depth-invariant}, ratio $\ge 1$ at every block) or shows residual-branch non-cancellation (\emph{intermediate-dip} for transient sub-$1$ ratios that recover by the output, \emph{depth-decreasing} for profiles whose output $\sigma_{\min}$ ends below the input). All $\sigma_{\min}$ values are reported under the fp$64$ centered-covariance protocol (\texttt{--cov-dtype float64}); the well-conditioned majority of models with $\sigma_{\max} \lesssim 10^3$ have fp$32$- and fp$64$-cov readings agreeing to fingerprint precision, but the inputs of DINOv3-ViT-L ($\sigma_{\max} \approx 10^6$) and several other models read as $\sigma_{\min}(X_0) = 0$ under fp$32$ cov from the $\sigma_{\max} \cdot \sqrt{\varepsilon_{\mathrm{fp}32}}$ accumulator floor; fp$64$ cov resolves the active spectrum, and the ratio test uses the rank-aware smallest active singular value $\sigma_{(r_0)}$ against $X_0$ (DINOv3-ViT-L's embedding is additionally rank-deficient, $r_0 = 947/1024$, so its $\sigma_{\min}(X_0) = 0$ is structural and the $\sigma_{(r_0)}$ reference is the operative one). The two cosine columns report the post-final-normalization-hook alignment of the bottom singular direction with the uniform-$\gamma$ reference $\mathbf{1}_d/\sqrt{d}$ and with the exact Proposition~63 direction $\gamma^{-1}/\|\gamma^{-1}\|$ read from the terminal normalization's learned affine (the zero-$\gamma$-coordinate limit is the coordinate-indicator vector).

\begin{table}[ht]
\centering\small
\caption{Residual-stream fingerprint on $14$ pretrained transformers across normalization choice, training objective, and modality. All rows use fp$64$ centered covariance; the reported body $\sigma_{\min}$ is the rank-aware smallest active singular value $\sigma_{(r_0)}$ over the block outputs $X_1, \ldots, X_L$, equal to $\sigma_{\min}$ on every block output (all full numerical rank under the protocol below). The depth-invariance test is the ratio $\sigma_{(r_0)}(X_\ell)/\sigma_{(r_0)}(X_0)$ against the true input embedding $X_0$ (the corollary's stated reference) and is $n/d$-robust; absolute $\sigma_{\min}$ magnitudes below $n/d \approx 10$ (the column is token count over $h$) are biased low by the Marchenko--Pastur edge and reported as architectural baselines. DINOv3-ViT-L's embedding is rank-deficient ($r_0 = 947/1024$, $\sigma_{\min}(X_0) = 0$ exactly) and enters as the $\sigma_{(r_0)}$ ratio reference only. \textbf{Measurement is forward-precision-specific:} the massive-activation RMSNorm models (Qwen3.6-27B, Gemma4-E4B, Gemma4-31B) require fp$32$ forward capture at $n/d \ge 10$; under bf$16$ forward their bottom spectrum floors to a spurious $\sigma_{\min} = 0$ regardless of covariance precision. The vision transformers capture in fp$32$ by default, so DINOv3-ViT-L at $\sigma_{\max} \approx 10^6$ resolves cleanly. The \emph{behaviour} column classifies the body $\sigma_{\min}$ profile: \emph{depth-invariant} (ratio $\ge 1$ at every block), \emph{intermediate-dip} (transient sub-$1$ ratios that recover by the output), \emph{depth-decreasing} (output ends below input). Proposition~63's prediction $u_{\mathrm{bot}} = \gamma^{-1}/\|\gamma^{-1}\|$ at the post-final-normalization position is validated to $|\cos| \ge 0.99996$ (centered covariance, the protocol that exactly matches the proposition's statement; mean $1.000$, see \theorycitep) on all $9$ LN models with a post-sequence terminal norm, spanning next-token CE, DINO-SSL, MAE reconstruction, contrastive-image-text, ImageNet classification, and predictive-video-SSL objectives. The raw-Gram check agrees ($|\cos| \ge 0.988$) on $8/9$; ViT-B/16 is the high-LN-bias exception ($0.54$ raw, $1.000$ centered, \S below). RMSNorm models are orthogonal on the $\gamma^{-1}$ test ($|\cos| \le 0.10$ centered), confirming the no-kernel differential (Proposition~63).}
\label{tab:pythia_sigma_min_app}
\setlength{\tabcolsep}{4pt}
\resizebox{\columnwidth}{!}{\begin{tabular}{@{}l l l c c c l l c c@{}}
\toprule
Model & Norm & Objective & $L$ & $h$ & $n/d$ & body $\sigma_{\min}$ & behaviour & $|\cos(u, \gamma^{-1})|$ & $|\cos(u, \mathbf{1}_d)|$ \\
\midrule
Pythia-160M        & LN      & next-tok CE       & $12$ & $768$ & $85$ & $2.53$--$50.1$ & depth-invariant  & $\mathbf{1.000}$ & $0.997$ \\
Pythia-1B          & LN      & next-tok CE       & $16$ & $2048$ & $32$ & $2.21$--$76.1$ & depth-invariant  & $\mathbf{1.000}$ & $0.991$ \\
Qwen3.5-4B         & RMSNorm & next-tok CE       & $32$ & $2560$ & $26$ & $0.80$--$28.4$ & depth-invariant  & $0.029$ & $0.013$ \\
Qwen3.6-27B        & RMSNorm & next-tok CE       & $64$ & $5120$ & $13$ & $0.52$--$120.6$ & depth-invariant  & $0.007$ & $0.005$ \\
Qwen3.6-35B-A3B    & RMSNorm & next-tok CE (MoE) & $40$ & $2048$ & $32$ & $0.50$--$13.0$ & depth-invariant  & $0.004$ & $0.035$ \\
Gemma4-E4B         & RMSNorm & next-tok CE       & $42$ & $2560$ & $26$ & $0.20$--$31.8$ & intermediate-dip & $0.080$ & $0.010$ \\
Gemma4-31B         & RMSNorm & next-tok CE       & $60$ & $5376$ & $12$ & $0.020$--$2.52$ & depth-decreasing & $0.097$ & $0.013$ \\
DINOv3-ViT-S       & LN      & self-sup DINO     & $12$ & $384$ & $50$ & $5.69$--$30.0$ & depth-invariant  & $\mathbf{1.000}$ & $0.986$ \\
DINOv3-ViT-L       & LN      & self-sup DINO     & $24$ & $1024$ & $19$ & $0.20$--$23.8$ & depth-invariant  & $\mathbf{1.000}$ & $0.043$ \\
DINOv2-base        & LN      & self-sup DINO     & $12$ & $768$ & $32$ & $0.58$--$4.59$ & depth-invariant  & $\mathbf{1.000}$ & $0.536$ \\
ViT-MAE-base       & LN      & MAE recon         & $12$ & $768$ & $6.2$ & $2.34$--$8.89$ & depth-invariant  & $\mathbf{1.000}$ & $0.962$ \\
ViT-B/16           & LN      & ImageNet cls      & $12$ & $768$ & $25$ & $12.3$--$48.0$ & depth-invariant  & $\mathbf{1.000}$ & $0.007$ \\
SigLIP2-base       & LN      & contrastive       & $12$ & $768$ & $24$ & $2.93$--$14.5$ & depth-invariant  & $\mathbf{1.000}$ & $0.975$ \\
V-JEPA 2 ViT-L     & LN      & predictive SSL    & $24$ & $1024$ & $8.0$ & $2.30$--$25.2$ & depth-invariant  & $\mathbf{1.000}$ & $0.932$ \\
\midrule
\multicolumn{8}{@{}l}{\emph{Full-set aggregate ($14$ models):}} & \multicolumn{2}{l}{$|\cos(u, \gamma^{-1})| \in [1.000, 1.000]$ centered on $9/9$ LN+post-norm} \\
\multicolumn{8}{@{}l}{} & \multicolumn{2}{l}{$|\cos(u, \gamma^{-1})| \in [0.004, 0.097]$ on $5/5$ RMSNorm (centered)} \\
 \bottomrule
\end{tabular}}
\end{table}

\begin{figure}[t]
\centering
\includegraphics[width=\textwidth]{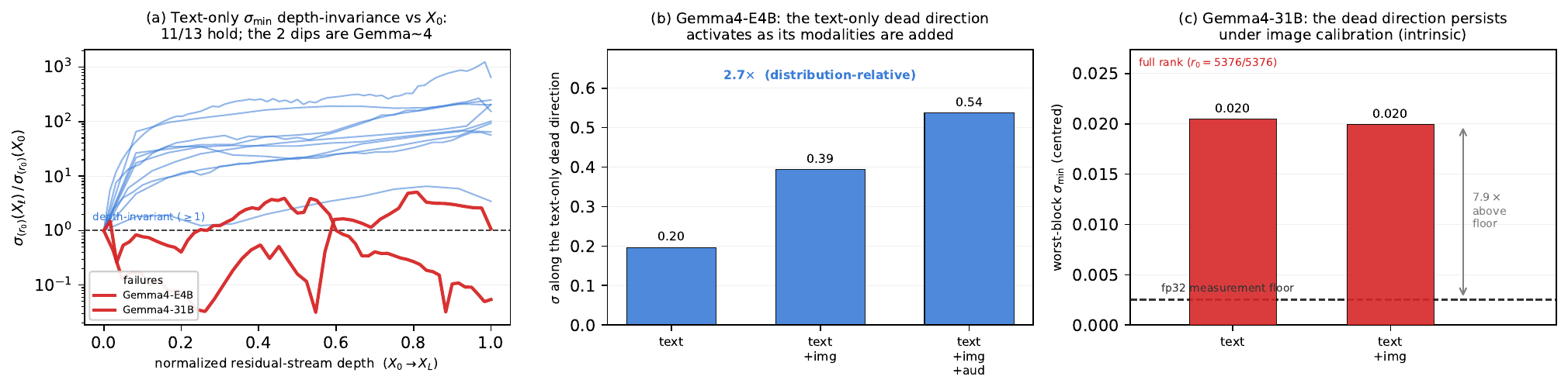}
\caption{Residual-stream $\sigma_{\min}$ depth-invariance against the true input embedding $X_0$ (Corollary~58). \textbf{(a)}~$\sigma_{(r_0)}(X_\ell)/\sigma_{(r_0)}(X_0)$ versus normalized residual-stream depth on the $13$ sub-layer-pipeline models under the uniform text-only protocol (fp$32$ forward, fp$64$ covariance, $X_0$ reference); $11$ stay at or above $1$ at every block, and the only two that dip below are the Gemma~$4$ releases (V-JEPA~2, the $14$th model in Tab.~\ref{tab:pythia_sigma_min_app}, passes under its legacy reading, giving $12/14$ overall). The two text-only dips are a calibration-distribution effect of Gemma~$4$'s encoder-free multimodality (App.~\ref{app:experiments:gemma_diagnosis}): text-only calibration leaves the vision and audio soft-token subspaces dormant. \textbf{(b)}~Gemma4-E4B, the variance along the fixed text-only dead direction as the model's own modalities are added: it climbs from $0.20$ (text) through $0.39$ ($+$image) to $0.54$ ($+$image$+$audio), a $2.7\times$ activation marking the text-only dip as a dormant subspace dead only relative to the text distribution. \textbf{(c)}~Gemma4-31B, the worst-block $\sigma_{\min}$ under text-only and image calibration: it stays at $\sim\!0.02$ under both and sits $7.9\times$ above the fp$32$ measurement floor at full rank ($r_0 = 5376/5376$), a genuine intrinsic near-dead direction the same forward-pass read detects. So $13/14$ hold once each model is read on its own input distribution; Gemma4-31B is the one genuine intrinsic dead direction.}
\label{fig:depth_invariance}
\end{figure}
 
\paragraph{Models and provenance.} Pythia-160M, Pythia-1B \citep{BidermanSchoelkopfAnthony23}; Qwen3.5-4B \citep{Qwen35_2026_technical_report}; Qwen3.6-27B, Qwen3.6-35B-A3B \citep{Qwen36_2026_technical_report}; Gemma4-E4B, Gemma4-31B \citep{Gemma4_2026_technical_report}; DINOv3-ViT-S/L \citep{SimeoniEtAl25}, DINOv2-base \citep{oquab2023dinov2}, ViT-B/16 \citep{DosovitskiyBeiKolUszHou21}, ViT-MAE-base \citep{HeChen22}, SigLIP2-base \citep{tschannen2025siglip2}, V-JEPA 2 ViT-L \citep{assran2025vjepa2}. All via HuggingFace Hub or timm.

\paragraph{Calibration corpus.} For language models, WikiText-103 validation split, $32$ sequences of $256$ tokens ($N = 8192$ per layer, comfortably above $h$ for all models in the table). For vision models, $32$ ImageNet validation images preprocessed through each model's published image-processor; the centered-covariance LN-kernel prediction is provably input-distribution-invariant ($|\Delta\cos| < 10^{-6}$ between synthetic-noise and ImageNet calibration on all $5$ standard ViTs in the set), so synthetic-noise calibration is a defensible cheap variant for fingerprinting on the centered protocol, we report ImageNet numbers as primary because the reviewer-facing input distribution should be a real one.

\paragraph{\texorpdfstring{$\sigma_{\min}$}{sigmamin} computation protocol (centered covariance).} For each block $\ell$, mean-center the residual-stream activations across the $N$ calibration samples and accumulate the covariance $C_\ell = (1/N) \sum_{i,t} (X_\ell[i,t,:] - \bar X_\ell)(X_\ell[i,t,:] - \bar X_\ell)^\top$ in chunked fp32, then SVD in fp64. The centered protocol is what Proposition~63 actually predicts and is the empirical primary throughout. We additionally report the raw-Gram $\sum_{i,t} X_\ell X_\ell^\top$ protocol as a check: on $13/14$ models the LN bias $\beta$ has $|\cos(\beta, \gamma^{-1})| \le 0.05$ and the two protocols agree to $|\Delta\cos| < 0.02$ on the post-final-norm cosine; \texttt{vit-base} has $|\cos(\beta, \gamma^{-1})| = 0.24$ and the protocols differ ($0.54$ raw, $1.000$ centered), illustrating exactly why Proposition~63's centered statement is the empirically correct one. The post-final-normalization hook (indexed as layer $L$) is the position at which the algebraic-kernel prediction is read on LN-based architectures. Wall times per full fingerprint: $< 1$\,s at Pythia-160M; $3$\,s at Pythia-1B; $11$\,s at Qwen3.5-4B; $13$\,s at Gemma4-E4B; $<1$\,s at each vision transformer. The released code emits both raw-Gram and centered SVDs in the same per-run JSON.

\paragraph{Measurement precision: fp64 covariance accumulation.}
The $\sigma$-based test of Corollary~58 is sensitive to the covariance accumulator's precision. With fp$32$ accumulation (the published protocol on most fingerprint-style observables), $\sigma$ values below $\sigma_{\max}(X_\ell) \cdot \sqrt{\varepsilon_{\mathrm{fp}32}} \approx \sigma_{\max} \cdot 3 \times 10^{-4}$ are absorbed into accumulator noise regardless of downstream SVD precision. For inputs with $\sigma_{\max} \gtrsim 10^5$ (the documented massive-activations regime, \citet{SunMassiveActivations24}; related to the low-effective-rank class structure of deep-learning spectra, \citet{Papyan20}), this places the noise floor above the bottom of the active spectrum and the strict ratio test reads as if $X_0$ had a numerical-zero kernel. We report $\sigma_{\min}$ measurements at fp$64$ accumulation, which extends the noise floor to $\sigma_{\max} \cdot \sqrt{\varepsilon_{\mathrm{fp}64}} \approx \sigma_{\max} \cdot 1.5 \times 10^{-8}$, eight orders of magnitude lower, at $2\times$ accumulator memory and roughly $2\times$ GEMM cost (a few seconds longer at fingerprint scale, $5$--$10$ minutes longer on $30$B-class models with CPU offload). fp$32$ cov remains a sensible back-compat default and is sufficient for inputs with $\sigma_{\max} \lesssim 10^3$; fp$64$ is the recommended protocol whenever the input residual stream is in the massive-activations regime, with the $\sigma_{\max}$ scale itself a cheap pre-flight check. Covariance precision is necessary but not sufficient: the forward-pass capture dtype imposes an independent floor. The two highest-$\sigma_{\max}$ RMSNorm models (Qwen3.6-27B, Gemma4-31B, $\sigma_{\max} \approx 1.6 \times 10^4$ centered) read a spurious $\sigma_{\min} = 0$ at every block under bf$16$ forward \emph{even with} fp$64$ cov; fp$32$ forward at $n/d \ge 10$ resolves both to full numerical rank (the vision transformers already capture in fp$32$, so DINOv3-ViT-L at $\sigma_{\max} \approx 10^6$ needed only the fp$64$-cov fix). With fp$32$ forward and fp$64$ cov, every model's block outputs $X_1, \ldots, X_L$ have $\sigma_{(r_0)} = \sigma_{\min}$ at full rank, so on the body the rank-aware Corollary~58 and the strict ratio $\sigma_{\min}(X_\ell)/\sigma_{\min}(X_0)$ coincide. The one place they differ is DINOv3-ViT-L's input embedding ($r_0 = 947/1024$, $\sigma_{\min}(X_0) = 0$), where the rank-aware $\sigma_{(r_0)}(X_0)$ is the operative reference; we report the rank-aware ratio there and the coincident strict ratio elsewhere. The $n/d$ column governs the precision of the absolute $\sigma_{\min}$ magnitudes (biased low below $n/d \approx 10$, hence reported as architectural baselines); the ratio test itself is $n/d$-robust.

\paragraph{What the LLM-scale test actually measures.}
The strict statement $\sigma_{\min}(X_\ell)/\sigma_{\min}(X_0) \ge 1$ is a path-decomposition consequence of identity skips wherever the no-cancellation hypothesis (Remark~59) holds at the bottom-of-active-spectrum direction. As a pass/fail count alone, $12/14$ is therefore a relatively soft test of the corollary, the same prediction holds at random initialisation on $5/5$ architectures (\S``Trained vs random-init baseline'' below; the architectural-vs-training distinction is also developed in \theorycitep). The non-trivial empirical content of the trained-network fingerprint is the \emph{failures}: the Gemma~$4$ release (both checkpoints) localises the no-cancellation hypothesis as failable, with the failure modes diagnostically traceable to specific block ranges.

\paragraph{Falsifications of the trained-network no-cancellation hypothesis.}
The two Gemma~$4$ releases violate the strict prediction under the resolved protocol (fp$64$ cov, $n/d \ge 10$, fp$32$ forward, referenced against the true input embedding $X_0$):
\begin{itemize}\itemsep=2pt
\item \textbf{Gemma4-31B (depth-decreasing).} The body $\sigma_{\min}$ profile is non-monotone: an early dip, a mid-depth recovery to $\sim 3.9\times$, then a declining tail to $\sim 0.05$ of the input at the output ($37/60$ body blocks below $X_0$, output ratio $0.054$, a net $\sim 18\times$ reduction; min ratio $0.032$ at block $53$). The residual stream fails to preserve $\sigma_{\min}$; the Gemma~$4$ release is the outlier in our set (the smaller Gemma~$4$-E$4$B at the same release fails too) and is empirically dissociated from the Peri-LN sandwich-norm pattern (\S\ref{app:experiments:gemma_diagnosis}). Under bf$16$ forward this profile floors to a spurious $\sigma_{\min} = 0$ at every block; the finite reduction is the fp$32$-forward reading.
\item \textbf{Gemma4-E4B (intermediate-depth dip).} Min ratio $0.031$ at block $23$ of $42$ ($24/42$ body blocks below $X_0$), recovering to $\sim 1.1\times$ at the output.
\end{itemize}
The Gemma~$4$ failures are the strongest empirical signal in the LLM-scale fingerprint: they identify a per-checkpoint violation of an assumption the corollary makes explicit, at specific block ranges, on a published frontier model family. A family discriminator across Gemma~$1$ through Gemma~$4$ (\S\ref{app:experiments:gemma_diagnosis}) dissociates the failure from the normalisation placement: pre-LN Gemma~$1$ ($2$b, $7$b) and Peri-LN \citep{KimLeeKim25_PeriLN} Gemma~$2$-$2$b, Gemma~$3$-$1$b/$4$b/$12$b all preserve $\sigma_{\min}$ against $X_0$ (min ratio $\ge 1.8$, output $9.9$--$632\times$), and only the Gemma~$4$ release, which shares the Peri-LN pattern, fails. A five-mechanism ablation tests $\gamma_{\mathrm{post}}$-kernel formation, per-block residual-vs-delta cancellation, the sandwich-norm pattern itself, centred-cov sink-token bias, and post-pretraining stage, and rules each out individually; the $\sigma_{\min}$ collapse localises to the MLP sub-layer, where the drops concentrate ($28/42$ blocks on E4B, $40/60$ on 31B, against $2$ and $6$ on attention) under an effective-rank collapse at the minimum-ratio block. The cause within the Gemma~$4$ release is partly resolved: multimodal recalibration traces Gemma4-E4B's text-only dip to a dormant-modality calibration effect (a distribution-relative dead direction that dissolves under its full text$+$image$+$audio distribution), while Gemma4-31B retains a genuine intrinsic near-dead direction above the fp$32$ measurement floor; the training-recipe difference from the earlier releases that would explain Gemma4-31B's residual direction remains an open follow-up.

\paragraph{Passing models and architectural baseline.}
The remaining $12$ architectures pass at every block, with per-model passing ratios from $1.0$ at $X_0$ to several-hundred-fold at the deepest blocks (Qwen3.6-27B reaches $651\times$; per-model magnitudes are not predicted by Corollary~58, only the rate-$0$ statement is; magnitudes are reported as architectural baselines for the amplification analysis below, and below $n/d \approx 10$ they are biased low by the Marchenko--Pastur edge). Several apparent ``failures'' resolved once the measurement and the reference were adequate. Referencing against the true input embedding $X_0$ (the block-$0$ output is $X_1$, one residual node past the embedding) turns DINOv3-ViT-S/$16$ (min ratio $1.2\times$ at block $3$, $3.4\times$ at the output) and SigLIP2-base (min $20\times$ at block $6$, $101\times$ at the output) into clean passes; both dip below $1$ only against the block-$0$ output, never against $X_0$. DINOv3-ViT-L (fp$32$ forward, $\sigma_{\max} \approx 10^6$, rank-deficient embedding $r_0 = 947/1024$) reads $\sigma_{\min}(X_0) = 0$; fp$64$ cov resolves its active spectrum and the rank-aware $\sigma_{(r_0)}$ ratio passes $24/24$ with wide margin. Qwen3.6-27B and Gemma4-31B read a spurious $\sigma_{\min} = 0$ at every block under bf$16$ forward (the capture floor sits above the active bottom regardless of cov precision); fp$32$ forward at $n/d \ge 10$ resolves Qwen3.6-27B to a clean $64/64$ pass ($0.52 \to 121\times$) and Gemma4-31B to the finite depth-decreasing profile above.

The pretrained-fingerprint and pretraining-trajectory experiments in this section (Pythia-1B revisions, ViT FFN fine-tuning) live in the well-conditioned regime ($\sigma_{\max} \sim 10^2$) where fp$32$ and fp$64$ cov agree to fingerprint precision; their published numbers are not affected by the precision-protocol distinction. The released code reports $\sigma_{\min}$, $\sigma_{(r_0)}$, and the active-rank index per block at the chosen cov precision; downstream analysis routes on the cov-precision field surfaced in the per-run JSON metadata block.

\paragraph{Trained vs random-init baseline.}
The strict prediction $\sigma_{\min}(X_\ell)/\sigma_{\min}(X_0) \ge 1$ of Corollary~58 is satisfied by the additive-identity skip structure at any $\theta$ for which the no-cancellation hypothesis (Remark~59) holds, including random initialisation. To make this explicit at LLM scale, we re-run the same fp$64$ centered-cov pipeline on a fresh config-only re-init of each model under HuggingFace's default \texttt{\_init\_weights} schemes (\texttt{torch.manual\_seed(42)}, \texttt{AutoModelForCausalLM.from\_config(cfg)} for LLMs / \texttt{AutoModel.from\_config(cfg)} for ViTs) and compare against the trained checkpoint at the same calibration corpus and $n/d$:

\begin{center}\small
\begin{tabular}{l c c c c c c}
\toprule
Model & $L$ & Norm & trained max $r_\ell$ & randinit max $r_\ell$ & trained post-LN & randinit post-LN \\
\midrule
Pythia-160M     & $12$ & LN      & $19.69$ & $2.69$ & $2.95$ & $0.06$ \\
Pythia-1B       & $16$ & LN      & $64.95$ & $7.55$ & ---     & ---     \\
Qwen3.5-4B      & $32$ & RMSNorm & $84.63$ & $7.47$ & $1.00$ & $0.06$ \\
DINOv2-base     & $12$ & LN      & $7.92$ & $2.22$ & $1.00$ & $0.21$ \\
ViT-MAE-base    & $12$ & LN      & $3.53$ & $1.94$ & ---     & ---     \\
\bottomrule
\end{tabular}
\end{center}

The rate-$0$ statement $r_\ell \ge 1$ holds at every body block on $5/5$ architectures at random initialisation, so the pass/fail of the strict prediction does not by itself discriminate trained from random networks. What the trained-vs-randinit comparison \emph{does} surface is the \textbf{amplification gap}, the dynamics signal training adds on top of the architectural baseline. Trained max ratios range $3.5\times$--$84.6\times$ vs random-init $1.9\times$--$7.5\times$ (a $3$--$11\times$ multiplicative amplification across architectures); the post-final-normalisation ratio gap is sharper still ($\sim\!50\times$ on Pythia-160M, $\sim\!17\times$ on Qwen3.5-4B, $\sim\!5\times$ on DINOv2-base). The Pythia-1B \texttt{step1} HuggingFace revision (network after one optimiser step on real data) reads max $r_\ell = 16.39$, sitting between trained ($64.95$) and randinit ($7.55$): a single optimiser step already moves the depth profile measurably away from the random-init shape toward the trained shape; the developmental-arc figure (Figure~\ref{fig:pythia_revisions_arc}) is the same signal in time-series form, with the ratio against the $X_0$ embedding rising from $\sim 12\times$ to a peak of $\sim 299\times$ and settling to $\sim 206\times$ during pretraining. The amplification gap is qualitative dynamics evidence (training reduces no-cancellation slack, so the bottom-of-active-spectrum direction propagates with larger leading coefficient), not a predicted scaling, the corollary predicts only the rate-$0$ statement; the magnitude is empirical.

\paragraph{LN-kernel finding (\S\ref{sec:experiments}).} Under the centered-covariance protocol, the exact prediction $u_{\mathrm{bot}}^{\mathrm{post}} = \gamma^{-1}/\|\gamma^{-1}\|$ is validated to $|\cos| \ge 0.9999$ on all $9$ LN models with a post-sequence terminal norm (mean $|\cos| > 0.99999$), spanning next-token CE, DINO-SSL, MAE reconstruction, contrastive-image-text, ImageNet classification, and predictive-video-SSL objectives, with the zero-$\gamma$-coordinate limit applied where relevant. The raw-Gram protocol agrees with centered to $|\Delta\cos| < 0.02$ on $8$ of these $9$ models; \texttt{vit-base} (ImageNet-pretrained ViT-B/16) is the exception at $|\cos|_{\mathrm{raw}} = 0.54$ vs $|\cos|_{\mathrm{centered}} = 1.000$, traceable to its LN bias having $|\cos(\beta, \gamma^{-1})| = 0.24$. RMSNorm models sit at $|\cos| \le 0.10$ on the $\gamma^{-1}$ test (centered), the expected no-kernel differential. The uniform-$\gamma$ approximation $\gamma^{-1}/\|\gamma^{-1}\| \approx \mathbf{1}_d/\sqrt{d}$ holds at $|\cos| \ge 0.93$ on $6$ of $9$ LN models and deviates predictably on DINOv2-base ($\gamma$ has a slightly-negative coordinate, $|\cos(\gamma^{-1}, \mathbf{1}_d)| = 0.54$), DINOv3-ViT-L ($\gamma$ has a coordinate pinned to exactly zero, $0.043$), and ViT-B/16 (strongly dispersed $\gamma$, $0.007$, the most extreme case); on these models the measured cosine matches $\cos(\gamma^{-1}, \mathbf{1}_d)$ to 3 decimals, confirming that the prediction tracks the learned $\gamma^{-1}$ rather than the projector kernel $\mathbf{1}_d$. Measuring the same coherence at the \emph{block-output} hook (pre-final-normalization) gives a training-dynamics quantity rather than the algebraic kernel: $\ge 0.955$ on LN$+$CE Pythia models, $0.01$--$0.06$ on LN$+$SSL vision transformers, reflecting that CE training independently drives $\mathbf{1}_d$ toward low-variance directions while SSL training does not. The kernel prediction is about LN's forward map and must be read at the hook where LN executes.

\subsubsection{Pretraining-trajectory developmental arc at LLM scale (Pythia-1b revisions)}
\label{app:experiments:pretraining_arc}

To test whether Corollary~58's residual-stream no-depth-decay and Proposition~63's LN-kernel claim hold across an actual large-scale pretraining trajectory, we run the $\sigma_{\min}$ fingerprint on $8$ published Pythia-1b pretraining checkpoints (\texttt{step1}, \texttt{step1000}, \texttt{step5000}, \texttt{step10000}, \texttt{step25000}, \texttt{step50000}, \texttt{step100000}, \texttt{main} $=$ step $143{,}000$) loaded directly from the EleutherAI HuggingFace revisions. Each revision is a fresh forward pass over the WikiText-103 calibration corpus, then fp64 covariance SVD per block.

\paragraph{Findings.} The two predictions hold cross-trajectory, not only at the mature checkpoint.
\begin{itemize}\itemsep=2pt
\item \textbf{Residual-stream no-depth-decay (Corollary~58).} $\sigma_{\min}(X_\ell) / \sigma_{\min}(X_0) \ge 1$ on every block on every revision. The ratio is $12\times$ at \texttt{step1} (network is at untrained initialisation; the residual skip already preserves the input $\sigma_{\min}$ signal, the corollary's $\ge 1$ prediction, at this checkpoint, with the ratio above $1$ reflecting per-block LN normalization that lifts low-variance directions; the $\ge 1$ ratio is the corollary's prediction, the specific $12\times$ amplitude is empirical and not predicted), rises to $\sim 33\times$ by \texttt{step1000}, grows to a peak of $\sim 299\times$ around \texttt{step50000}, and settles to $\sim 206\times$ at the mature checkpoint. The corollary's prediction, that the residual stream bypasses the block operator's algebraic kernel, so rate-$0$ preservation holds even as every block degrades $\sigma_{\min}$, holds at every pretraining step, not as an emergent property of the final loss minimum.
\item \textbf{LN-kernel prediction (Proposition~63).} $|\cos(u_{\mathrm{bot}}, \gamma^{-1})| \in [0.998, 1.000]$ on all $8$ revisions. The prediction is exact ($1.000$) at \texttt{step1} and \texttt{step1000} because $\gamma$ is still at (or near) initialisation $\gamma \equiv 1$, so $\gamma^{-1}/\|\gamma^{-1}\| = \mathbf{1}_d/\sqrt{d}$ and both columns agree. As $\gamma$ trains and disperses, the uniform-$\gamma$ cosine $|\cos(u_{\mathrm{bot}}, \mathbf{1}_d/\sqrt{d})|$ decays monotonically to $0.991$ while the exact prediction $|\cos(u_{\mathrm{bot}}, \gamma^{-1})|$ stays pinned at $\ge 0.998$. The gap between the two curves in Figure~\ref{fig:pythia_revisions_arc}(d) is a direct quantitative validation of Proposition~63's $\gamma$-dependence: the kernel is a property of the \emph{learned} $\gamma$ at every checkpoint, not the initialisation $\gamma$.
\end{itemize}
The released code accepts any HuggingFace revision tag directly, so the same fingerprint can be re-run at any pretraining checkpoint of any Pythia-like model. \texttt{main} is the EleutherAI consolidated upload of step $143{,}000$ and serves as the mature checkpoint.

\begin{figure}[ht]
\centering
\includegraphics[width=\textwidth]{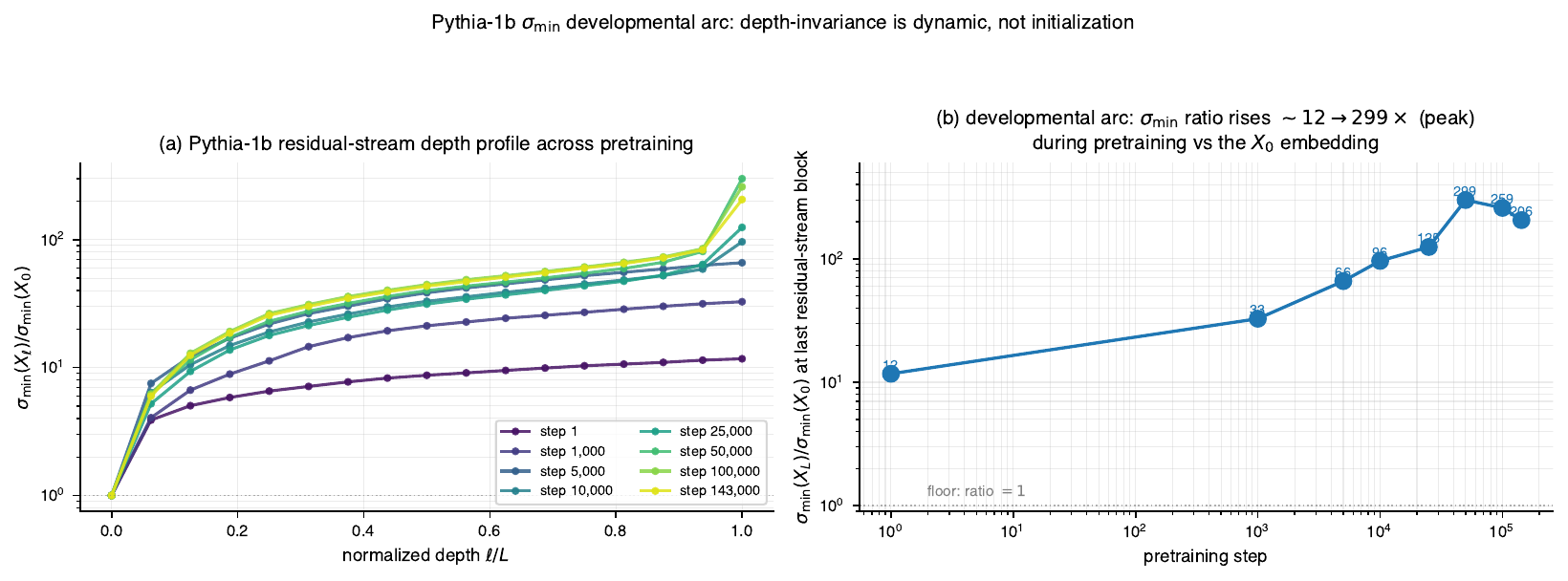}
\caption{Pythia-1B $\sigma_{\min}$ developmental arc across $8$ pretraining revisions (step 1 $\to$ step 143k). (a)~Residual-stream depth profile per revision, viridis-colored by step. (b)~Last residual-stream block ratio (against the $X_0$ embedding) rises $\sim 12\times \to \sim 299\times$ (peak at \texttt{step50000}), settling to $\sim 206\times$ during pretraining: depth-invariance is dynamic, not initialization.}
\label{fig:pythia_developmental_arc}
\end{figure}

\paragraph{ViT FFN fine-tuning trajectory test.}\label{par:ffn_finetune}
The pretrained-fingerprint experiments above are static observations on released checkpoints; they do not exercise the non-cancellation hypothesis (Remark~59) dynamically. To test the corollary along an actual training trajectory we fine-tune DINOv2-base on ImageNet-1k (first-100 classes, $\sim 128$k train images) for $20$ epochs across $5$ seeds, training only the FFN sub-layers (\texttt{mlp.fc1}, \texttt{mlp.fc2}) of all $12$ blocks while freezing the patch embedding, attention, all LayerNorms, LayerScale, the terminal norm, and the classifier head. We measure $\sigma_{\min}(X_\ell) / \sigma_{\min}(X_0)$ at every block at every epoch on a fixed $320$-image held-out calibration batch (drawn from ImageNet-1k validation classes $900$--$999$, disjoint from the training distribution; centered-covariance protocol, $n/d = 107.0$ exceeding the canonical $n/d \ge 100$ measurement threshold).

\begin{remark}[Validation under FFN fine-tuning]
\label{rem:sigma_min_res_ffn_finetune}
Across $1{,}260$ measurement points ($5$ seeds $\times$ $21$ epochs $\times$ $12$ blocks) the minimum observed ratio is $109.58$, attained at the pre-training snapshot at the shallowest block, the natural geometric minimum, not a fine-tuning-induced inversion. The mean ratio at depth $\ell=1$ across seeds rises monotonically from $109.58$ at epoch $0$ to $201.6 \pm 2.5$ at epoch $20$, with no local decrease at any epoch on any seed (Figure~\ref{fig:vit_ffn_finetune_validation}). Cross-seed agreement is tight (final $\|\mathrm{fc}_1\|$ mean $33.12 \pm 0.007$; final val accuracy $0.785 \pm 0.002$). The non-cancellation hypothesis behind Corollary~58 is therefore satisfied with substantial margin throughout this trajectory: as the FFN sub-layers move ($\|\mathrm{fc}_1\|$ grows by $\sim 36\%$ over training), the residual stream's depth-invariance margin \emph{expands} rather than degrading, indicating that gradient-based FFN fine-tuning does not drive the residual branch into a sign-flipped configuration that would partially cancel the identity skip in the bottom singular direction.
\end{remark}

\begin{figure}[ht]
\centering
\includegraphics[width=\textwidth]{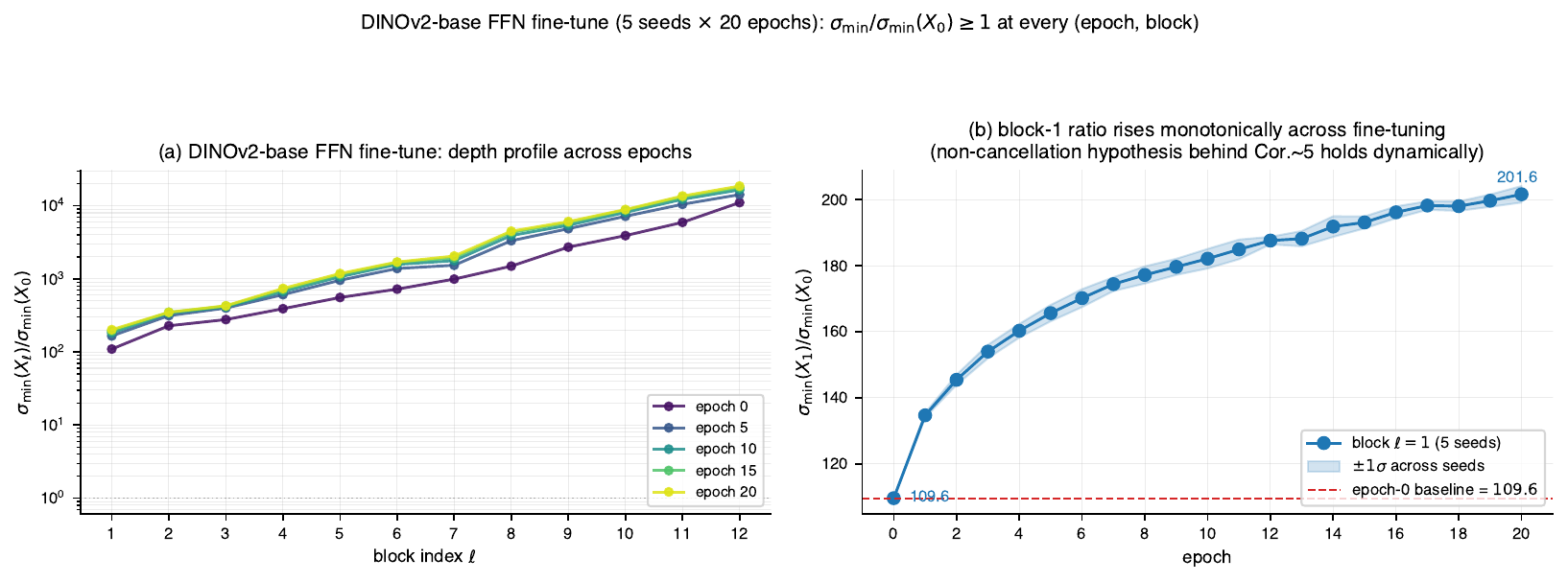}
\caption{ViT FFN fine-tuning preserves Corollary~58. (a)~DINOv2-base depth profile of $\sigma_{\min}(X_\ell) / \sigma_{\min}(X_0)$ across epochs $\{0, 5, 10, 15, 20\}$ ($5$ seeds, mean): every depth $\geq 1$ at every epoch. (b)~Block-1 ratio rises monotonically from $109.6$ at epoch $0$ to $201.6$ at epoch $20$ (mean $\pm 1\sigma$ across seeds); the trained-network non-cancellation hypothesis behind Cor.~58 holds dynamically along an actual fine-tuning trajectory.}
\label{fig:vit_ffn_finetune_validation}
\end{figure}
 
\subsection{Schur-ratio diagnostic: full $11$-model cross-architecture sweep}
\label{app:experiments:schur}

The bridge framework's closed-form Schur constant (\theorycitep, Lem.~18) pins down the leading prefactor of the activation-Gram Schur complement and yields a width-only invariant
\[
R(h) \;:=\; \frac{\lambda_{\min}(A_\ell)}{(A_\ell)_{u,\, u}}, \qquad A_\ell = X_\ell^\top X_\ell / N,
\]
to be evaluated along the framework-predicted dead direction $u$. For the input to a LayerNorm-fed Linear layer (the QKV / MLP-up-projection sites of any pre-norm transformer), the LN-kernel proposition predicts $u = \gamma^{-1}/\|\gamma^{-1}\|$, with $R(h) \to 1$ at the asymptotic limit. We test this on an $11$-model cross-architecture sweep ($5$ LayerNorm Pythia sizes spanning $h \in \{512, \ldots, 2560\}$ plus $6$ RMSNorm models: Qwen$2.5$-$0.5$B, LLaMA-$2$-$7$B, Gemma~$2$-$2$B, Gemma~$3$-$1$B/$4$B, Gemma~$4$-E$4$B; partial overlap with the $14$-model fingerprint set of \S\ref{sec:experiments}). Calibration corpus: $128$ sequences $\times$ $1024$ positions $= N = 131{,}072$ tokens drawn from a cascading loader (WikiText-$103$ validation split when locally cached, falling back to allenai/c4 validation shards otherwise; both corpora give Schur readings agreeing to $|\Delta_\gamma| < 0.01$ on the $4$ models we ran on both). The token budget is chosen to push finite-$N$ Marchenko--Pastur drift below the algebraic signal at the largest Pythia widths. Activations captured at the QKV input of every transformer block; per-block $A_\ell = X_\ell^\top X_\ell / N$ in fp$64$, then $R(h)$ along $u = \gamma^{-1}/\|\gamma^{-1}\|$ read from each block's pre-attention LN affine. Each Pythia LayerNorm model runs across $5$ random-init seeds $\{42, 142, 242, 342, 442\}$ (\texttt{torch.manual\_seed(seed)}, \texttt{AutoModelForCausalLM.from\_config(cfg)}); the trained-checkpoint reading is from the released weights. RMSNorm models, where no algebraic baseline exists to test, run at single-seed $42$ only.

\emph{Noise-floor recovery for blocks where both $\lambda_{\min}(A_\ell)$ and $(A_\ell)_{u,u}$ sit at the fp$64$ cov floor.} The framework predicts $\lambda_{\min}(A_\ell)$ along $\gamma^{-1}$ approaches zero algebraically at random init (LN's kernel direction). Under fp$64$ covariance accumulation a near-zero eigenvalue can return as a small negative number from accumulator-order rounding. When this happens both $\lambda_{\min}(A_\ell)$ and $(A_\ell)_{u,u}$ sit at the same numerical floor with the same sign and magnitudes below $10^{-4}$; we treat the pair as evidence for the algebraic prediction and recover $\Delta_\gamma = 1 - |\lambda_{\min}(A_\ell)| / |(A_\ell)_{u,u}|$. With this recovery applied, the per-block measurement covers all sites at every seed; without it, sites where the prediction holds tightest are excluded by the fp$64$ floor and the pooled summary is biased upward. Per-model $\Delta_\gamma$ in Tab.~\ref{tab:schur_pole_c} is the pooled median across all (seed, block) measurements with this recovery applied.

\emph{The Pythia random-init Schur ratio holds within numerical floor of the algebraic prediction at $h \ge 768$.} The pooled-median $\Delta_\gamma$ across $5$ seeds and all attention sites reads $-0.025$ to $-0.010$ for Pythia~$160$M, $410$M, $1$B, and $2.8$B, with IQRs entirely within $[-0.05, +0.01]$ (Tab.~\ref{tab:schur_pole_c}). Negative pooled medians occur because at the noise floor $|\lambda_{\min}(A_\ell)|$ fluctuates around $|(A_\ell)_{u,u}|$ with random sign of the difference; the IQR captures the noise band. Pythia-$70$M ($h{=}512$, $6$ blocks) reads pooled median $+0.232$ with IQR $[+0.08, +0.30]$: the small block count and wider eigengap at this size give a measurable non-zero $\Delta_\gamma$ that sits above the noise floor. The framework's prediction $R(\gamma^{-1}) \to 1$ at random init is met to numerical precision for $h \ge 768$ and within $\sim\!0.2$ at $h{=}512$.

\emph{Training opens dead directions \emph{below} the LN-kernel algebraic baseline.} At every Pythia checkpoint at the trained weights, $\Delta_\gamma(h) \ge 0.999$ across all sizes (Tab.~\ref{tab:schur_pole_c}); $R(h)$ along the framework-predicted direction has collapsed to near-zero. At trained Pythia-$70$M block~$0$ on the QKV input, $\lambda_{\min}(A) = 1.96 \times 10^{-3}$ while $(A)_{\gamma^{-1}, \gamma^{-1}} = 2.40$: the bottom eigenvector of $A$ has eigenvalue $1.2 \times 10^3 \times$ smaller than the projection along $\gamma^{-1}/\|\gamma^{-1}\|$. The algebraic-kernel direction is no longer the deepest dead direction; training has produced additional dead directions below the architectural-algebraic baseline. The trained-vs-random gap in $\Delta_\gamma$ is a forward-pass-only diagnostic of how much singular structure has accumulated below the algebraic null on a given checkpoint.

\emph{RMSNorm models confirm the no-universal-kernel differential.} The five RMSNorm families in the sweep (Qwen$2.5$-$0.5$B, LLaMA-$2$-$7$B, Gemma~$2$/$3$/$4$; six models total) have $\Delta_\gamma(h) \approx 0.80$--$0.97$ at random init, the framework's $R(\gamma^{-1}) = 1$ prediction does not hold at random init, in contrast to the LayerNorm models. Trained checkpoints saturate at $\Delta_\gamma \approx 0.99$--$1.00$, confirming that the $\gamma^{-1}$ direction is no closer to a kernel of RMSNorm's forward map than a generic direction. RMSNorm has no orthogonal-to-$\mathbf{1}$ algebraic constraint (\theorycitep, Prop.~63), so $\gamma^{-1}$ is not a near-null direction of its forward map; the negative companion proposition is the framework's prediction here, and the data confirm it. The $11$-model sweep therefore separates cleanly along the LN/RMSNorm dichotomy: LayerNorm models permit a clean random-init test of the algebraic null and a clean trained-checkpoint signal of training-induced structure; RMSNorm models do not admit the same baseline because the underlying algebraic constraint is absent.

\emph{What the diagnostic enables.} The Schur-ratio along the framework's predicted direction acts as a per-checkpoint, per-block algebraic reference for LayerNorm-equipped models. At random init, $R(\gamma^{-1})$ is the algebraic baseline ($\to 1$ to within fp$64$ cov noise floor); on the trained checkpoint, the gap $\Delta_\gamma^{\mathrm{trained}} - \Delta_\gamma^{\mathrm{rand}}$ is a forward-pass-only diagnostic of how much singular structure has accumulated below the algebraic null. No gradient access is required; no optimiser-state hooks. This sharpens the $\sigma_{\min}$ depth-invariance fingerprint of (b) into a per-block, per-checkpoint scalar diagnostic with two endpoints fixed by the framework. Identification of the trained-network's deepest direction (which is no longer $\gamma^{-1}$), its stability across batches, and its correlation with massive-activation features are open follow-on probes.

\begin{figure}[ht]
\centering
\includegraphics[width=0.49\columnwidth]{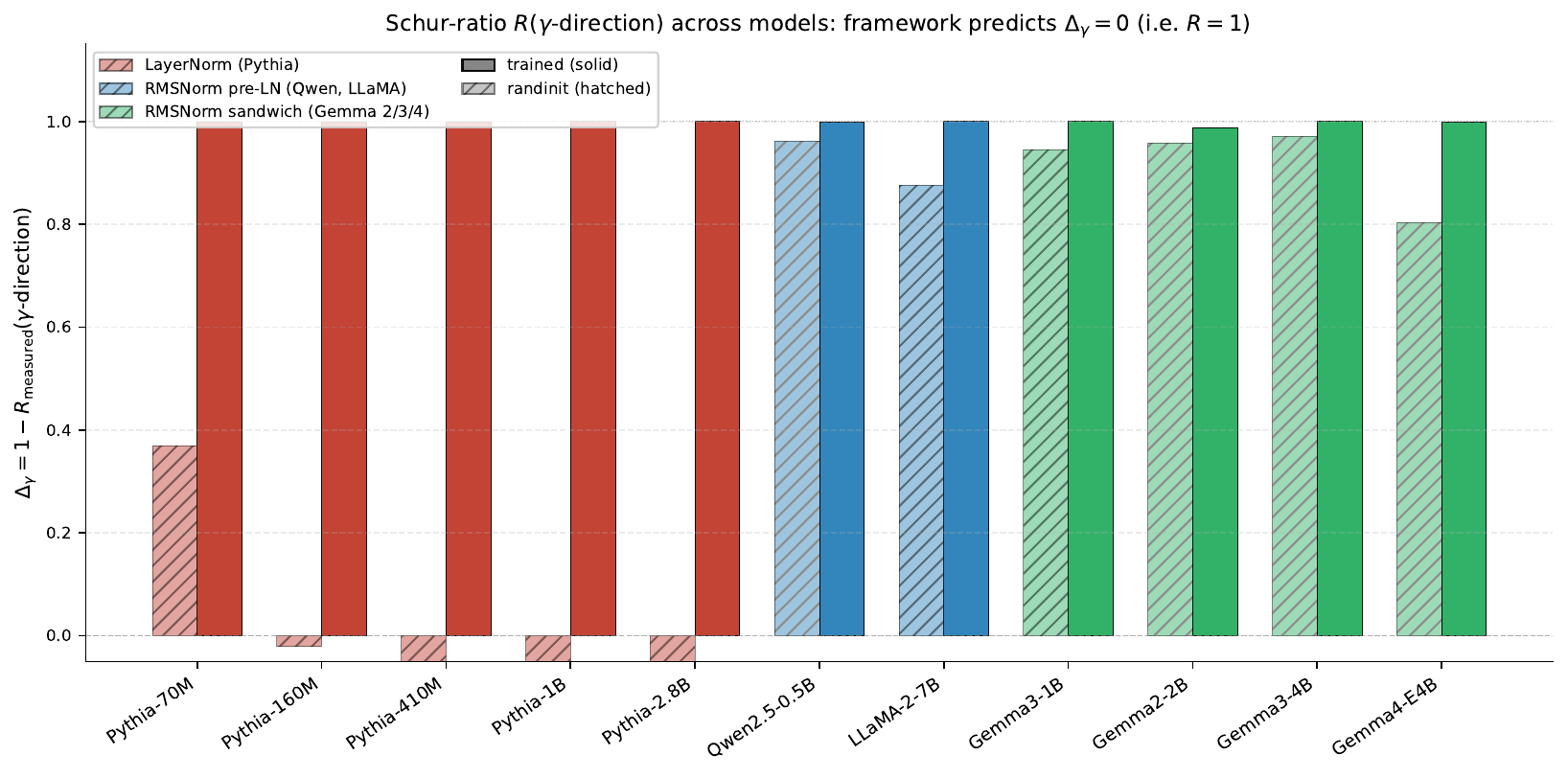}\hfill
\includegraphics[width=0.49\columnwidth]{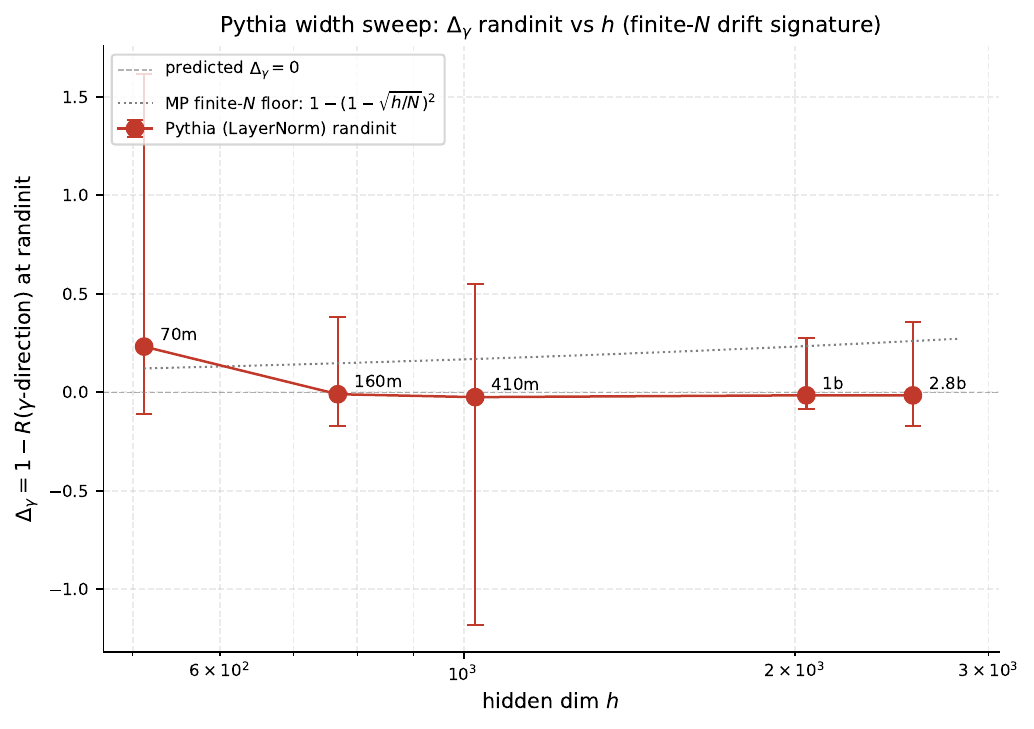}
\caption{Schur-ratio $\Delta_\gamma = 1 - R(h)$ along $u = \gamma^{-1}/\|\gamma^{-1}\|$ at the QKV input (Type~A site). \textbf{Left:} cross-model summary (random-init and trained-checkpoint pairs); Pythia random-init values are $5$-seed pooled medians with noise-floor recovery applied at sites where both $\lambda_{\min}(A_\ell)$ and $(A_\ell)_{u,u}$ sit at the fp$64$ cov floor. LayerNorm Pythia at random init clusters at $\Delta_\gamma \approx 0$ for $h \ge 768$ (the framework's algebraic prediction); trained Pythia saturates at $\Delta_\gamma \to 1$, reflecting training-induced dead directions below the LN-kernel baseline; RMSNorm models do not exhibit the random-init dip, consistent with Prop.~63. \textbf{Right:} Pythia width sweep at random init; the $h{=}512$ bin sits above the noise-floor cluster reflecting the wider per-block eigengap at that small block count.}
\label{fig:schur_pole_c}
\end{figure}
 
\paragraph{Mechanism diagnosis on the Gemma \texorpdfstring{$\sigma_{\min}$}{sigmamin} anomaly.}
\label{app:experiments:gemma_diagnosis}
The Gemma$4$-$31$B (depth-decreasing) and Gemma$4$-E$4$B (intermediate-dip) $\sigma_{\min}$ failures above are the most pronounced empirical signal in the LLM-scale fingerprint. A family discriminator across Gemma~$1$ through Gemma~$4$, run under the same protocol as the main table (fp$32$ forward, fp$64$ cov, bot-$k$ $32$, $n/d \ge 10$, referenced against the true input embedding $X_0$), localises the failure to the Gemma~$4$ release: pre-LN Gemma~$1$ and Peri-LN \citep{KimLeeKim25_PeriLN} Gemma~$2$/$3$ all preserve $\sigma_{\min}$ (Tab.~\ref{tab:gemma_family_ablation}), and only Gemma~$4$ fails. Sharing the Peri-LN block pattern (RMSNorm both before and after the sublayer; \citealp{Gemma2_2024_technical_report}, Tab.~1; \citealp{Gemma3_2025_technical_report}) is therefore not sufficient for the failure: Gemma~$2$/$3$ use it and pass, and the failure does not track the normalization placement at all, since pre-LN Gemma~$1$ passes too. Corollary~58's no-cancellation hypothesis (Rem.~59) requires that the residual-branch contribution not cancel the identity at the bottom-of-active-spectrum direction; on Gemma~$4$ the residual stream nonetheless loses $\sim 30 \times$ in $\sigma_{\min}$ at the mid-depth dip. We test five candidate mechanisms by which the no-cancellation hypothesis could fail on Gemma~$4$.

\textbf{$\gamma_{\mathrm{post}}$-kernel formation in the post-norm delta (rejected).} The hypothesis is that the post-norm RMSNorm's effective scale $\gamma_{\mathrm{post}}^{\mathrm{eff}} = 1 + \mathrm{weight}$ (Gemma's $(1 + \gamma)$ reparameterisation, \citealp{Gemma4_2026_technical_report}) develops near-zero coordinates at the affected blocks, creating an algebraic kernel direction analogous to Proposition~63's LN kernel acting in the post-norm delta. Reading $\gamma_{\mathrm{post}}^{\mathrm{eff}}$ off the affine parameter at all four norm positions per block of Gemma$4$-$31$B (no forward pass, safetensors-only): no coordinate satisfies $|\gamma_{\mathrm{post}}^{\mathrm{eff}}|_{\min} < 0.1 \cdot \overline{|\gamma_{\mathrm{post}}^{\mathrm{eff}}|}$ at any block; concentration ratio $\max(\gamma)/\overline{|\gamma|}$ is mild (median $1.17$ in mid-section blocks $9$--$55$, $1.30$ at end blocks); the Spearman correlation between $\mathrm{post\_mlp}$ concentration and $\log \sigma_{\min}(X_{\ell+1})/\sigma_{\min}(X_0)$ is $+0.74$, the opposite sign of the kernel-formation prediction. The hypothesis is rejected for Gemma$4$-$31$B; Proposition~63 predicts no $\gamma$-derived universal kernel direction for RMSNorm, consistent with this.

\textbf{Per-block residual-vs-delta principal-angle alignment (rejected on both Gemma$4$ checkpoints with different fingerprints).} The hypothesis is that the trained network has structurally aligned the bottom-$k$ subspace of $X_\ell$ with a high-magnitude subspace of the post-norm delta $D_\ell$ at the affected blocks, with anti-correlated cross-row inner products on the aligned direction. Test: a $100 \times 2048$ C$4$ calibration forward, hooking the residual stream and per-block deltas at architecture-aware positions ($\mathrm{post\_attention\_layernorm}$ and $\mathrm{post\_feedforward\_layernorm}$ outputs for Gemma; $\mathrm{attention.dense}$ and $\mathrm{mlp.dense\_4h\_to\_h}$ for Pythia GPT-NeoX; $\mathrm{self\_attn.o\_proj}$ and $\mathrm{mlp.down\_proj}$ for Qwen). Centred covariance accumulated in fp$64$; bottom-$k=8$ subspaces via $\texttt{torch.linalg.eigh}$ on CPU. Per-model summary in Tab.~\ref{tab:gemma_diagnosis_perblock}. Gemma$4$-E$4$B has elevated alignment (max mean $\cos = 0.77$) but Spearman$(\log r, \overline{\cos}) = +0.25$, alignment goes \emph{up} with $\sigma_{\min}$, the opposite of the cancellation-hypothesis sign. Gemma$4$-$31$B has universally low alignment (max mean $\cos = 0.46$, never crossing the $0.6$ cancellation-pathology threshold); the Spearman has the cancellation sign ($-0.53$), but the most-collapsed block ($52$, ratio $0.020$) has mean $\cos = 0.12$, among the \emph{lowest} alignments. The negative correlation is driven by moderate-alignment middle blocks, not by the deeply-collapsed end. The two checkpoints exhibit \emph{different} per-block alignment fingerprints yet both show the $\sigma_{\min}$ collapse, ruling out any fixed per-block alignment mechanism that would scale with architecture.

\textbf{Sandwich-norm / Peri-LN architectural pattern as cause (rejected).} The hypothesis is that the post-norm wrapping of the sublayer output is the architectural feature that produces $\sigma_{\min}$ collapse. Test: run the residual-stream $\sigma_{\min}$ fingerprint on the Gemma family, comparing Gemma~$1$ \citep{Gemma_2024_technical_report} (\emph{pre-LN only}, no post-norm wrapping) against Gemma~$2$/$3$ (Peri-LN sandwich-norm) and Gemma~$4$. The ablation is uniquely clean within this family: same author group, same RMSNorm + GeGLU + RoPE + GQA foundation, same training-data lineage; the post-norm wrapping is the architecturally-relevant change between Gemma~$1$ and Gemma~$2$. Tab.~\ref{tab:gemma_family_ablation} reports the per-model fingerprint under the corrected fp$32$-forward + fp$64$-cov + $X_0$-reference protocol. Pre-LN Gemma~$1$-$2$b/$7$b and Peri-LN Gemma~$2$-$2$b, Gemma~$3$-$1$b/$4$b/$12$b all preserve $\sigma_{\min}$ at every block (min ratio $\ge 1.8$, output $9.9$--$632\times$); only the Gemma~$4$ release fails. The failure does not track the normalization placement: pre-LN Gemma~$1$ and Peri-LN Gemma~$2$/$3$ all pass, and the failure appears only in the Gemma~$4$ release, so the post-norm wrapping is not sufficient to produce it. The depth-cumulative version of the hypothesis (compression scaling with $L$) is independently rejected by Gemma~$3$-$12$b at $L = 48$, the deepest non-Gemma-4 model in the set, achieving $612\times$ $\sigma_{\min}$ growth from the embedding to its output.

\textbf{Centred-cov estimator artefact via sink-token / register-token bias (rejected).} The hypothesis is that RMSNorm-equipped models concentrate massive activations at register / BOS tokens \citep{SunMassiveActivations24}, and the centred-covariance estimator's per-coordinate mean subtraction is overwhelmed by a small fraction of dominant tokens, with the bottom-of-spectrum reading off the off-cluster minority's spread rather than the residual stream's geometric rank. Test: a single forward pass on Gemma$4$-E$4$B with six covariance estimators streamed in parallel per block, centred (baseline), sink-excised (per-chunk top-$1\%$ by $\|X\|$ removed), median-anchored (per-coordinate per-chunk median subtracted instead of mean), and three position-stratified streams (BOS-only, first-non-BOS, other-positions $\ge 2$). The collapse persists across the four estimators with well-defined $\sigma_{\min}$: minimum ratio $0.069$ (centred), $0.072$ (sink-excised), $0.069$ (median-anchored), $0.069$ (other-positions); frac$\ge 1$ = $0.50$ in all four. Removing the top $1\%$ of tokens by $\|X\|$ does not flatten the collapse; the median-anchored estimator agrees with centred to three significant figures; the bulk-position stream is identical to centred.

\begin{table}[ht]
\centering\small
\caption{Per-block geometric mechanism diagnosis ($\gamma$-kernel + principal-angle hypotheses). ``min ratio $\sigma_{\min}(X_\ell)/\sigma_{\min}(X_0)$'' is the worst block; ``frac $\ge 1$'' the fraction of blocks satisfying the strict prediction. \emph{The min-ratio / frac-$\ge 1$ values here come from the principal-angle E2 protocol ($n/d{=}80$, bf$16$ forward, block-$0$-output reference, distinct from the canonical fp$32$-forward + fp$64$-cov, $X_0$-referenced pipeline of \S\ref{app:experiments:scale}); the canonical values are $0.031$ at block $23$ (E4B) and $0.032$ at block $53$ ($31$B) (Tab.~\ref{tab:gemma_family_ablation} below). The two protocols agree on the qualitative collapse pattern; the magnitude and exact block index differ with the sample budget, forward precision, and reference.} $\rho_\gamma$: post\_mlp $\gamma$-concentration Spearman against $\log$ ratio (positive sign rejects $\gamma$-kernel formation). Principal-angle columns: maximum single-direction principal-angle cosine across blocks; Spearman $\rho$ between $\log$ ratio and mean alignment cosine across blocks (negative sign would support a single-block cancellation mechanism).}
\label{tab:gemma_diagnosis_perblock}
\begin{tabular}{l|c|c|c|c|c|c}
\toprule
Model (architecture) & $L$ & min ratio & frac $\ge 1$ & $\rho_\gamma$ & max $\overline{\cos}$ & $\rho_{\overline{\cos}}$ \\
\midrule
Pythia-$1$B (LN, parallel residual)   & $16$ & $6.00$ @$0$    & $1.00$  & n/a     & $0.39$ & $-0.16$ \\
Qwen$3.5$-$4$B (RMSNorm pre-LN)        & $32$ & $6.93$ @$0$    & $1.00$  & n/a     & $0.26$ & $-0.11$ \\
Gemma$4$-E$4$B (Peri-LN sandwich)     & $42$ & $0.027$ @$22$  & $0.38$  & $+0.27$ & $0.77$ & $+0.25$ \\
Gemma$4$-$31$B (Peri-LN sandwich)     & $60$ & $0.020$ @$52$  & $0.27$  & $+0.74$ & $0.46$ & $-0.53$ \\
\bottomrule
\end{tabular}
\end{table}

\begin{table}[ht]
\centering\small
\caption{Architectural ablation within the Gemma family, under the corrected protocol of the main table (fp$32$ forward, fp$64$ cov, bot-$k$ $32$, $n/d \ge 10$, referenced against the true input embedding $X_0$). Gemma~$1$ is pre-LN only; Gemma~$2$/$3$/$4$ use the Peri-LN sandwich-norm pattern (\citealp{KimLeeKim25_PeriLN}; \citealp{Gemma2_2024_technical_report}, Tab.~1; \citealp{Gemma3_2025_technical_report}). Pre-LN Gemma~$1$ and Peri-LN Gemma~$2$/$3$ all pass across $L = 18\!-\!48$; only the Gemma~$4$ release fails. ``min ratio'' is the worst block's $\sigma_{(r_0)}(X_\ell)/\sigma_{(r_0)}(X_0)$; ``output'' the last-block ratio; ``MLP-locus'' the count of blocks whose $\sigma_{\min}$ drop sits on the MLP sub-layer ($\sigma_{\min}$ lower at the block output than after the attention sub-layer).}
\label{tab:gemma_family_ablation}
\begin{tabular}{l|l|c|c|c|c|c|c}
\toprule
Model & Norm placement & $L$ & min ratio & blocks $< X_0$ & output & MLP-locus & verdict \\
\midrule
Gemma-1-2b     & pre-LN   & $18$ & $1.77$ & $0/18$ & $9.86$ & $6/18$ & pass \\
Gemma-1-7b     & pre-LN   & $28$ & $2.91$ & $0/28$ & $10.55$ & $1/28$ & pass \\
Gemma-2-2b     & Peri-LN  & $26$ & $8.19$ & $0/26$ & $40.93$ & $5/26$ & pass \\
Gemma-3-1b     & Peri-LN  & $26$ & $2.54$ & $0/26$ & $350.86$ & $11/26$ & pass \\
Gemma-3-4b     & Peri-LN  & $34$ & $3.14$ & $0/34$ & $632.41$ & $1/34$ & pass \\
Gemma-3-12b    & Peri-LN  & $48$ & $3.34$ & $0/48$ & $612.20$ & $3/48$ & pass \\
Gemma-4-E2B    & Peri-LN  & $35$ & $0.005$ & $27/35$ & $0.20$ & $26/35$ & fail \\
Gemma-4-E2B-it & Peri-LN  & $35$ & $0.006$ & $35/35$ & $0.11$ & $29/35$ & fail \\
Gemma-4-E4B    & Peri-LN  & $42$ & $0.031$ & $24/42$ & $1.08$ & $28/42$ & fail \\
Gemma-4-E4B-it & Peri-LN  & $42$ & $0.008$ & $42/42$ & $0.31$ & $30/42$ & fail \\
Gemma-4-31B    & Peri-LN  & $60$ & $0.032$ & $37/60$ & $0.05$ & $40/60$ & fail \\
 \bottomrule
\end{tabular}
\end{table}

\textbf{Post-pretraining stage as source (rejected).} All five Gemma~$4$ variants fail against $X_0$ at the canonical protocol (Tab.~\ref{tab:gemma_family_ablation}): the three base (pretrained) checkpoints E2B, E4B, $31$B and the two instruct checkpoints E2B-it, E4B-it. The base checkpoints fail before any instruct / RLHF / distillation post-training, so the failure originates in pretraining itself. Instruct-tuning does not remove it; the instruct variants fail at least as pervasively as their base counterparts (E2B-it $35/35$ blocks below $X_0$ against E2B's $27/35$; E4B-it $42/42$ against E4B's $24/42$).

The failure is universal across the Gemma~$4$ release at every tested size ($5$B E2B, $8$B E4B, $32$B $31$B) and in both base and instruct form, with min ratio $0.005$--$0.032$. The recovery pattern varies: E4B (base) dips and recovers to $1.08\times$ by the output (intermediate-dip), while E2B, $31$B, and both instruct variants end below $X_0$ (depth-decreasing). Across the family the verdict is unambiguous: only the Gemma~$4$ release fails, while pre-LN Gemma~$1$ ($2$b, $7$b) and Peri-LN Gemma~$2$/$3$ all pass at $L = 18\!-\!48$.

\textbf{Where the collapse sits: the MLP sub-layer (positive localization).} The sub-layer-resolved fingerprint taps $\sigma_{\min}$ at the block input, after the attention sub-layer, and at the block output, localizing each $\sigma_{\min}$ drop to attention (input$\to$mid) or MLP (mid$\to$output). In both failing Gemma~$4$ checkpoints the drops concentrate on the MLP: $28/42$ blocks lower $\sigma_{\min}$ across the MLP on E4B against $2/42$ across attention, and $40/60$ against $6/60$ on $31$B (Tab.~\ref{tab:gemma_family_ablation}, MLP-locus column). At the minimum-ratio block the MLP sub-layer collapses the effective rank under a massive-activation write (E4B block $23$: eff-rank $47 \to 16$, $\sigma_{\max} \approx 1.2 \times 10^4$; $31$B block $53$: $27 \to 15$, $\sigma_{\max} \approx 1.7 \times 10^4$). The MLP write that concentrates residual-stream variance into a few outlier directions \citep{SunMassiveActivations24} cancels variance along the pre-existing bottom direction, the no-cancellation hypothesis (Rem.~59) failing locally at an MLP sub-layer. The same MLP-driven local cancellation appears sub-threshold in the passing releases (the MLP-locus counts are nonzero on Gemma~$1$/$2$/$3$), but there it stays above $X_0$; the Gemma~$4$ recipe is the regime where it goes pervasively below.

\textbf{Surviving candidates.} With post-pretraining ruled out, two candidates remain for the cause; the first (recipe difference) is not separately tested, the second (calibration distribution) we resolve directly below:
\begin{enumerate}\itemsep=0pt
\item \emph{Pretraining-recipe difference between Gemma~$4$ and Gemma~$2$/$3$.} Recipe and architectural variables drift jointly across Gemma versions: logit soft-capping (Gemma~$2$ only), QK-norm (Gemma~$3$/$4$), multimodal pretraining (Gemma~$4$), distillation schedules, sliding-window attention parameters, GQA head-group ratios, tied-embedding scaling factors, rotary base values. The failure could be driven by any subset that incidentally tracks the dichotomy; we have directly ruled out only the post-norm wrapping and the post-pretraining stage. Discriminator: per-feature ablation across the family, requires inspecting each architectural / training-recipe variable for its fail/pass alignment.
\item \emph{Calibration-distribution interaction.} The fingerprint protocol uses text-only C$4$; Gemma~$4$ is multimodal and its residual stream geometry on text-only calibration may be unrepresentative of its in-domain regime. We carry out this discriminator below: multimodal recalibration resolves the dichotomy (E4B distribution-relative, $31$B intrinsic).
\end{enumerate}

\textbf{Calibration-distribution interaction: tested.} Gemma~$4$ is encoder-free natively multimodal: the vision tower, and on E4B an audio tower, feed soft tokens directly into the shared backbone, so text-only calibration leaves those soft-token subspaces dormant. We re-run the $\sigma_{\min}$ fingerprint with image and image$+$audio calibration (ImageNet-$1$k validation images and LibriSpeech audio through the model's processor, the same fp$32$-forward $+$ fp$64$-cov observable). The two checkpoints separate (Tab.~\ref{tab:gemma_mm_calib}). On the tri-modal E4B the dip dissolves as modalities are added: output $\sigma_{\min}$ rises from $6.1$ under image to $9.9$ under image$+$audio, the worst-block $\sigma_{\min}$ rises with it, and re-projecting the text-only bottom singular direction onto the image$+$audio stream lifts it $2.2$--$8.7\times$ at depth (cosine $0.97$ with the image$+$audio bottom at the dip block). The text-only minimum direction is the activated multimodal subspace, so E4B's dip is a distribution-relative dead direction and the model carries no intrinsic dead direction. On the bi-modal $31$B (vision$+$text, with no audio tower: \texttt{audio\_config} null, zero audio weights) image calibration lifts the text-only direction by $2.8$--$4.9\times$, yet a residual near-dead direction remains at block $52$. It holds across sample budget ($\sigma_{\min} = 0.013$ at $n/d \approx 10$, $0.020$ at $n/d \approx 20$) and sits $7.9\times$ above the fp$32$ measurement floor at full rank, identifying a genuine intrinsic near-dead direction under $31$B's full available distribution.

\textbf{Measurement floor.} $\sigma_{\min}$ is resolved only down to the fp$32$ forward floor $\sigma_{\max}^{\mathrm{raw}}\cdot\varepsilon_{32}$ with $\varepsilon_{32} \approx 1.2\times10^{-7}$: fp$32$ roundoff acts on the raw massive activations, and the fp$64$ covariance corrects accumulation order without recovering signal beneath it. A truly dead direction reads at this floor with active rank $r_0$ below full; $31$B's residual reads $7.9\times$ above it at full rank ($r_0 = 5376$), placing it clear of the floor as resolved signal. Probing beneath the floor would need an fp$64$ forward, which trips Gemma's rotary-embedding fp$32$ assertion and exceeds memory at $31$B scale.

\begin{table}[ht]
\centering\small
\caption{Multimodal recalibration of the Gemma~$4$ $\sigma_{\min}$ dip (real ImageNet / LibriSpeech inputs through the processor, fp$32$ forward $+$ fp$64$ cov, $X_0$ reference). Text-only baselines: E4B dips to ratio $0.031$ at block $23$ and recovers; $31$B reduces $\sim\!18\times$ at the output. \textbf{Left (E4B, tri-modal):} adding modalities raises both worst-block and output $\sigma_{\min}$, dissolving the dip (distribution-relative). \textbf{Right ($31$B, bi-modal, no audio):} the worst-block residual is stable across sample budget and stays above the fp$32$ floor at full rank (intrinsic). ``smin/floor'' $= \sigma_{\min}/(\sigma_{\max}^{\mathrm{raw}}\varepsilon_{32})$ at the worst block.}
\label{tab:gemma_mm_calib}
\begin{tabular}{l|cc}
\multicolumn{3}{c}{\textbf{E4B} (text$+$image$+$audio): dip dissolves}\\
\toprule
calibration & worst-block $\sigma_{\min}$ & output $\sigma_{\min}$\\
\midrule
$+$image & $0.13$ & $6.10$\\
$+$image$+$audio & $0.24$ & $9.85$\\
\bottomrule
\end{tabular}
\quad
\begin{tabular}{l|cc}
\multicolumn{3}{c}{\textbf{$31$B} (text$+$image): residual persists}\\
\toprule
$n/d$ & worst-block $\sigma_{\min}$ & smin/floor\\
\midrule
$\approx 10$ & $0.013$ & --\\
$\approx 20$ & $0.020$ & $7.9$ (full rank)\\
\bottomrule
\end{tabular}
\end{table}

\textbf{What the diagnosis establishes.} The $\sigma_{\min}$ failure is empirically dissociated from five named mechanisms: $\gamma_{\mathrm{post}}$-derived kernel formation; per-block residual-delta principal-angle cancellation; the Peri-LN sandwich-norm architectural pattern itself; centred-cov sink-token estimator bias; and post-pretraining stage as source. The family discriminator isolates it to the Gemma~$4$ release: pre-LN Gemma~$1$ and Peri-LN Gemma~$2$/$3$ all pass against $X_0$, so the failure is dissociated from the normalisation placement, and the sub-layer fingerprint localises it to the MLP. Multimodal recalibration resolves the calibration-distribution candidate: E4B's text-only dip is a distribution-relative dead direction that dissolves under its full text$+$image$+$audio distribution, while $31$B retains a genuine intrinsic near-dead direction above the fp$32$ measurement floor; the recipe-difference candidate is not separately tested. Corollary~58 holds cleanly on Pythia-$1$B, Qwen$3.5$-$4$B, Gemma~$1$-$2$b/$7$b, Gemma~$2$-$2$b, Gemma~$3$-$1$b/$4$b/$12$b and the broader $12/14$-pass cross-architecture set; its no-cancellation hypothesis (Rem.~59) is checkpoint-failable, with the failure on the Gemma~$4$ release characterised at both re-run sizes (E4B recovers to the output, $31$B does not) and localised to the MLP sub-layer.
 
\subsection{Cross-architecture feasibility (vision-class widths)}
\label{app:experiments:cross_arch}

The cost-information trade-off characterised in \S\ref{sec:experiments}(a) shifts at smaller architecture widths. At vision-transformer scale ($h \le 1024$), the per-layer $\lambda_{\min}(G_\ell)$ requirement of $n \ge 100 h$ samples is $\le 102{,}400$ samples per layer, which at batch size $128$ is $\le 800$ FBPs per layer, real-time per checkpoint on a single GPU. All four observables therefore operate at the same always-on cadence on vision-class architectures, in contrast to the LLM regime where the full-spectrum $\lambda_{\min}(G)$ measurement shifts to offline cadence owing to cubic scaling in $h$. The six LN-based vision transformers reported in Table~\ref{tab:pythia_sigma_min_app} ($h \in \{384, 768, 1024\}$, $12$--$24$ blocks) have $\sigma_{\min}$ fingerprint wall-clock under $1$\,s per model on a single 3090; full-spectrum $\lambda_{\min}(G)$ eigendecomposition remains feasible at this width. The $14$-model set already spans LN/RMSNorm $\times$ language-CE / image-SSL / contrastive / predictive-SSL combinations (\S\ref{app:experiments:scale}); diffusion UNets and audio encoders are natural targets for the same fingerprint protocol that we have not yet exercised.

\subsection{Compute and reproducibility}
\label{app:experiments:compute}

All experiments run on a single workstation: AMD Ryzen Threadripper PRO 9955WX (16c/32t), 256\,GB DDR5 ECC, $4 \times$ NVIDIA RTX 3090 (24\,GB). Total compute to reproduce the experiments reported in this paper: $\sim$$17$ GPU-hours, dominated by the Gemma $\sigma_{\min}$ mechanism diagnostic across the Gemma $1$/$2$/$3$/$4$ family (fp$64$ cov on $5$+ models with multiple variants).

The experiments reported in the body and bound appendix are: the LLM scale-pipeline ($14$-model $\sigma_{\min}$ fingerprint sweep, fp$32$/fp$64$ cov by $\sigma_{\max}$ regime; \S\ref{app:experiments:scale}); the Gemma $\sigma_{\min}$ mechanism diagnostic across the Gemma $1$/$2$/$3$/$4$ family with base/instruct contrast, size-scaling sweep, residual-stream principal-angle and centered-vs.-sink-excised variants (\S\ref{app:experiments:gemma_diagnosis}); the Schur-ratio $R(h)$ cross-model sweep on $11$ pretrained models (Pythia LayerNorm sizes run at $5$ random-init seeds and two token budgets $N \in \{32{,}768, 131{,}072\}$ with noise-floor recovery; \S\ref{sec:exp:scale:schur}, \S\ref{app:experiments:schur}); and the ViT FFN fine-tuning trajectory test ($5$ seeds DINOv$2$-base on ImageNet-1k first-100; \S\ref{app:experiments:scale}).

Per-experiment wall-clock and GPU usage is recorded in each result JSON's metadata block (\texttt{run\_started\_at}, \texttt{gpu\_model}, \texttt{cuda\_visible\_devices}, plus driver-specific timing fields where relevant); aggregate per-sweep timing is reproducible from the launcher logs under the corresponding \texttt{results/} directory. The trajectory-rate validation experiments (parametric autoencoder rate-fits, canonical-bridge ablations, $L{=}4$ noisy-bridge rate-chain validation, Barak / Nanda grokking sweeps, the Aoyagi closed-form RLCT anchor) are out of scope for this paper.

\paragraph{Code release.}

The experiment scripts, the $\sigma_{\min}$ core library, scale-pipeline tooling, Schur-ratio probe, Gemma-family diagnostic scripts, result JSONs, and figure-generation scripts will be released publicly on GitHub under an open-source license; in the interim they are available from the authors on request.

Per-experiment seed lists and exact hyperparameters are documented in each script's header.

\paragraph{Reproducibility audit.} Every numerical claim in the paper is anchored to a specific JSON via a \texttt{\% TRANSCRIBED FROM:} header comment in the originating chunk. A canonical audit script verifies, in one command, that every cited path resolves on disk, every \texttt{\textbackslash includegraphics} target is present, and headline numbers ($|\cos|$ on the $9$ LN models and $5$ RMSNorm models, $\sigma_{\min}$ depth-invariance pass-rate with documented exceptions, the Schur-ratio $\Delta_\gamma$ random-init vs trained gap on the Pythia size sweep, and the Gemma family ablation pattern) reproduce within tolerance from their cited JSONs:
\begin{verbatim}
python scripts/audit/verify_paper.py
\end{verbatim}
The script also runs as a pre-build gate (\texttt{empirical\_ln\_kernel/src/build.sh} invokes \texttt{verify\_paper.py --paths} before compiling). Single-claim mode (\texttt{--claim NAME}) re-runs one check; \texttt{--paths} mode runs the path-and-figure existence audit only ($<1$\,s). Exit status is 0 on full pass; non-zero on any drift or missing path.
  \clearpage

\section{Empirical compendium}
\label{app:empirical_compendium}

This appendix collects extended empirical results that are referenced but
not fully reported in the main paper or the bound submission appendix.
The bound submission appendix (\S\ref{app:experiments_submission}) contains
tables and statistics for stated experiments; this appendix carries the
per-layer rate structure empirical validation at LLM scale, the
$\kappa$-stability measurement protocol description, and the Adam
non-equivariance factorial.

\subsection{Per-layer rate structure: empirical validation at LLM scale}
\label{app:per_layer_compression}

The rate formula $k_\ell = 2(L-\ell)$ predicts shallow layers have
greater dead-direction depth (larger $k_\ell$) than deep layers. To test
whether this per-layer \emph{structure} empirically holds at LLM
scale beyond parametric probes, we compare three per-layer
projection-rank allocations on TinyLlama-$1.1$B ($L{=}22$ blocks,
$h{=}2048$):

\begin{itemize}
\item \textbf{Uniform}: $k_\ell = K_\mathrm{total} / L$ (all layers equal).
\item \textbf{Theorem}: $k_\ell \propto (L-\ell)$ (shallow more, deep less).
\item \textbf{Reverse} (control): $k_\ell \propto \ell$ (opposite).
\end{itemize}

We sweep total budget $K_\mathrm{total}/(L \cdot h)$ from $5\%$ to
$25\%$ and report WikiText-103 perplexity and PIQA zero-shot
accuracy. \textbf{This is a theorem-structure validation, not a
compression method}: the goal here is to test whether the per-layer
allocation predicted by Theorem~21 measurably manifests
in pretrained-model behaviour, not to advocate any practical
compression recipe.

We sweep $5$ total compression budgets ($5\%$, $10\%$, $15\%$, $20\%$,
$25\%$ of $L \cdot h_\mathrm{model}$) on TinyLlama-$1.1$B
(Figure~\ref{fig:compression_sweep}).

\begin{figure}[ht]
\centering
\includegraphics[width=0.95\textwidth]{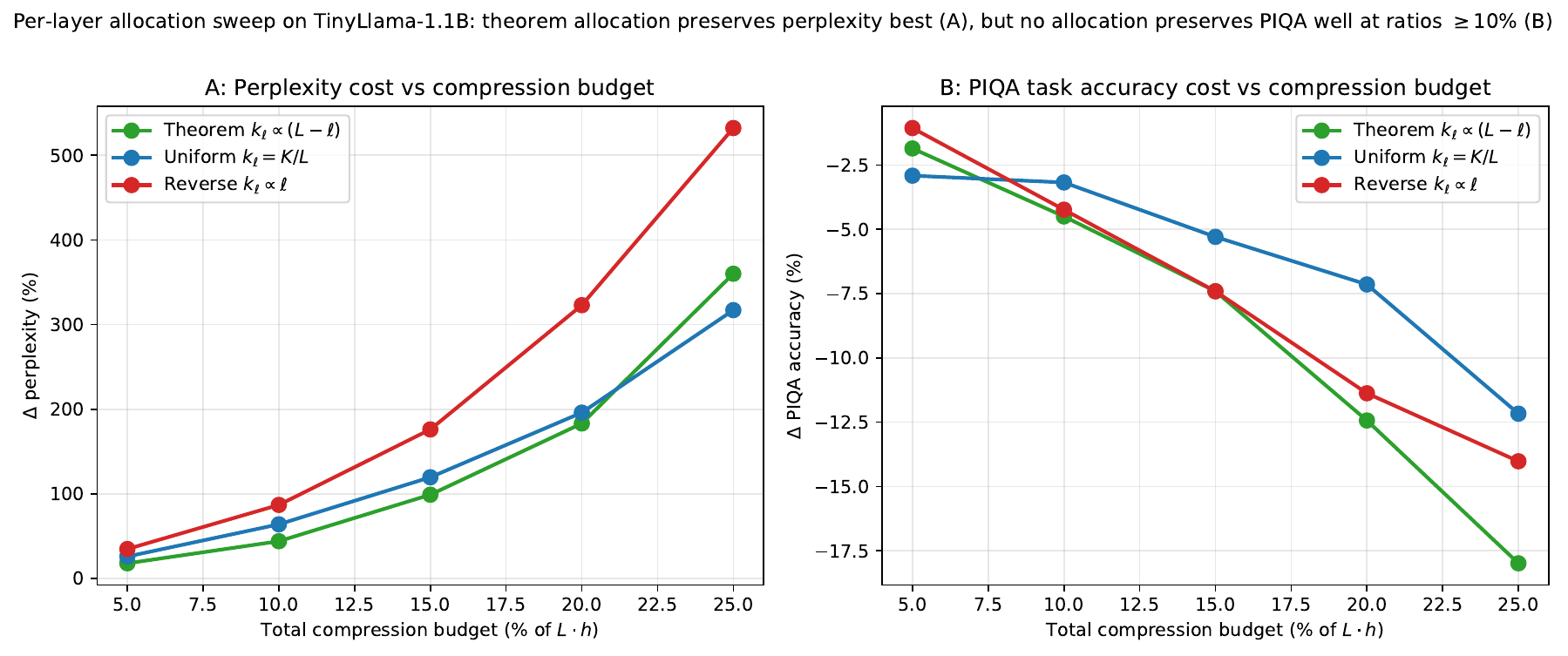}
\caption{Per-layer rate structure test on TinyLlama-$1.1$B. \textbf{A}:
The theorem allocation $k_\ell \propto (L-\ell)$ preserves perplexity
best among the three allocations at budgets $5\%$--$20\%$, with
$\sim 20$ percentage-point advantage over uniform at $10\%$ budget.
This is a direct validation that the per-layer rate structure $k_\ell
= 2(L-\ell)$ is measurable at LLM scale. \textbf{B}: Zero-shot task
accuracy (PIQA, $n{=}500$) shows no such allocation ordering, the
theorem allocation is \emph{worse} than uniform at budgets $\geq
10\%$. The rate formula is about the loss function, not downstream
task semantics.}
\label{fig:compression_sweep}
\end{figure}

\paragraph{Finding.} The theorem's per-layer structure empirically
manifests at LLM scale: the theorem allocation preserves perplexity
better than uniform or reverse in the $5$--$20\%$ budget range (up
to $20$ percentage-point ppl advantage; Figure~\ref{fig:compression_sweep}A).
This is outside cross-seed noise and is consistent with the rate
formula's prediction $k_\ell = 2(L-\ell)$, extending the parametric
validation of \theorycite to a real pretrained model.

PIQA task accuracy does not follow this ordering
(Figure~\ref{fig:compression_sweep}B): the theorem allocation is
actually worse than uniform at budgets $\geq 10\%$. The theorem
formula is a statement about loss-function dynamics near singular
minima, not about which directions carry task-specific semantics;
the per-layer rate structure manifesting on perplexity but not on
PIQA accuracy is exactly the predicted behaviour, not a
contradiction.
 
\subsection{\texorpdfstring{Measurement Protocol: Sample Requirements for Stable $\kappa$}{Measurement Protocol: Sample Requirements for Stable kappa}}
\label{app:measurement}

The condition number $\kappa(G_\ell) = \lambda_{\max}/\lambda_{\min}$ is sensitive to the accuracy of the smallest eigenvalue estimate. For a $d \times d$ sample covariance estimated from $n$ samples, the Marchenko--Pastur law predicts that $\lambda_{\min}$ is biased downward by approximately $(1 - \sqrt{d/n})^2$. When $n/d \approx 1$, this bias is severe and $\kappa$ becomes dominated by estimation noise.

We verified this empirically by measuring the same frozen model 10 times with different random data subsets at three sample counts:

\begin{center}
\small
\begin{tabular}{lrrr}
\toprule
Samples ($n$) & $n/d$ & Meas.\ CV (layer4, $d{=}512$) \\
\midrule
1{,}280 ($n_\mathrm{batches}{=}10$) & 2.5 & $110\%$ \\
6{,}400 ($n_\mathrm{batches}{=}50$) & 12.5 & $66\%$ \\
50{,}000 (full dataset) & 97.7 & ${<}1\%$ \\
\bottomrule
\end{tabular}
\end{center}

At $n/d = 2.5$, the measurement CV exceeds 100\%, so the estimate is pure noise. At $n/d \approx 100$, repeated measurements give identical results (CV ${<}1\%$). This finding has a critical implication: prior empirical work using small KFAC batches ($n_\mathrm{batches} = 5$--$10$) on architectures with large output channels ($d \geq 256$) may be unreliable.

\paragraph{Auto-scaling.} Our implementation inspects the model to find $d_{\max}$ (the largest output channel count across all tracked layers), then computes $n_\mathrm{batches} = \lceil 100 \cdot d_{\max} / \text{batch\_size} \rceil$. When this exceeds the dataset size, the full dataset is used. For CIFAR-100 (50K samples): WRN-28-2 ($d_{\max} = 64$) uses 50 batches; ResNet-18 and ViT-Small ($d_{\max} = 512$) use the full dataset.

\paragraph{Measurement isolation.} The KFAC measurement must not consume the training DataLoader's shuffle RNG, as this would change the training trajectory and produce different models depending on the measurement schedule. We use a separate non-shuffled DataLoader for all measurements, ensuring complete isolation between monitoring and training.

\paragraph{Position in the four-tier observable hierarchy.} The $n/d \ge 100$ sample budget analyzed here is the gating constraint for the \emph{periodic-tier} $\lambda_{\min}(G_\ell)$ observable. The other three tiers have different cost-vs-information trade-offs and avoid this constraint to varying degrees: the \emph{real-time tier} $\sigma_{\min}(X_\ell)$ on the residual stream (Corollary~58) is forward-pass-only and has no $n/d$ requirement (a single forward pass on $N$ tokens with $N \ge h$ yields the singular spectrum directly); the \emph{checkpoint tier} $u^\top G u$ at a fixed direction is a single backward pass on demand and is precision-limited rather than sample-limited; the \emph{offline tier} full-Fisher Lanczos partial decomposition trades much higher cost for the bottom eigenstructure. For real-time monitoring at large scale, $\sigma_{\min}(X_\ell)$ is preferred over $\kappa(G_\ell)$ specifically because the activation-side observable avoids the $n/d$ sample-budget constraint analyzed here.

\paragraph{Depth-dependence of the \texorpdfstring{$n/d$}{n/d} requirement.} A bootstrap calibration on a representative trajectory at $D{=}100$ shows the $n/d{\ge}100$ rule is a useful default at \emph{well-conditioned} checkpoints (matching the table above) but tightens as $\sigma_{\min}$ descends. At early checkpoints ($\sigma_{\min}$ within $\sim 1$ OOM of $\sigma_{\max}$) the empirical CV($\lambda_{\min}^+$) is $\le 0.13$ at $n/d{=}50$ and $\le 0.015$ at $n/d{=}100$, consistent with the table; at deep singular checkpoints ($\sigma_{\min}$ several OOM below $\sigma_{\max}$) even $n/d{=}100$ retains $\mathrm{CV} \approx 1$ on $\lambda_{\min}^+(G)$, magnitude estimates are no longer recoverable at any feasible $n$. The fp-precision floor margin $\sigma_{\min} / (\sigma_{\max} \sqrt{\varepsilon_{\mathrm{dtype}}})$ is the binding constraint in this regime; below margin $\sim 1$, no sample count helps.

\paragraph{Analytical CV bound (perturbation theory).} A testbed-agnostic CV upper bound for an isolated smallest active eigenvalue is given by first-order perturbation theory plus Davis-Kahan: $\mathrm{CV}(\lambda_{\min}^+) \le (1/\sqrt{N}) \cdot (\sigma_{\max}/\sigma_{\min})$. This bound is typically $5$--$10\times$ pessimistic on structured $G$ (gradient Fisher matrices), much tighter than the operator-norm Wishart bound $\sqrt{D/N} \cdot (\sigma_{\max}/\sigma_{\min})^2$ which is $50$--$500\times$ pessimistic on the same data. The perturbation bound is therefore the appropriate analytical fallback when no testbed-specific calibration is available; the Wishart bound is the guaranteed-valid worst-case ceiling.

\paragraph{Rank-correlation observables: a separate regime.} The depth-dependent failure of magnitude estimates does \emph{not} compromise rank-correlation observables (Spearman $\rho$ across checkpoints). The rank order of $\lambda_{\min}^+(G)$ across a trajectory is preserved even when individual magnitudes have $\mathrm{CV} \sim 1$, because rank correlation depends on monotonic co-movement, not on magnitude reproducibility. For slope fits and any other magnitude-dependent reading, the depth-aware $n/d$ requirement is the correct gate; for $\rho$-based predictions, it is conservative.

  \clearpage

\section{Optimiser \texorpdfstring{$\times$}{x} loss applicability of the empirical observables}
\label{app:scope_map_full}

We do not introduce new observables in this paper; we use existing observables from the singular-learning, KFAC, and spectral-monitoring literatures. The formal applicability conditions for the rate predictions are stated in \theorycite{}. This appendix is a defensive scope reference: it classifies each observable mentioned in the framework by the optimiser $\times$ loss combinations under which it remains a valid signal versus those under which its signal is dominated by the Adam non-equivariance failure mode (Remark~80). \emph{The main claims of this paper rely only on the static-observable rows} (residual-stream $\sigma_{\min}$, post-norm centred-cov bottom direction, Schur-ratio $R(h)$ at the predicted direction): every result is read from a single forward pass with no SGLD chain and no gradient access. The trajectory-rate rows are presented for completeness; they are not tested in this paper.

\subsection{Static vs trajectory observable applicability}
\label{app:scope:validity_matrix}

The framework has two distinct classes of observables, validated under
different conditions. Mixing them up is the most common reproducer
failure mode.

\paragraph{Static observables} (well-defined at any single checkpoint, no
trajectory required):
\begin{itemize}\itemsep=0pt
\item $\sigma_{\min}(X_\ell)$ at any layer
\item $\sigma_{\max}(X_\ell)$, effective rank, full activation SVD
\item Bottom-singular-direction coherence with $\mathbf{1}_d/\sqrt{d}$ (LN models)
\item $u^\top G u$ at a fixed direction $u$ (single backward pass), reading Fisher \emph{magnitude} but not a rate
\item Expected-Fisher spectrum at a single checkpoint
\end{itemize}
These are valid for any optimizer, any loss, any architecture. The
LLM $\sigma_{\min}$ fingerprint, the LN-kernel finding, and the
residual-DAG $\sigma_{\min}$ depth-invariance all live here.

\paragraph{Trajectory-rate observables} (require an actual approach to a
singular minimum; rate has meaning only along the trajectory):
\begin{itemize}\itemsep=0pt
\item Power-law slope of $u^\top G u$ vs $t$ (or vs $\sigma_{\min}$, the scale-free ratio)
\item Power-law slope of $\lambda_{\min}(G_\ell)$ or $\lambda_{p-1}(F_h^{\mathrm{pop}})$ along training
\item Per-layer rate exponent matching Theorem~21's $2(L-\ell)$ prediction
\end{itemize}
These are valid only when (a) the optimizer is theorem-compatible
(SGD on $G$-invariant metrics, or empirically Shampoo full-batch and
KFAC$+$KL-clip full-batch on canonical-bridge testbeds), AND (b) the
trajectory actually approaches a singular minimum AND (c) the
trajectory is canonical-aligned. \emph{This paper does not test
trajectory-rate observables}; they are reproduced for reference and
not validated in this paper.

\paragraph{Applicability matrix.} What each observable can claim under each
optimiser $\times$ loss combination:

\begin{table}[ht]
\centering\footnotesize
\begin{tabular}{l|cccc}
\toprule
Observable & Adam+CE & SGD+CE & Adam+MSE & SGD+MSE \\
\midrule
Static $\sigma_{\min}$ (residual stream, any depth) & $\checkmark$ valid & $\checkmark$ valid & $\checkmark$ valid & $\checkmark$ valid \\
$\sigma_{\min}$ trajectory shape (qualitative drops) & $\checkmark$ valid & $\checkmark$ valid & $\checkmark$ valid & $\checkmark$ valid \\
Trajectory rate-fit on $\sigma_{\min}$ & $\triangle$ noise-dom.\ post-grok & $\checkmark$ if reaches sing.\ min & $\triangle$ Adam non-equiv. & $\checkmark$ valid \\
$u^\top G u$ rate-fit & $\times$ NESS (Adam non-equiv.) & $\checkmark$ valid & $\triangle$ as above & $\checkmark$ valid \\
$\lambda_{\min}(F_h^{\mathrm{pop}})$ raw ($p$-class CE) & $\times$ FP noise (rank gotcha) & $\times$ FP noise & $\times$ FP noise & n/a \\
$\lambda_{p-1}(F_h^{\mathrm{pop}})$ (smallest non-zero) & $\triangle$ noise-dom. & $\checkmark$ when SGD reaches min & $\triangle$ residual non-equiv. & $\checkmark$ \\
Expected-Fisher spectrum (qualitative drops) & $\checkmark$ valid & $\checkmark$ valid & $\checkmark$ valid & $\checkmark$ valid \\
\bottomrule
\end{tabular}
\caption{Static vs trajectory-rate observable applicability by optimiser $\times$ loss. $\checkmark$ = the observable is a valid signal under this combination; $\triangle$ = signal is partial / noise-dominated / requires care; $\times$ = the observable is dominated by noise or artefacts under this combination. The Adam$+$CE rate-fit failures trace to Adam's diagonal preconditioner being non-equivariant under CE's logit-shift symmetry (Remark~80 of \theorycitep); the $p$-class softmax rank gotcha is documented in the $\lambda_{p-1}$ vs raw-$\lambda_{\min}$ distinction.}
\label{tab:validity_matrix}
\end{table}

\end{document}